\definecolor{mydarkred}{rgb}{0.6,0,0}
\definecolor{myblue}{HTML}{268BD2}
\definecolor{mygreen}{HTML}{658354}
\def\*#1{\mathbf{#1}}
\theoremstyle{plain}
\newtheorem{theorem}{Theorem}[section]
\newtheorem{lemma}[theorem]{Lemma}
\theoremstyle{definition}
\newtheorem{assumption}[theorem]{Assumption}
\theoremstyle{remark}
\newcolumntype{P}[1]{>{\RaggedRight\hspace{0pt}}p{#1}}
\newcolumntype{X}[1]{>{\RaggedRight\hspace*{0pt}}p{#1}}
\colorlet{linecol}{black!75}
\newcommand{\highlight}[2]{\colorbox{#1!17}{$\displaystyle #2$}}
\renewcommand{\highlight}[2]{\colorbox{#1!17}{#2}}
\newcommand{\cube}[1]{%
\scalebox{#1}{
\begin{tikzpicture}
\pgfmathsetmacro{\cubex}{0.2}
\pgfmathsetmacro{\cubey}{0.2}
\pgfmathsetmacro{\cubez}{0.2}
\draw (0,0,0) -- ++(-\cubex,0,0) -- ++(0,-\cubey,0) -- ++(\cubex,0,0) -- cycle;
\draw (0,0,0) -- ++(0,0,-\cubez) -- ++(0,-\cubey,0) -- ++(0,0,\cubez) -- cycle;
\draw (0,0,0) -- ++(-\cubex,0,0) -- ++(0,0,-\cubez) -- ++(\cubex,0,0) -- cycle;
\end{tikzpicture}
}
}
\newcommand{\sphere}[2]{%
\scalebox{#1}{
\begin{tikzpicture}
  \shade[ball color = #2!40, opacity = 0.2] (0,0) circle (0.2cm);
  \draw (0,0) circle (0.2cm);
\end{tikzpicture}
}
}
\newcommand{\cylinder}[1]{%
\scalebox{#1}{
\begin{tikzpicture}
\node[cylinder, draw, shape border rotate = 90,minimum size = 0.4cm] (c) at (0,0) {};
\end{tikzpicture}
}
}
\title{A Graph-Theoretic Framework for Understanding Open-World Semi-Supervised Learning}
\author{%
  Yiyou Sun, Zhenmei Shi, Yixuan Li\\
  Department of Computer Sciences\\
   University of Wisconsin, Madison\\
  \texttt{\{sunyiyou,zhmeishi,sharonli\}@cs.wisc.edu} \\
}
\begin{document}

\maketitle

\begin{abstract}
Open-world semi-supervised learning aims at inferring both known and novel classes in unlabeled data, by harnessing prior knowledge from a labeled set with known classes. 
Despite its importance, there is a lack of theoretical foundations for this problem. This paper bridges the gap by formalizing a graph-theoretic framework tailored for the open-world setting, where the clustering can be theoretically characterized by graph factorization.
Our graph-theoretic framework illuminates practical algorithms and provides guarantees. In particular, based on our graph formulation, we apply the algorithm called Spectral Open-world Representation Learning (SORL), and show that minimizing our loss is equivalent to performing spectral decomposition on the graph.
Such equivalence allows us to derive a provable error bound on the clustering performance for both known and novel classes, and analyze rigorously when labeled data helps. Empirically, SORL can match or outperform several strong baselines on common benchmark datasets, which is appealing for practical usage while enjoying theoretical guarantees.
Our code is available at \url{https://github.com/deeplearning-wisc/sorl}.

\end{abstract}

\section{Introduction}
\label{sec:intro}

\begin{wrapfigure}{r}{0.5\textwidth}
    \centering
    \vspace{-0.5cm}
\includegraphics[width=0.99\linewidth]{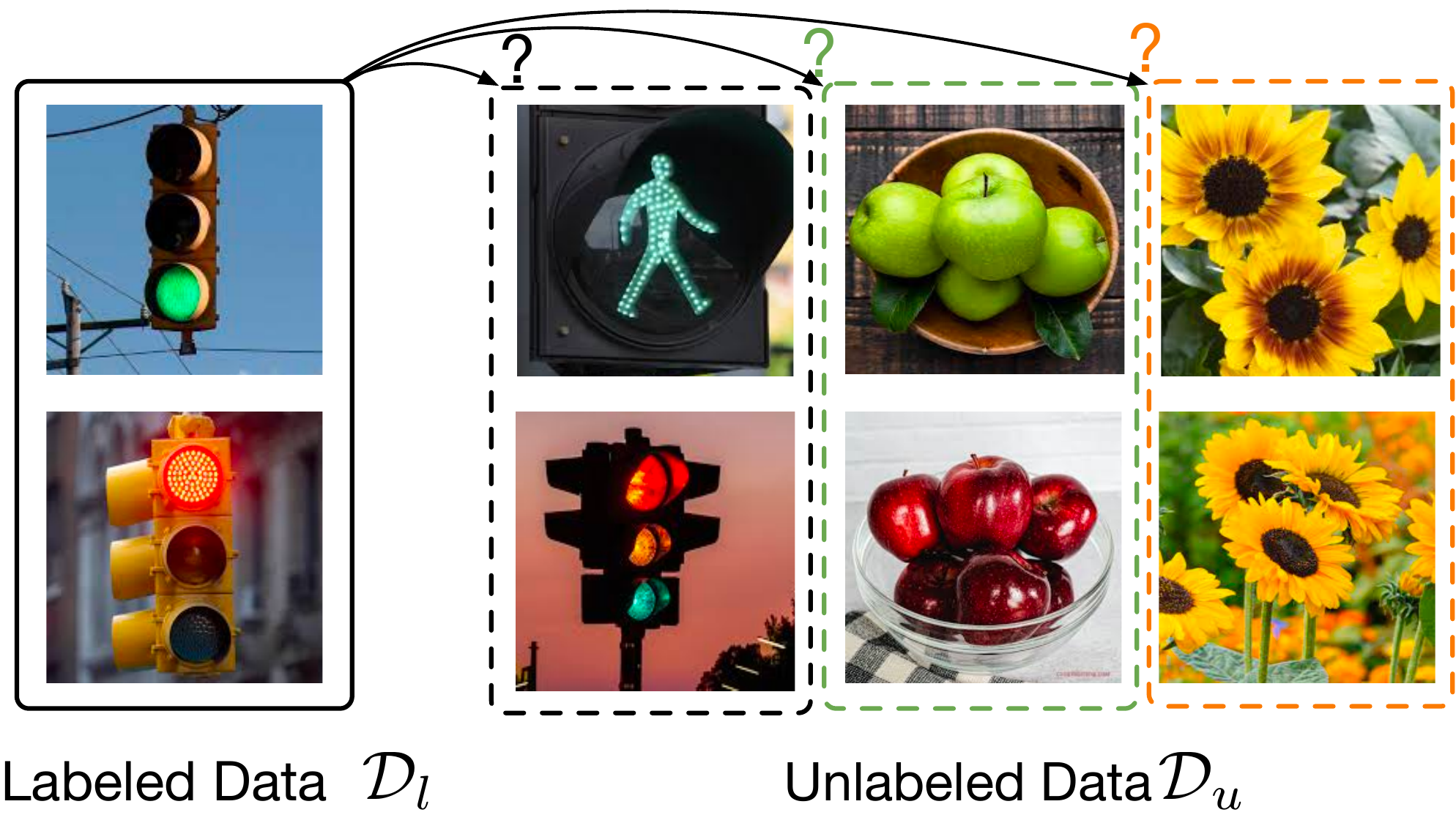}
 \vspace{-0.2cm}
\caption{{Open-world Semi-supervised Learning} aims to correctly cluster samples in the novel class and classify samples in the known classes by utilizing knowledge from the labeled data. An open question is \textit{``what
is the role of the label information in shaping representations for both known and novel classes?''} This paper aims to provide a formal understanding.
    }
    \label{fig:teaser}
    \vspace{-0.2cm}
\end{wrapfigure}
Machine learning models in the open world inevitably encounter data from both known and novel classes~\cite{du2022siren,tao2023non,du2023dream,zhang2023openood,bai2023feed}.  %
Traditional supervised machine learning models are trained on a closed set of labels, and thus can struggle to effectively cluster new semantic concepts. 
On the other hand, open-world semi-supervised learning approaches, such as those discussed in studies \cite{cao2022openworld, vaze22gcd, sun2023open},  enable models to distinguish \emph{both known and novel classes}, making them highly desirable for real-world scenarios. 
As shown in Figure~\ref{fig:teaser}, the learner has access to a labeled training dataset $\mathcal{D}_l$ (from known classes) as well as a large unlabeled dataset $\mathcal{D}_u$ (from both known and novel classes). By optimizing feature representations jointly from both labeled and unlabeled data, the learner aims to create meaningful cluster structures that correspond to either known or novel classes. With the explosive growth of data generated in various domains, open-world semi-supervised learning has emerged as a crucial problem in the field of machine learning. %

\noindent \textbf{Motivation.} Different from self-supervised learning~\cite{van2018cpc,chen2020simclr,caron2020swav,he2019moco,zbontar2021barlow,bardes2021vicreg,chen2021exploring,haochen2021provable}, open-world semi-supervised learning allows harnessing the power of the labeled data for possible knowledge sharing and transfer to unlabeled data, and from known classes to novel classes. In this joint learning process, we argue that interesting intricacies can arise---%
the labeled data provided may be beneficial or unhelpful to the resulting clusters.
We exemplify the nuances in Figure~\ref{fig:teaser}. In one scenario, when the model learns the labeled known classes (e.g., traffic light) by pushing red and green lights closer, such a relationship might transfer to help cluster green and red apples into a coherent cluster. Alternatively, when the connection between the labeled data and the novel class (e.g., flower) is weak, the benefits might be negligible. We argue---perhaps obviously---that a formalized understanding of the intricate phenomenon is needed.

\noindent \textbf{Theoretical significance.} To date, theoretical understanding of open-world semi-supervised learning is still in its infancy. In this paper, we aim to fill the critical blank by analyzing this important learning problem from a rigorous theoretical standpoint. Our exposition gravitates around the open question: \emph{what is the role of labeled data in shaping representations for both known and novel classes?} 
To answer this question, we formalize a graph-theoretic framework tailored for the open-world setting, 
where the vertices are all the data points and connected sub-graphs form classes (either known or novel). The edges are defined by a combination of supervised and self-supervised signals, which reflects the availability of both labeled and unlabeled data. Importantly, this graph facilitates the understanding of open-world semi-supervised learning from a
spectral analysis perspective, where the clustering can be theoretically characterized by graph factorization. Based on the graph-theoretic formulation, we derive a formal error bound by contrasting the clustering performance for all classes, before and after adding the labeling information. Our Theorem~\ref{th:main} reveals the sufficient condition for the improved clustering performance for a class. Under the K-means measurement, the unlabeled samples in one class can be better clustered, if their overall connection to the labeled data is stronger than their self-clusterability.

\noindent \textbf{Practical significance.} Our graph-theoretic framework also illuminates practical algorithms with provided guarantees.  In particular, based on our graph formulation, we present the algorithm called Spectral Open-world Representation Learning (SORL) adapted from \citet{sun2023nscl}. Minimizing this loss is equivalent to performing spectral decomposition on the graph (Section~\ref{sec:orl-scl}), which brings two key benefits: (1) it allows us to analyze the representation space and resulting clustering performance in closed-form; (2) practically, it enables end-to-end training in the context of deep networks. We show that our learning algorithm leads to strong empirical
performance while enjoying theoretical guarantees. 
The learning objective can be effectively optimized using stochastic gradient descent on modern neural network architecture, making it desirable for real-world applications.

\section{Problem Setup}
\label{sec:prob_setup}
We formally describe the data setup and learning goal of open-world semi-supervised learning~\cite{cao2022openworld}. 

\noindent \textbf{Data setup.} We consider the empirical training set  $\mathcal{D}_{l} \cup \mathcal{D}_{u}$ as a union of labeled and unlabeled data. 
\begin{enumerate}
\vspace{-0.2cm}
    \item The labeled set $\mathcal{D}_{l}=\left\{\Bar{x}_{i} , y_{i}\right\}_{i=1}^{n}$, with $y_i \in \mathcal{Y}_l$. The label set $\mathcal{Y}_l$ is  known. 
    \item The unlabeled set $\mathcal{D}_{u}=\left\{\Bar{x}_{i}\right\}_{i=1}^{m}$, where each sample $\Bar{x}_i$ can come from either known or novel classes\footnote{This generalizes the problem of Novel Class Discovery~\cite{Han2019dtc,hsu2017kcl,hsu2019mcl,zhao2021rankstat,zhong2021ncl,fini2021unified}, which assumes the unlabeled set is \emph{purely} from novel classes.}. Note that we do not have  access to the labels in $\mathcal{D}_{u}$. For mathematical convenience, we denote the underlying label set as $\mathcal{Y}_\text{all}$, where $\mathcal{Y}_l \subset \mathcal{Y}_\text{all}$. We denote $C = |\mathcal{Y}_\text{all}|$  the total number of classes. 

\end{enumerate} 

The data setup has practical value for real-world applications. For example, the labeled set is common in supervised learning; and the unlabeled set can be gathered for free from the model's operating environment or the internet. 
We use $\mathcal{P}_{l}$ and $\mathcal{P}$ to denote the marginal distributions of labeled data and all data in the input space, respectively. Further, we let $\mathcal{P}_{l_i}$ denote the distribution of labeled samples with class label $i \in \mathcal{Y}_l$.

\noindent \textbf{Learning target.} Under the setting, our goal is to learn distinguishable representations \emph{for both known and novel classes} simultaneously. The representation quality will be measured using classic metrics, such as K-means clustering accuracy, which we will define mathematically in Section~\ref{sec:eval}.
Unlike classic semi-supervised learning~\cite{zhu2003semi}, we place no assumption on the unlabeled data and allow its semantic space to cover both known and novel classes. The problem is also referred to as open-world representation learning~\cite{sun2023open}, which emphasizes the role of good representation in distinguishing both known and novel classes.  %

\noindent \textbf{Theoretical analysis goal.} We aim to comprehend the role of label information in shaping representations for both known and novel classes. It's important to note that our theoretical approach aims to understand the perturbation in the clustering performance by labeling existing, previously unlabeled data points within the dataset. By contrasting the clustering performance before and after labeling these instances, we uncover the underlying structure and relations that the labels may reveal. This analysis provides invaluable insights into how labeling information can be effectively leveraged to enhance the representations of both known and novel classes.

\vspace{-0.2cm}
\section{A Spectral Approach for Open-world Semi-Supervised Learning}

In this section, we formalize and tackle the open-world semi-supervised learning problem from a graph-theoretic view. Our fundamental idea is to formulate it as a clustering problem---where similar data points are grouped into the same cluster, by way of possibly utilizing helpful information from the labeled data $\mathcal{D}_l$. This clustering process can be modeled by a graph, where the vertices are all the data points and classes form connected sub-graphs.
 Specifically, utilizing our graph formulation, we present the algorithm — Spectral Open-world Representation Learning (SORL) in Section~\ref{sec:orl-scl}. The process of minimizing the corresponding loss is fundamentally analogous to executing a spectral decomposition on the graph.

\subsection{A Graph-Theoretic Formulation}
\label{sec:graph_def}
We start by formally defining the augmentation graph and adjacency matrix. 
For clarity, we use $\bar x$ to indicate the natural sample (raw inputs without augmentation). Given an $\bar x$, we use $\mathcal{T}(x|\Bar{x})$ to denote the probability of $x$ being augmented from $\Bar{x}$. For instance, when $\Bar{x}$ represents an image, $\mathcal{T}(\cdot|\Bar{x})$ can be the distribution of common augmentations \cite{chen2020simclr} such as Gaussian blur, color distortion, and random cropping. The augmentation allows us to define a general population space $\mathcal{X}$, which contains all the original images along with their augmentations. In our case, $\mathcal{X}$ is composed of augmented samples from both labeled and unlabeled data, with cardinality $|\mathcal{X}|=N$. We further denote  $\mathcal{X}_l$ as the set of  samples (along with augmentations) from the labeled data part.

 We define the graph $G(\mathcal{X}, w)$ with 
vertex set $\mathcal{X}$ and edge weights $w$. To define edge weights $w$, we decompose the graph connectivity into two components: (1) self-supervised connectivity $w^{(u)}$ by treating all points in $\mathcal{X}$ as entirely unlabeled, and (2) supervised connectivity $w^{(l)}$ by adding 
labeled information from $\mathcal{P}_l$ to the graph. We proceed to define these two cases separately.

First, by assuming all points as unlabeled, two samples ($x$, $x^+$) are considered a {\textbf{positive pair}} if:
\vspace{-0.2cm}
\begin{center}
\textbf{Unlabeled Case (u):} \textit{$x$ and $x^+$ are augmented from the same image $\Bar{x} \sim \mathcal{P}$.}
\end{center}
\vspace{-0.2cm}
For any two augmented data $x, x' \in \mathcal{X}$, $w^{(u)}_{x x'}$ denotes the marginal probability of generating the pair:
\begin{align}
\begin{split}
w^{(u)}_{x x^{\prime}} \triangleq \mathbb{E}_{\bar{x} \sim {\mathcal{P}}}  \mathcal{T}(x| \bar{x}) \mathcal{T}\left(x'| \bar{x}\right),
    \label{eq:def_wxx}
    \vspace{0.6cm}
\end{split}
\end{align}
which can be viewed as self-supervised connectivity~\cite{chen2020simclr,haochen2021provable}. However, different from self-supervised learning, we have access to the labeled information for a subset of nodes, which \emph{allows adding additional connectivity to the graph}. Accordingly, the positive pair can be defined as:
\begin{center}
    \textbf{Labeled Case (l):} \textit{$x$ and $x^+$ are augmented from two labeled samples $\Bar{x}_l$ and $\Bar{x}'_l$ \emph{with the same known class $i$}. In other words, both $\Bar{x}_l$ and $\Bar{x}'_l$ are drawn independently from $\mathcal{P}_{l_i}$}.
\end{center}

Considering both case (u) and case (l), the overall  edge weight for any pair of data $(x,x')$ is given by: 
\begin{align}
\begin{split}
w_{x x^{\prime}} = \eta_{u} w^{(u)}_{x x^{\prime}} + \eta_{l} w^{(l)}_{x x^{\prime}}, \text{where }
w^{(l)}_{x x^{\prime}} \triangleq \sum_{i \in \mathcal{Y}_l}\mathbb{E}_{\bar{x}_{l} \sim {\mathcal{P}_{l_i}}} \mathbb{E}_{\bar{x}'_{l} \sim {\mathcal{P}_{l_i}}} \mathcal{T}(x | \bar{x}_{l}) \mathcal{T}\left(x' | \bar{x}'_{l}\right),
    \label{eq:def_wxx_b}
\end{split}
\end{align}
and $\eta_{u},\eta_{l}$ modulates the importance between the two cases. 
The magnitude of $w_{xx'}$ indicates the ``positiveness'' or similarity between  $x$ and $x'$. 
We then use 
$w_x = \sum_{x' \in \mathcal{X}}w_{xx'}$
to denote the total edge weights connected to a vertex $x$. 

\noindent \textbf{Remark: A graph perturbation view.}  With the graph connectivity defined above, we can now define the adjacency matrix $A \in \mathbb{R}^{N \times N}$ with entries $A_{x x^\prime}=w_{x x^{\prime}}$. Importantly, the adjacency matrix can be decomposed into two parts: 
\vspace{0.1cm}
\begin{equation}
    A = \eta_{u} A^{(u)} +  \tikzmarknode{Al}{\highlight{myblue}{$\eta_{l} A^{(l)}$}},
    \label{eq:adj_orl}
\end{equation}
\begin{tikzpicture}[overlay,remember picture,>=stealth,nodes={align=left,inner ysep=1pt},<-]
    \path (Al) ++ (0,0.7em) node[anchor=south west,color=blue!67] (Alt){\textit{ Perturbation by adding labels}};
    \draw [color=blue!87](Al.north) |- ([xshift=-0.3ex,color=blue]Alt.north east);
\end{tikzpicture}
which can be 
 regarded as the self-supervised adjacency matrix $A^{(u)}$ perturbed by additional labeling information encoded in $A^{(l)}$. This graph perturbation view serves as a critical foundation for our theoretical analysis of the clustering performance in Section~\ref{sec:theory}. 
As a standard technique in graph theory~\cite{chung1997spectral}, we use the \textit{normalized adjacency matrix}
of $G(\mathcal{X}, w)$:
\begin{equation}
    \tilde{A}\triangleq D^{-\frac{1}{2}} A D^{-\frac{1}{2}},
    \label{eq:def}
\end{equation}
where ${D} \in \mathbb{R}^{N \times N}$ is a diagonal matrix with ${D}^{}_{x x}=w^{}_x$. The normalization balances the degree of each node,  reducing the influence of vertices with very large degrees. The normalized adjacency matrix defines the probability of $x$ and $x^{\prime}$  being considered as the positive pair from the perspective of augmentation, which helps derive the learning loss as we show next.

\subsection{SORL: Spectral Open-World Representation Learning}
\label{sec:orl-scl}
We present an algorithm called Spectral Open-world Representation Learning (SORL), which can be derived from a spectral decomposition of $\tilde{A}$. The algorithm has both practical and theoretical values. First, it enables efficient end-to-end training in the context of modern neural networks. More importantly, it allows drawing a theoretical equivalence between learned representations and the top-$k$ singular vectors of $\tilde{A}$. Such equivalence facilitates theoretical understanding of the clustering structure encoded in $\tilde{A}$. 
Specifically, we consider low-rank matrix approximation:
\begin{equation}
    \min _{F \in \mathbb{R}^{N \times k}} \mathcal{L}_{\mathrm{mf}}(F, A)\triangleq\left\|\tilde{A}-F F^{\top}\right\|_F^2
    \label{eq:lmf}
\end{equation}
According to the Eckart–Young–Mirsky theorem~\cite{eckart1936approximation}, the minimizer of this loss function is $F_k\in \mathbb{R}^{N \times k}$ such that $F_k F_k^{\top}$ contains the top-$k$ components of $\tilde{A}$'s SVD decomposition. 

Now, if we view each row $\*f_x^{\top}$ of $F$ as a scaled version of learned feature embedding  $f:\mathcal{X}\mapsto \mathbb{R}^k$, the $\mathcal{L}_{\mathrm{mf}}(F, A)$ can be written as a form of the contrastive learning objective. We formalize this connection in Theorem~\ref{th:orl-scl}  below\footnote{Theorem~\ref{th:orl-scl} is primarily adapted from Theorem 4.1 in ~\cite{sun2023nscl}. However, there is a distinction in the data setting, as Sun et al.~\cite{sun2023nscl} do not consider known class samples within the unlabeled dataset.}.

\begin{theorem}
\label{th:orl-scl} 
We define $\*f_x = \sqrt{w_x}f(x)$ for some function $f$. Recall $\eta_{u},\eta_{l}$ are coefficients defined in Eq.~\eqref{eq:def_wxx_b}. Then minimizing the loss function $\mathcal{L}_{\mathrm{mf}}(F, A)$ is equivalent to minimizing the following loss function for $f$, which we term \textbf{Spectral Open-world Representation Learning (SORL)}:
\begin{align}
\begin{split}
    \mathcal{L}_\text{SORL}(f) &\triangleq - 2\eta_{l} \mathcal{L}_1(f) 
- 2\eta_{u}  \mathcal{L}_2(f)  + \eta_{l}^2 \mathcal{L}_3(f) + 2\eta_{l} \eta_{u} \mathcal{L}_4(f) +  
\eta_{u}^2 \mathcal{L}_5(f),
\label{eq:def_SORL}
\end{split}
\end{align} where
\begin{align*}
    \mathcal{L}_1(f) &= \sum_{i \in \mathcal{Y}_l}\underset{\substack{\bar{x}_{l} \sim \mathcal{P}_{{l_i}}, \bar{x}'_{l} \sim \mathcal{P}_{{l_i}},\\x \sim \mathcal{T}(\cdot|\bar{x}_{l}), x^{+} \sim \mathcal{T}(\cdot|\bar{x}'_l)}}{\mathbb{E}}\left[f(x)^{\top} {f}\left(x^{+}\right)\right] , 
    \mathcal{L}_2(f) = \underset{\substack{\bar{x}_{u} \sim \mathcal{P},\\x \sim \mathcal{T}(\cdot|\bar{x}_{u}), x^{+} \sim \mathcal{T}(\cdot|\bar{x}_u)}}{\mathbb{E}}
\left[f(x)^{\top} {f}\left(x^{+}\right)\right], \\
    \mathcal{L}_3(f) &= \sum_{i, j\in \mathcal{Y}_l}
    \underset{\substack{\bar{x}_l \sim \mathcal{P}_{{l_i}}, \bar{x}'_l \sim \mathcal{P}_{{l_{j}}},\\x \sim \mathcal{T}(\cdot|\bar{x}_l), x^{-} \sim \mathcal{T}(\cdot|\bar{x}'_l)}}{\mathbb{E}}
\left[\left(f(x)^{\top} {f}\left(x^{-}\right)\right)^2\right], \\
    \mathcal{L}_4(f) &= \sum_{i \in \mathcal{Y}_l}\underset{\substack{\bar{x}_l \sim \mathcal{P}_{{l_i}}, \bar{x}_u \sim \mathcal{P},\\x \sim \mathcal{T}(\cdot|\bar{x}_l), x^{-} \sim \mathcal{T}(\cdot|\bar{x}_u)}}{\mathbb{E}}
\left[\left(f(x)^{\top} {f}\left(x^{-}\right)\right)^2\right], 
    \mathcal{L}_5(f) = \underset{\substack{\bar{x}_u \sim \mathcal{P}, \bar{x}'_u \sim \mathcal{P},\\x \sim \mathcal{T}(\cdot|\bar{x}_u), x^{-} \sim \mathcal{T}(\cdot|\bar{x}'_u)}}{\mathbb{E}}
\left[\left(f(x)^{\top} {f}\left(x^{-}\right)\right)^2\right].
\end{align*}
\end{theorem}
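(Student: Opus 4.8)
The plan is to expand the matrix-factorization loss $\mathcal{L}_{\mathrm{mf}}(F,A) = \|\tilde A - FF^\top\|_F^2$ entrywise, substitute the definition of the normalized adjacency matrix $\tilde A = D^{-1/2} A D^{-1/2}$ together with the reparametrization $\*f_x = \sqrt{w_x}\, f(x)$, and then interpret each resulting sum over $\mathcal{X}\times\mathcal{X}$ as an expectation over augmentations drawn from the relevant distributions. First I would write $\|\tilde A - FF^\top\|_F^2 = \|\tilde A\|_F^2 - 2\,\mathrm{tr}(\tilde A FF^\top) + \|FF^\top\|_F^2$; since $\|\tilde A\|_F^2$ is independent of $F$, it can be dropped, so minimizing $\mathcal{L}_{\mathrm{mf}}$ is equivalent to minimizing $-2\,\mathrm{tr}(\tilde A FF^\top) + \|FF^\top\|_F^2$. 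The cross term expands as $\mathrm{tr}(\tilde A FF^\top) = \sum_{x,x'} \tilde A_{xx'} \*f_x^\top \*f_{x'} = \sum_{x,x'} \frac{A_{xx'}}{\sqrt{w_x w_{x'}}} \sqrt{w_x w_{x'}}\, f(x)^\top f(x') = \sum_{x,x'} w_{xx'}\, f(x)^\top f(x')$, and the quadratic term expands as $\|FF^\top\|_F^2 = \sum_{x,x'} (\*f_x^\top \*f_{x'})^2 = \sum_{x,x'} w_x w_{x'} \big(f(x)^\top f(x')\big)^2$.

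Next I would plug in the decomposition $w_{xx'} = \eta_u w^{(u)}_{xx'} + \eta_l w^{(l)}_{xx'}$ from Eq.~\eqref{eq:def_wxx_b}. The linear term splits into $-2\eta_l \sum_{x,x'} w^{(l)}_{xx'} f(x)^\top f(x') - 2\eta_u \sum_{x,x'} w^{(u)}_{xx'} f(x)^\top f(x')$; recognizing $w^{(u)}_{xx'}$ as the marginal probability $\mathbb{E}_{\bar x\sim\mathcal{P}}\mathcal{T}(x|\bar x)\mathcal{T}(x'|\bar x)$ turns the second sum into $\mathcal{L}_2(f)$, and recognizing $w^{(l)}_{xx'} = \sum_{i\in\mathcal{Y}_l}\mathbb{E}_{\bar x_l,\bar x'_l\sim\mathcal{P}_{l_i}}\mathcal{T}(x|\bar x_l)\mathcal{T}(x'|\bar x'_l)$ turns the first into $\mathcal{L}_1(f)$. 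For the quadratic term I would expand $w_x w_{x'} = (\eta_u w^{(u)}_x + \eta_l w^{(l)}_x)(\eta_u w^{(u)}_{x'} + \eta_l w^{(l)}_{x'})$ where $w^{(u)}_x = \sum_{x'} w^{(u)}_{xx'}$ and $w^{(l)}_x = \sum_{x'} w^{(l)}_{xx'}$; this produces four cross terms with coefficients $\eta_l^2$, $\eta_l\eta_u$ (twice, which combine), and $\eta_u^2$. The key observation is that $\sum_x w^{(u)}_x \delta_x(\cdot)$ corresponds to sampling $\bar x_u\sim\mathcal{P}$ and then $x\sim\mathcal{T}(\cdot|\bar x_u)$ (after checking the marginalization $\sum_{x'} w^{(u)}_{xx'} = \mathbb{E}_{\bar x\sim\mathcal{P}}\mathcal{T}(x|\bar x)$, using $\sum_{x'}\mathcal{T}(x'|\bar x)=1$), and likewise $\sum_x w^{(l)}_x \delta_x(\cdot)$ corresponds to the labeled sampling scheme over $\mathcal{P}_{l_i}$. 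Matching each of the three distinct products against the stated definitions yields $\eta_l^2 \mathcal{L}_3(f)$, $2\eta_l\eta_u \mathcal{L}_4(f)$, and $\eta_u^2 \mathcal{L}_5(f)$, completing the identification with Eq.~\eqref{eq:def_SORL}.

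The main obstacle I anticipate is bookkeeping rather than any deep idea: one must be careful that the population space $\mathcal{X}$ is being treated as a finite set so that all the sums are well-defined, and that the normalization by $\sqrt{w_x w_{x'}}$ cancels exactly against the $\sqrt{w_x}$ factors in the reparametrization — this is precisely why the degree-weighted embedding $\*f_x = \sqrt{w_x} f(x)$ is the right change of variables. A secondary subtlety is verifying the marginal identities $\sum_{x'} w^{(u)}_{xx'} = \mathbb{E}_{\bar x\sim\mathcal{P}}\mathcal{T}(x|\bar x)$ and the analogous one for $w^{(l)}$, so that the degree terms $w^{(u)}_x, w^{(l)}_x$ in the quadratic expansion collapse to single-augmentation expectations; these follow from $\sum_{x'\in\mathcal{X}}\mathcal{T}(x'|\bar x) = 1$ for every natural sample $\bar x$, but should be stated explicitly. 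Since this theorem is adapted from Theorem 4.1 of \citet{sun2023nscl}, the only genuinely new accounting is that the unlabeled distribution $\mathcal{P}$ here contains both known and novel classes, which does not affect the algebra but does change the interpretation of $\mathcal{L}_2$ and $\mathcal{L}_5$.
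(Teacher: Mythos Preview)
Your proposal is correct and follows essentially the same approach as the paper's proof: expand $\|\tilde A - FF^\top\|_F^2$ entrywise, use the reparametrization $\*f_x=\sqrt{w_x}f(x)$ to cancel the $\sqrt{w_xw_{x'}}$ normalization, then substitute the decompositions of $w_{xx'}$ and $w_x$ (using the marginalization $\sum_{x'}\mathcal T(x'|\bar x)=1$) to identify the five terms. The only cosmetic difference is that the paper writes everything as explicit sums over $\mathcal X$ rather than via traces, but the algebra is identical.
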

\begin{proof} (\textit{sketch})
We can expand $\mathcal{L}_{\mathrm{mf}}(F, A)$ and obtain
\begin{equation*}
\resizebox{1.0\linewidth}{!}{$
\mathcal{L}_{\mathrm{mf}}(F, A) =\sum_{x, x^{\prime} \in \mathcal{X}}\left(\frac{w_{x x^{\prime}}}{\sqrt{w_x w_{x^{\prime}}}}-\*f_x^{\top} \*f_{x^{\prime}}\right)^2 = const + 
\sum_{x, x^{\prime} \in \mathcal{X}}\left(-2 w_{x x^{\prime}} f(x)^{\top} {f}\left(x^{\prime}\right)+w_x w_{x^{\prime}}\left(f(x)^{\top}{f}\left(x^{\prime}\right)\right)^2\right)$}
\end{equation*} 
The form of $\mathcal{L}_\text{SORL}(f)$ is derived from plugging $w_{xx'}$ (defined in Eq.~\eqref{eq:def_wxx}) and $w_x$. 
Full proof is in Appendix~\ref{sec:proof-SORL}. 
\end{proof}

\textbf{Interpretation of $\mathcal{L}_\text{SORL}(f)$.} 
At a high level, $\mathcal{L}_1$ and $\mathcal{L}_2$ push the embeddings of \textbf{positive pairs} to be closer while $\mathcal{L}_3$, $\mathcal{L}_4$ 
 and $\mathcal{L}_5$ pull away the embeddings of \textbf{negative pairs}. In particular, $\mathcal{L}_1$ samples two random augmentation views of two images from labeled data with the \textbf{same} class label, and $\mathcal{L}_2$ samples two views from the same image in $\mathcal{X}$. For negative pairs, $\mathcal{L}_3$ uses two augmentation views from two samples in $\mathcal{X}_{l}$ with \textbf{any} class label. $\mathcal{L}_4$ uses two views of one sample in $\mathcal{X}_{l}$ and another one in $\mathcal{X}$. $\mathcal{L}_5$ uses two views from two random samples in $\mathcal{X}$. This training objective, though bearing similarities to NSCL~\cite{sun2023nscl}, operates within a
distinct problem domain. Accordingly, we derive novel theoretical analysis uniquely
tailored to our problem setting, which we present next. 

\vspace{-0.2cm}
 
\section{Theoretical Analysis}
\vspace{-0.2cm}
 
\label{sec:theory}
So far we have presented a spectral approach for open-world semi-supervised learning based on graph factorization. Under this framework, we now formally  analyze:
\emph{\textbf{how does the labeling information shape the representations for known and novel classes?}}

\subsection{An Illustrative Example}
\label{sec:theory_toy}
We consider a toy example that helps illustrate the core idea of our theoretical findings. Specifically, the example aims to distinguish 3D objects with different shapes, as shown in Figure~\ref{fig:toy_setting}. These images are generated by a 3D rendering software~\cite{johnson2017clevr} with user-defined properties including colors, shape, size, position, etc. We are interested in contrasting the representations (in the form of singular vectors), when  the label information is either incorporated in training or not. 

\noindent \textbf{Data design.} Suppose the training samples come from three types, $\mathcal{X}_{\cube{1}}$, $\mathcal{X}_{\sphere{0.7}{gray}}$, $\mathcal{X}_{\cylinder{0.6}}$. Let $\mathcal{X}_{\cube{1}}$ be the sample space with \textbf{known} class, and $\mathcal{X}_{\sphere{0.7}{gray}}, \mathcal{X}_{\cylinder{0.6}}$ be the sample space with \textbf{novel} classes. Further, the two novel classes are constructed to have different relationships with the known class. Specifically, 
$\mathcal{X}_{\sphere{0.7}{gray}}$ shares some similarity with $\mathcal{X}_{\cube{1}}$ in color (red and blue); whereas another novel class $\mathcal{X}_{\cylinder{0.6}}$ has no obvious similarity with the known class.  
Without any labeling information, it can be difficult to distinguish $\mathcal{X}_{\sphere{0.7}{gray}}$ from $\mathcal{X}_{\cube{1}}$ since samples share common colors. We aim to verify the hypothesis that: \textit{adding labeling information to $\mathcal{X}_{\cube{1}}$ (i.e., connecting $\textcolor{blue}{\cube{0.7}}$ and $\textcolor{red}{\cube{0.7}}$) has a larger (beneficial) impact to cluster  $\mathcal{X}_{\sphere{0.7}{gray}}$ than $\mathcal{X}_{\cylinder{0.6}}$}.

\begin{figure*}[b]
    \centering
\includegraphics[width=0.98\linewidth]{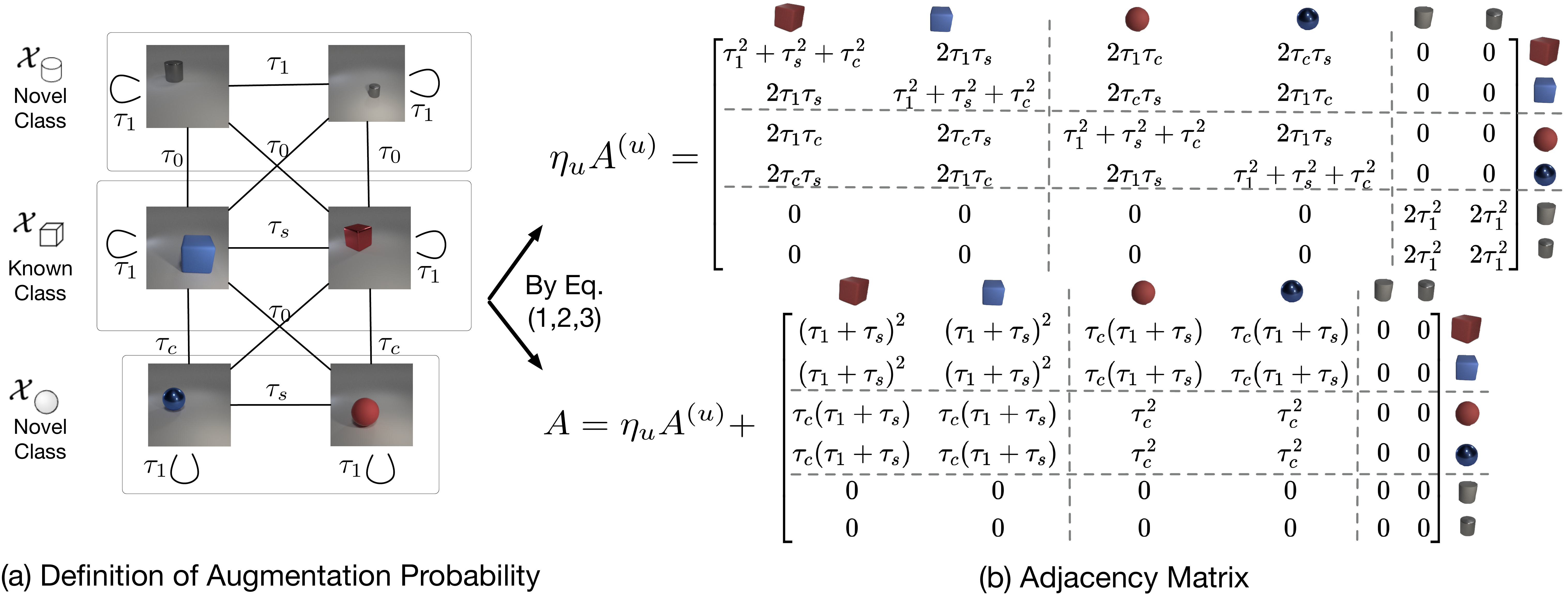}
    \caption{An illustrative example for theoretical analysis. We consider a 6-node graph with one known class (cube) and two novel classes (sphere, cylinder). (a) The augmentation probabilities between nodes are defined by their color and shape in Eq.~\eqref{eq:def_edge}. (b) The adjacency matrix can then be calculated by Equations in Sec.~\ref{sec:graph_def} where we let $\tau_0=0, \eta_u=6, \eta_l=4$. The calculation details are in Appendix~\ref{sec:sup_toy}. The magnitude order follows $\tau_1 \gg \tau_{c} > \tau_{s} > 0$. }
    \vspace{-0.2cm}
    \label{fig:toy_setting}
\end{figure*}

\textbf{Augmentation graph.} Based on the data design, we formally define the augmentation graph, which encodes the probability of augmenting a source image $\bar{x}$ to the augmented view $x$:
\begin{align}
    \mathcal{T}\left(x \mid \bar{x} \right)=
    \left\{\begin{array}{ll}
    \tau_{1} & \text { if }  \text{color}(x) = \text{color}(\bar{x}), \text{shape}(x) = \text{shape}(\bar{x}); \\
    \tau_{c} & \text { if }  \text{color}(x) = \text{color}(\bar{x}), \text{shape}(x) \neq \text{shape}(\bar{x}); \\
    \tau_{s} & \text { if }  \text{color}(x) \neq \text{color}(\bar{x}), \text{shape}(x) = \text{shape}(\bar{x}); \\
    \tau_{0} & \text { if }  \text{color}(x) \neq \text{color}(\bar{x}), \text{shape}(x) \neq \text{shape}(\bar{x}). \\
    \end{array}\right.
    \label{eq:def_edge}
\end{align}
With Eq.~\eqref{eq:def_edge} and the definition of the adjacency matrix in Section~\ref{sec:graph_def}, we can derive the analytic form of $A^{(u)}$  and $A$, as shown in Figure~\ref{fig:toy_setting}(b). We refer readers to Appendix~\ref{sec:sup_toy} for the detailed derivation. The two matrices allow us to contrast the connectivity changes in the graph, before and after the labeling information is added.%
\noindent \textbf{Insights.} We are primarily interested in analyzing the
difference of the representation space derived from $A^{(u)}$ and $A$. We visualize the top-3 eigenvectors\footnote{When $\tau_1 \gg \tau_{c} > \tau_{s} > 0$, the top-3 eigenvectors 
are almost equivalent to the feature embedding.}
of the normalized adjacency matrix $\tilde{A}^{(u)}$ and $\tilde{A}$ in Figure~\ref{fig:toy_result}(a), where the results are based on the magnitude order  $\tau_1 \gg \tau_{c} > \tau_{s} > 0$. Our key {takeaway} is: \emph{adding labeling information to known class $\mathcal{X}_{\cube{1}}$ helps better distinguish the known class itself and the novel class $\mathcal{X}_{\sphere{0.7}{gray}}$, which has a stronger connection/similarity with $\mathcal{X}_{\cube{1}}$}. 

\noindent \textbf{Qualitative analysis.} Our theoretical insight can also be verified empirically, by learning representations on over 10,000 samples using the loss defined in Section~\ref{sec:orl-scl}. Due to the space limitation, we include experimental details in Appendix~\ref{sec:sup_exp_vis}.
In Figure~\ref{fig:toy_result}(b), we visualize the learned features through UMAP~\cite{umap}. Indeed, we observe that samples become more concentrated around different shape classes after adding labeling information to the cube class.

\begin{figure*}[htb]
    \centering
\includegraphics[width=0.98\linewidth]{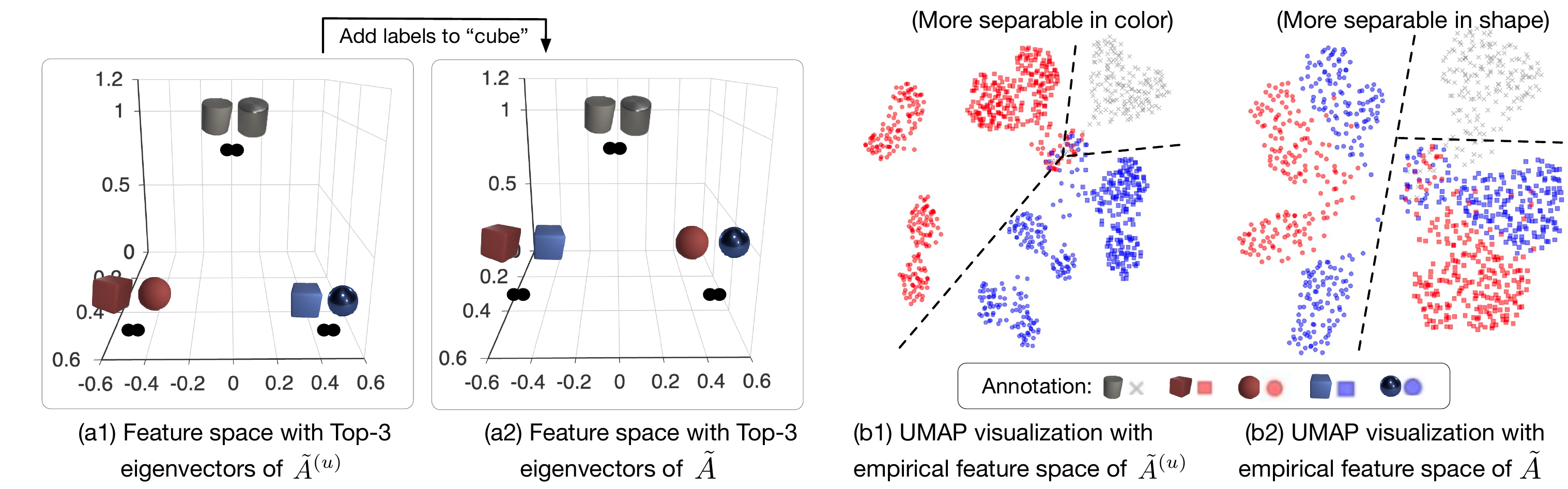}
    \caption{\small Visualization of representation space for toy example. (a) Theoretically contrasting the feature formed by top-3 eigenvectors of $\tilde{A}^{(u)}$ and $\tilde{A}$ respectively. (b) UMAP visualization of the features learned without (left) and with labeled information (right).  Details are in Appendix~\ref{sec:sup_toy} (eigenvector calculation) and Appendix~\ref{sec:sup_exp_vis} (visualization setting). }
    \vspace{-0.2cm}
    \label{fig:toy_result}
\end{figure*}

%
\subsection{Main Theory}
\label{sec:th_main}
The toy example offers an important insight that the added labeled information is more helpful for the class with a stronger connection to the known class.
In this section, we
 formalize this insight by extending the toy example to
a more general setting. As a roadmap, we derive the result through three steps: \textbf{(1)} derive the closed-form solution of the learned representations; \textbf{(2)} define the clustering performance by the K-means measure; \textbf{(3)} contrast the resulting clustering performance before and after adding labels. We start by deriving the representations. 

\subsubsection{Learned Representations in Analytic Form}
\label{sec:feature}
\noindent\textbf{Representation without labels.} 
To obtain the representations, one can train the neural network $f:\mathcal{X}\mapsto \mathbb{R}^k$ using the spectral loss defined in Equation~\ref{eq:def_SORL}. We assume that the optimizer is capable to obtain the representation $Z^{(u)} \in \mathbb{R}^{N\times k}$
that minimizes the loss, where each row vector $\*z_i = f(x_i)^{\top}$. Recall that Theorem~\ref{th:orl-scl} allows us to derive a closed-form solution for the learned feature space by the spectral decomposition of the adjacency matrix, which is $\tilde{A}^{(u)}$ in the case without labeling information. Specifically, we have $F^{(u)}_k = \sqrt{D^{(u)}}Z^{(u)}$, where $F^{(u)}_k F^{(u)\top}_k$ contains the top-$k$ components of $\tilde{A}^{(u)}$'s SVD decomposition and $D^{(u)}$ is the diagonal matrix defined based on the row sum of $A^{(u)}$. We further define the top-$k$ singular vectors of $\tilde{A}^{(u)}$  as $V_k^{(u)} \in \mathbb{R}^{N\times k}$, so we have $F^{(u)}_k = V_k^{(u)} \sqrt{\Sigma_k^{(u)}}$, where $\Sigma_k^{(u)}$ is a diagonal matrix of the top-$k$ singular values of $\tilde{A}^{(u)}$.
{By equalizing the two forms of $F^{(u)}_k$}, the closed-formed solution of the learned feature space is given by $Z^{(u)} = [D^{(u)}]^{-\frac{1}{2}} V_k^{(u)} \sqrt{\Sigma_k^{(u)}}$. %

\noindent \textbf{Representation perturbation by adding labels.} 
We now analyze how the representation is ``perturbed'' as a result of adding label information. We consider $|\mathcal{Y}_l| = 1$\footnote{To understand the perturbation by adding labels from more than one class, one can take the summation of the perturbation by each class.} to facilitate a better understanding of our key insight. We can rewrite $A$ in Eq.~\ref{eq:adj_orl} as:
\begin{equation*}
    A(\delta) \triangleq \eta_u A^{(u)} + \delta \mathfrak{l} \mathfrak{l}^{\top},
\end{equation*}
where we replace $\eta_l$ to $\delta$ to be more apparent in representing the perturbation and define $\mathfrak{l} \in \mathbb{R}^{N}, (\mathfrak{l})_x = \mathbb{E}_{\bar{x}_{l} \sim {\mathcal{P}_{l_1}}} \mathcal{T}(x | \bar{x}_{l})$. Note that $\mathfrak{l}$ can be interpreted as the vector of ``\textit{the semantic connection for sample $x$ to the labeled data}''. One can easily extend to $r$ classes by letting $\mathfrak{l} \in \mathbb{R}^{N \times r}$.

Here we treat the adjacency matrix as a function of the perturbation. In a similar manner as above, we can derive the normalized adjacency matrix $\Tilde{A}(\delta)$ and the feature representation $Z(\delta)$ in closed form. The details are included in Appendix~\ref{sec:sup_proof_main}.

\subsubsection{Evaluation Target}
\label{sec:eval}
With the learned representations, we can evaluate their quality by the clustering performance.
Our theoretical analysis of the clustering performance can well connect to empirical evaluation strategy in the literature~\cite{yang2022divide} using $K$-means clustering accuracy/error. 
Formally, we define the ground-truth partition of clusters by $\Pi = \{\pi_{1}, \pi_{2}, ..., \pi_{C}\}$, where $\pi_{i}$ is the set of samples' indices with underlying label $y_i$ and $C$ is the total number of classes (including both known and novel). We further let $\boldsymbol{\mu}_\pi = \mathbb{E}_{i \in \pi}\*z_i$ be the center of features in $\pi$, and the average of all feature vectors be $\boldsymbol{\mu}_\Pi = \mathbb{E}_{j \in [N]}\*z_j$.

The clustering performance of K-means depends on two measurements: \textbf{Intra-class} measure and \textbf{Inter-class} measure. Specifically, we let the intra-class measure be the average Euclidean distance from the samples' feature to the corresponding cluster center and we measure the inter-class separation as the distances between cluster centers: 
\begin{equation}
    \mathcal{M}_\text{intra-class}(\Pi, Z) \triangleq \sum_{\pi\in \Pi}  \sum_{i \in \pi}\left\|\*z_i-\boldsymbol{\mu}_\pi \right\|^2, \mathcal{M}_\text{inter-class}(\Pi, Z) \triangleq  \sum_{\pi\in \Pi}  |\pi| \left\|\boldsymbol{\mu}_\pi-\boldsymbol{\mu}_\Pi\right\|^2.
\label{eq:def_align_sep_measure}
\end{equation}
Strong clustering results translate into low $\mathcal{M}_\text{intra-class}$ and high $\mathcal{M}_\text{inter-class}$. Thus we define the \textbf{K-means measure} as:
\begin{equation}
    \mathcal{M}_{kms}(\Pi, Z) \triangleq \mathcal{M}_\text{intra-class}(\Pi, Z) / \mathcal{M}_\text{inter-class}(\Pi, Z). 
\label{eq:def_kms_measure}
\end{equation}

We also formally show in Theorem~\ref{th:cluster_err_bound}  that the K-means clustering error\footnote{ {It is theoretically inconvenient to directly analyze the clustering error since it is a non-differentiable target.} } is asymptotically equivalent to the K-means measure we defined above. 

\begin{theorem} (\textbf{Relationship between the K-means measure and K-means error}.)
We define the $\xi_{\pi \rightarrow \pi'}$ as the index set of samples that is from class division $\pi$ however is closer to $\boldsymbol{\mu}_{\pi'}$ than $\boldsymbol{\mu}_{\pi}$. 
In other word, $\xi_{\pi \rightarrow \pi'} = \{i: i \in \pi, \|\*z_i - \boldsymbol{\mu}_{\pi}\|_2 \geq \|\*z_i - \boldsymbol{\mu}_{\pi'}\|_2\}$. Assuming $|\xi_{\pi \rightarrow \pi'}| > 0$, we define below the  clustering error ratio from $\pi$ to $\pi'$ as $\mathcal{E}_{\pi \rightarrow \pi'}$ and the overall cluster error ratio $\mathcal{E}_{\Pi, Z}$ as the Harmonic Mean of $\mathcal{E}_{\pi \rightarrow \pi'}$ among all class pairs:
$$\mathcal{E}_{\Pi, Z} = C(C-1)/\left(\sum_{\substack{\pi \neq \pi'\\ \pi, \pi' \in \Pi}}\frac{1}{\mathcal{E}_{\pi \rightarrow \pi'}}\right), \text{where }\mathcal{E}_{\pi \rightarrow \pi'} = \frac{|\xi_{\pi \rightarrow \pi'}|}{|\pi'| + |\pi|}.$$
The K-means measure $\mathcal{M}_{kms}(\Pi, Z)$ has the same order of the Harmonic Mean of the cluster error ratio between all cluster pairs with proof in Appendix~\ref{sec:sup_cls_err}.  
     $$\mathcal{E}_{\Pi, Z} = O(\mathcal{M}_{kms}(\Pi, Z)).$$
\label{th:cluster_err_bound}
\end{theorem}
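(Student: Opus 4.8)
The plan is to establish the claimed order relationship $\mathcal{E}_{\Pi, Z} = O(\mathcal{M}_{kms}(\Pi, Z))$ by connecting the harmonic mean of the per-pair error ratios to the ratio of intra-class to inter-class measures. The central observation is that each $\xi_{\pi \to \pi'}$ consists of points $\*z_i$ with $i \in \pi$ that are at least as far from their true center $\boldsymbol{\mu}_\pi$ as from the competing center $\boldsymbol{\mu}_{\pi'}$; geometrically such a point lies on the far side of the bisecting hyperplane between $\boldsymbol{\mu}_\pi$ and $\boldsymbol{\mu}_{\pi'}$, so $\|\*z_i - \boldsymbol{\mu}_\pi\| \geq \tfrac{1}{2}\|\boldsymbol{\mu}_\pi - \boldsymbol{\mu}_{\pi'}\|$. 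First I would use this to lower-bound the intra-class contribution of $\pi$: summing $\|\*z_i - \boldsymbol{\mu}_\pi\|^2$ over just the misclassified indices $i \in \xi_{\pi \to \pi'}$ gives at least $|\xi_{\pi \to \pi'}| \cdot \tfrac{1}{4}\|\boldsymbol{\mu}_\pi - \boldsymbol{\mu}_{\pi'}\|^2$, hence $\mathcal{M}_\text{intra-class}(\Pi, Z) \geq \tfrac{1}{4}|\xi_{\pi \to \pi'}|\,\|\boldsymbol{\mu}_\pi - \boldsymbol{\mu}_{\pi'}\|^2$ for every ordered pair.

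Next I would control the inter-class measure from above. Since $\mathcal{M}_\text{inter-class}(\Pi, Z) = \sum_{\pi} |\pi|\,\|\boldsymbol{\mu}_\pi - \boldsymbol{\mu}_\Pi\|^2$ and $\boldsymbol{\mu}_\Pi$ is the (weighted) average of the $\boldsymbol{\mu}_\pi$, a standard identity rewrites this as a weighted sum of pairwise squared center distances: $\mathcal{M}_\text{inter-class}(\Pi, Z) = \tfrac{1}{N}\sum_{\pi, \pi'} |\pi|\,|\pi'|\,\|\boldsymbol{\mu}_\pi - \boldsymbol{\mu}_{\pi'}\|^2 \lesssim \sum_{\pi \neq \pi'}\|\boldsymbol{\mu}_\pi - \boldsymbol{\mu}_{\pi'}\|^2$ up to factors depending on the cluster sizes. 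Combining the two bounds, for each pair we get $\mathcal{E}_{\pi \to \pi'} = \tfrac{|\xi_{\pi \to \pi'}|}{|\pi| + |\pi'|} \lesssim \tfrac{\mathcal{M}_\text{intra-class}(\Pi, Z)}{\|\boldsymbol{\mu}_\pi - \boldsymbol{\mu}_{\pi'}\|^2 (|\pi| + |\pi'|)}$, and I then need to aggregate these via the harmonic mean. The harmonic mean of the $\mathcal{E}_{\pi \to \pi'}$ is dominated (up to a constant in the number of classes $C$) by whichever strategy makes the sum $\sum_{\pi \neq \pi'} 1/\mathcal{E}_{\pi \to \pi'}$ large; inverting the per-pair bound, $\sum_{\pi \neq \pi'} 1/\mathcal{E}_{\pi \to \pi'} \gtrsim \tfrac{1}{\mathcal{M}_\text{intra-class}}\sum_{\pi \neq \pi'}\|\boldsymbol{\mu}_\pi - \boldsymbol{\mu}_{\pi'}\|^2 (|\pi| + |\pi'|) \gtrsim \tfrac{\mathcal{M}_\text{inter-class}}{\mathcal{M}_\text{intra-class}}$, which gives $\mathcal{E}_{\Pi, Z} = C(C-1)/\sum 1/\mathcal{E}_{\pi\to\pi'} = O(\mathcal{M}_\text{intra-class}/\mathcal{M}_\text{inter-class}) = O(\mathcal{M}_{kms}(\Pi, Z))$, treating $C$ and the cluster-size ratios as constants absorbed into the $O(\cdot)$.

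The main obstacle I anticipate is the bookkeeping around cluster sizes and the number of classes: the harmonic-mean aggregation and the identity relating $\mathcal{M}_\text{inter-class}$ to pairwise center distances both introduce factors of $|\pi|$, $|\pi'|$, $N$, and $C$, and I need to argue carefully that these are exactly the factors that cancel (or are absorbed into the asymptotic constant) so that the final statement is a clean $O(\mathcal{M}_{kms})$ rather than something with extra size-dependence. A secondary subtlety is the geometric bisector argument when $\boldsymbol{\mu}_\pi = \boldsymbol{\mu}_{\pi'}$ or when clusters are degenerate; the hypothesis $|\xi_{\pi \to \pi'}| > 0$ and the structure of the learned $Z$ should rule out the pathological cases, but I would state the nondegeneracy assumptions explicitly. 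I would also double-check the direction of the inequality — we want an upper bound on $\mathcal{E}_{\Pi, Z}$ in terms of $\mathcal{M}_{kms}$, so every step must go the "safe" way — and confirm that the harmonic mean (rather than arithmetic mean) is genuinely what makes the constant depend only on $C$.
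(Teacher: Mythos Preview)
Your proposal is correct and follows essentially the same approach as the paper: both arguments use the bisector inequality $\|\*z_i-\boldsymbol{\mu}_\pi\|^2 \ge \tfrac14\|\boldsymbol{\mu}_\pi-\boldsymbol{\mu}_{\pi'}\|^2$ for $i\in\xi_{\pi\to\pi'}$ to lower-bound $\mathcal{M}_{\text{intra-class}}$ per pair, upper-bound $\mathcal{M}_{\text{inter-class}}$ by a weighted sum of pairwise center distances, and then aggregate via the harmonic mean. The only cosmetic difference is that the paper bounds $\mathcal{M}_{\text{inter-class}}$ through a Cauchy--Schwarz step (picking up an extra factor of $C$ and $|\pi||\pi'|/N^2$) whereas you invoke the exact weighted-variance identity $\mathcal{M}_{\text{inter-class}}=\tfrac{1}{2N}\sum_{\pi,\pi'}|\pi||\pi'|\|\boldsymbol{\mu}_\pi-\boldsymbol{\mu}_{\pi'}\|^2$; either route yields weights dominated termwise by $|\pi|+|\pi'|$, so the final chain of inequalities is the same.
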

The K-means measure $\mathcal{M}_{kms}(\Pi, Z)$ have a nice matrix form as shown in Appendix~\ref{sec:sup_matrix_kmeans} which facilitates theoretical analysis. 
Our analysis revolves around contrasting the resulting clustering performance before and after adding labels as we will shown next. 
\subsubsection{Perturbation in Clustering Performance}
With the evaluation target defined above, our main analysis will revolve around analyzing \textit{``how the extra label information help reduces} $\mathcal{M}_{kms}(\Pi, Z)$''. %
Formally, we investigate the following error difference, as a result of added label information:
$$\Delta_{kms}(\delta) = \mathcal{M}_{kms}(\Pi, Z) - \mathcal{M}_{kms}(\Pi, Z(\delta)), $$
where the closed-form solution is given by the following theorem. Positive $\Delta_{kms}(\delta)$ means improved clustering, as a result of adding labeling information.

\begin{tcolorbox}[colback=gray!5!white]
\begin{theorem} (Main result.) Denote $V_{\varnothing}^{(u)} \in \mathbb{R}^{N \times (N-k)}$ as the \textit{null space} of $V_k^{(u)}$ and $\Tilde{A}_k^{(u)} = V_k^{(u)} \Sigma_k^{(u)} V_k^{(u)\top}$ as the rank-$k$ approximation for $\Tilde{A}^{(u)}$.  
 Given $\delta, \eta_{1} > 0 $ and let 
 $\mathcal{G}_k$ as the spectral gap between $k$-th and $k+1$-th singular values of $\Tilde{A}^{(u)}$, we have: 
\begin{align*}
    &\Delta_{kms}(\delta) =  \delta \eta_{1} \operatorname{Tr} \left( \Upsilon \left(V_k^{(u)} V_k^{(u)\top} \mathfrak{l} \mathfrak{l}^{\top}(I +   V_\varnothing^{(u)}  V_\varnothing^{(u)\top}) - 2\Tilde{A}_k^{(u)} diag(\mathfrak{l}) \right)\right) + O(\frac{1}{\mathcal{G}_k} + \delta^2),
\end{align*}
where $diag(\cdot)$ converts the vector to the corresponding diagonal matrix and $\Upsilon \in \mathbb{R}^{N\times N}$ is a matrix encoding the \textbf{ground-truth clustering structure} in the way that $\Upsilon_{xx'} > 0$ if $x$ and $x'$ has the same label and  $\Upsilon_{xx'} < 0$ otherwise. The concrete form and the proof are in Appendix~\ref{sec:sup_proof_main}.  
\label{th:main}
\end{theorem}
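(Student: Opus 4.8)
The plan is to treat $\Delta_{kms}(\delta)$ as a smooth function of the perturbation parameter $\delta$ and compute its first-order Taylor expansion around $\delta = 0$, using classical matrix perturbation theory. First I would set up the perturbation problem cleanly: write $A(\delta) = \eta_u A^{(u)} + \delta\, \mathfrak{l}\mathfrak{l}^\top$, form the normalized adjacency $\tilde A(\delta) = D(\delta)^{-1/2} A(\delta) D(\delta)^{-1/2}$, and express $Z(\delta) = D(\delta)^{-1/2} V_k(\delta) \sqrt{\Sigma_k(\delta)}$ in terms of the top-$k$ singular subspace of $\tilde A(\delta)$. The key quantity to differentiate is the rank-$k$ projector $P_k(\delta) = V_k(\delta) V_k(\delta)^\top$ and the rank-$k$ truncation $\tilde A_k(\delta) = V_k(\delta)\Sigma_k(\delta) V_k(\delta)^\top$; by the Eckart--Young characterization these depend analytically on $\delta$ away from eigenvalue crossings, and their derivatives are controlled by the spectral gap $\mathcal{G}_k$, which is exactly why the error term carries a $O(1/\mathcal{G}_k)$ contribution. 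I would first record $\frac{d}{d\delta}\tilde A(\delta)\big|_{0}$, handling the degree-normalization carefully: $D(\delta)_{xx} = w_x^{(u)}\eta_u + \delta (\mathfrak{l}\mathfrak{l}^\top \mathbf{1})_x$, so $\frac{d}{d\delta} D^{-1/2}$ contributes a $diag(\mathfrak{l})$-type term that is the source of the $-2\tilde A_k^{(u)} diag(\mathfrak{l})$ piece in the final formula.

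Next I would plug the matrix form of the K-means measure (the one referenced as living in Appendix~\ref{sec:sup_matrix_kmeans}) into $\mathcal{M}_{kms}(\Pi, Z(\delta))$. The idea is that $\mathcal{M}_{kms}$ is, up to the ground-truth clustering matrix $\Upsilon$, a bilinear/quadratic functional of $Z(\delta)$, hence of $\tilde A_k(\delta)$ through the closed form $Z(\delta)Z(\delta)^\top = D(\delta)^{-1/2}\tilde A_k(\delta) D(\delta)^{-1/2}$. Writing $\mathcal{M}_{kms}$ as $\operatorname{Tr}(\Upsilon\, Z(\delta)Z(\delta)^\top)$ (absorbing the normalization and the inter-class denominator into the definition of $\Upsilon$, which is legitimate to leading order since the denominator changes only at second order or is treated as part of the ``same order'' claim), the derivative becomes $\operatorname{Tr}(\Upsilon\, \frac{d}{d\delta}[Z(\delta)Z(\delta)^\top]\big|_0)$. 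I would then substitute the derivative of $\tilde A_k(\delta)$: using standard projector-perturbation, $\frac{d}{d\delta}\tilde A_k(\delta)\big|_0 = P_k^{(u)} \dot{\tilde A} P_k^{(u)} + (\text{off-diagonal block terms of size } O(1/\mathcal{G}_k))$, where $\dot{\tilde A}$ has a $V_k^{(u)}V_k^{(u)\top}\mathfrak{l}\mathfrak{l}^\top(\cdot)$ component from the numerator perturbation $\mathfrak{l}\mathfrak{l}^\top$ and a $-2\tilde A_k^{(u)} diag(\mathfrak{l})$ component from the degree renormalization; the appearance of $I + V_\varnothing^{(u)} V_\varnothing^{(u)\top}$ comes from decomposing the identity as $P_k^{(u)} + V_\varnothing^{(u)}V_\varnothing^{(u)\top}$ and noticing which half survives the projection. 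Collecting terms, factoring out $\delta\eta_1$, and folding all higher-order and gap-suppressed contributions into $O(1/\mathcal{G}_k + \delta^2)$ yields the stated expression.

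The main obstacle I anticipate is the bookkeeping around the \emph{normalization} $D(\delta)^{-1/2}$: it couples the numerator perturbation into the degrees, and its derivative interacts multiplicatively with both $\tilde A^{(u)}$ and the singular-subspace perturbation, so one must be careful not to double-count or drop cross terms — in particular showing that exactly the combination $V_k^{(u)}V_k^{(u)\top}\mathfrak{l}\mathfrak{l}^\top(I + V_\varnothing^{(u)}V_\varnothing^{(u)\top}) - 2\tilde A_k^{(u)} diag(\mathfrak{l})$ emerges, and no more, requires tracking which terms are $O(1)$ versus $O(1/\mathcal{G}_k)$. A secondary subtlety is justifying that the top-$k$ singular subspace varies analytically at $\delta = 0$ (no crossing at the $k$-th/$(k{+}1)$-th singular value), which is where the positivity of $\mathcal{G}_k$ is used, and that the rank-$k$ truncation error in approximating $Z(\delta)Z(\delta)^\top$ contributes only at the $O(1/\mathcal{G}_k)$ level. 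Once those two points are handled, the remaining steps — differentiating a trace of a product, using cyclicity of trace to move $\Upsilon$ next to the relevant factors, and assembling the final formula — are routine linear algebra that I would relegate to the appendix.
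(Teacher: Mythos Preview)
Your overall plan --- first-order Taylor expansion of $\mathcal{M}_{kms}(\Pi, Z(\delta))$ around $\delta=0$ using eigenvalue/eigenprojector perturbation theory, then collecting into a trace against $\Upsilon$ --- is exactly the paper's approach, and the identification of $[D(\delta)^{-1/2}]'$ as a source of the $diag(\mathfrak{l})$ contribution is correct. However, there is a concrete error in your accounting of the leading-order terms that would prevent you from reaching the stated formula.

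You write that $\frac{d}{d\delta}\tilde A_k(\delta)\big|_0 = P_k^{(u)} \dot{\tilde A}\, P_k^{(u)} + (\text{off-diagonal block terms of size } O(1/\mathcal{G}_k))$. This is false. Writing $\tilde A_k(\delta) = P_k(\delta)\tilde A(\delta)P_k(\delta)$ and differentiating produces the additional pieces $\dot P_k \tilde A^{(u)} P_k^{(u)} + P_k^{(u)} \tilde A^{(u)} \dot P_k$, whose cross-block entries carry prefactors $\lambda_j^{(u)}/(\lambda_j^{(u)} - \lambda_i^{(u)})$ for $j\le k$, $i>k$. With the paper's convention $\mathcal{G}_k = \lambda_k^{(u)}/\lambda_{k+1}^{(u)}$, one has $\lambda_j^{(u)}/(\lambda_j^{(u)} - \lambda_i^{(u)}) = 1 + O(1/\mathcal{G}_k)$, so these cross-block terms contribute at \emph{leading order} (coefficient $\approx 1$); only the residual $\sum_{p\ge 1}(\lambda_i^{(u)}/\lambda_j^{(u)})^p$ is $O(1/\mathcal{G}_k)$. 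This leading cross-block piece is precisely the origin of the $V_k^{(u)}V_k^{(u)\top}\mathfrak{l}\mathfrak{l}^\top V_\varnothing^{(u)}V_\varnothing^{(u)\top}$ term in the final formula --- it cannot arise from ``decomposing the identity'' inside $P_k^{(u)}\dot{\tilde A}P_k^{(u)}$, since the $V_\varnothing^{(u)}$ block is annihilated there by construction. If you discard the off-diagonal terms as $O(1/\mathcal{G}_k)$, you will be unable to produce the factor $(I + V_\varnothing^{(u)}V_\varnothing^{(u)\top})$ and your leading-order expression will not match. The paper handles this by splitting the eigenprojector-derivative sum over $i$ into $i\le k$ (where the symmetric pairs telescope via $\lambda_j/(\lambda_j-\lambda_i)+\lambda_i/(\lambda_i-\lambda_j)=1$) and $i>k$ (where the ``$1$'' is retained and only the geometric-series tail is absorbed into $O(1/\mathcal{G}_k)$).

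A second, smaller issue: your justification for writing $\mathcal{M}_{kms}$ as $\operatorname{Tr}(\Upsilon\, ZZ^\top)$ --- that ``the denominator changes only at second order'' --- is incorrect; $\mathcal{M}_{\text{inter}}(\Pi, Z(\delta))$ varies at first order in $\delta$. The paper obtains $\Upsilon$ via the quotient rule, $\mathcal{M}_{kms}' = \eta_1 \mathcal{M}_{\text{intra}}' - \eta_1\eta_2 \mathcal{M}_{\text{inter}}'$ with $\eta_1 = 1/\mathcal{M}_{\text{inter}}$ and $\eta_2 = \mathcal{M}_{\text{intra}}/\mathcal{M}_{\text{inter}}$, after which both derivatives combine into $-\eta_1\operatorname{Tr}(\Upsilon\,[ZZ^\top]')$ with $\Upsilon = (1+\eta_2)H_\Pi - I - \tfrac{\eta_2}{N}\mathbf{1}_N\mathbf{1}_N^\top$. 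The constants $\eta_1,\eta_2$ and the sign pattern of $\Upsilon$ come from this step, not from absorbing the denominator into a higher-order remainder.
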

\end{tcolorbox}
 Theorem~\ref{th:main} is more general but less intuitive to understand. To gain a better insight, we introduce Theorem~\ref{th:main_simp} which provides more direct implications. {We provide the justification of the  assumptions and the formal proof in Appendix~\ref{sec:sup_proof_main_simp}.}

\begin{tcolorbox}[colback=gray!5!white]
\begin{theorem} (Intuitive result.) 
Assuming the spectral gap $\mathcal{G}_k$ is sufficiently large and $\mathfrak{l}$ lies in the linear span of $V_k^{(u)}$. We also assume $\forall \pi_c \in \Pi, \forall i \in \pi_c, \mathfrak{l}_{(i)} =: \mathfrak{l}_{\pi_c}$ which represents the \textit{connection between class $c$ to the labeled data}.
 Given $\delta, \eta_{1}, \eta_{2} > 0 $, 
 we have: 
\begin{align*}
    &\Delta_{kms}(\delta) \geq  \delta \eta_{1}\eta_{2} \sum_{\pi_c \in \Pi}  |\pi_c| \mathfrak{l}_{\pi_c} \Delta_{\pi_c}(\delta),
\end{align*}
where 
\begin{equation*}
    \Delta_{\pi_c}(\delta) = \tikzmarknode{connect}{\highlight{myblue}{$(\mathfrak{l}_{\pi_c} - \frac{1}{N})$}} - 2(1-\frac{|\pi_c|}{N})(\tikzmarknode{cmp}{\highlight{red}{$\mathbb{E}_{i \in \pi_c} \mathbb{E}_{j \in \pi_c}\*z_i^{\top}\*z_j$}} - \tikzmarknode{div}{\highlight{orange}{$\mathbb{E}_{i \in \pi_c} \mathbb{E}_{j \notin \pi_c}\*z_i^{\top}\*z_j$}} ).~~~~~~~~
\end{equation*}
\begin{tikzpicture}[overlay,remember picture,>=stealth,nodes={align=left,inner ysep=1pt},<-]
    \path (connect.north) ++ (0,0.5em) node[anchor=south west,color=blue!67] (cnnt){\textit{Connection from class $c$ to the labeled data.}};
    \draw [color=blue!87](connect.north) |- ([xshift=-0.3ex,color=myblue]cnnt.south east);
    \path (cmp.south) ++ (0,0.0em) node[anchor=north east,color=red!67] (c){\textit{ Intra-class similarity }};
    \draw [color=red!87](cmp.south) |- ([xshift=-0.3ex,color=red] c.south west);
    \path (div.south) ++ (-0.5em,0.0em) node[anchor=north west,color=orange!99] (c){\textit{ Inter-class similarity}};
    \draw [color=orange!87](div.south) |- ([xshift=-0.3ex,color=orange] c.south east);
\end{tikzpicture}
\label{th:main_simp}
\end{theorem}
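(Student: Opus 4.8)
### Proof Proposal for Theorem~\ref{th:main_simp}

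The plan is to specialize the general formula of Theorem~\ref{th:main} under the two stated simplifying assumptions and then repackage the resulting trace expression into the class-wise sum that appears in the statement. Concretely, I would proceed as follows.

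First I would exploit the assumption that $\mathfrak{l}$ lies in the linear span of $V_k^{(u)}$. This kills the null-space contributions: $V_\varnothing^{(u)} V_\varnothing^{(u)\top}\mathfrak{l} = 0$, so the factor $(I + V_\varnothing^{(u)} V_\varnothing^{(u)\top})$ acting on $\mathfrak{l}$ collapses to the identity, and likewise $V_k^{(u)} V_k^{(u)\top}\mathfrak{l} = \mathfrak{l}$. Simultaneously, the ``sufficiently large spectral gap $\mathcal{G}_k$'' assumption lets me drop the $O(1/\mathcal{G}_k)$ remainder, and I will treat the $O(\delta^2)$ term as absorbed into the inequality (this is where the ``$\geq$'' rather than ``$=$'' in the statement comes from — I only keep the first-order-in-$\delta$ contribution and argue the sign/magnitude of the discarded pieces is controlled). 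After these reductions the leading term becomes $\delta\eta_1 \operatorname{Tr}\!\big(\Upsilon(\mathfrak{l}\mathfrak{l}^\top - 2\tilde A_k^{(u)}\operatorname{diag}(\mathfrak{l}))\big)$, and since $\mathfrak{l}$ is in the span of the top-$k$ singular vectors I can also replace $\tilde A_k^{(u)}$ by $\tilde A^{(u)}$ (or, more usefully, recognize $\tilde A_k^{(u)} = \sqrt{D^{(u)}}Z^{(u)}Z^{(u)\top}\sqrt{D^{(u)}}$ from the closed form in Section~\ref{sec:feature}) to express everything in terms of the learned features $\*z_i$.

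Next I would use the piecewise-constant structure of $\mathfrak{l}$, namely $\mathfrak{l}_{(i)} = \mathfrak{l}_{\pi_c}$ for all $i \in \pi_c$, together with the explicit form of $\Upsilon$ — which I expect from Appendix~\ref{sec:sup_matrix_kmeans} to encode the K-means measure with a $+$/$-$ block pattern weighted by cluster sizes, something like $\Upsilon_{xx'} \propto \mathbbm{1}[x,x' \text{ same class}]/|\pi| - 1/N$ up to normalization by $\mathcal{M}_\text{inter-class}$. Plugging the block-constant $\mathfrak{l}$ into the two traces turns each into a double sum over pairs $(i,j)$ that factorizes by the class of $i$. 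The $\operatorname{Tr}(\Upsilon\,\mathfrak{l}\mathfrak{l}^\top)$ piece produces the $(\mathfrak{l}_{\pi_c} - \tfrac1N)$ term (the $\mathfrak{l}_{\pi_c}^2$ from same-class pairs minus the global-average cross term), weighted by $|\pi_c|$; the $\operatorname{Tr}(\Upsilon\,\tilde A_k^{(u)}\operatorname{diag}(\mathfrak{l}))$ piece, after substituting $\tilde A_k^{(u)}$ in terms of $Z^{(u)}$ and the degree matrix, produces the $\mathbb{E}_{i\in\pi_c}\mathbb{E}_{j\in\pi_c}\*z_i^\top\*z_j - \mathbb{E}_{i\in\pi_c}\mathbb{E}_{j\notin\pi_c}\*z_i^\top\*z_j$ difference, with the combinatorial prefactor $(1 - |\pi_c|/N)$ coming from counting same-class versus different-class pairs. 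Collecting terms and pulling out the common factor $\delta\eta_1\eta_2$ (the $\eta_2$ presumably entering through the normalization of the degree matrix $D^{(u)} \approx \eta_2 D$ or a rescaling of the feature norms) yields exactly $\sum_{\pi_c}|\pi_c|\mathfrak{l}_{\pi_c}\Delta_{\pi_c}(\delta)$.

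The main obstacle I anticipate is bookkeeping the normalization constants and the precise form of $\Upsilon$: getting the factors $1/N$, $|\pi_c|/N$, and the $\eta_2$ exactly right requires carefully tracking how $D^{(u)}$, the inter-class measure in the denominator of $\mathcal{M}_{kms}$, and the $\sqrt{\Sigma_k^{(u)}}$ scaling in $Z^{(u)} = [D^{(u)}]^{-1/2}V_k^{(u)}\sqrt{\Sigma_k^{(u)}}$ interact when I substitute $\tilde A_k^{(u)} = \sqrt{D^{(u)}}Z^{(u)}Z^{(u)\top}\sqrt{D^{(u)}}$. A secondary subtlety is justifying the passage from the equality in Theorem~\ref{th:main} to the inequality here: I would need to check that the dropped $O(\delta^2)$ and $O(1/\mathcal{G}_k)$ terms, together with any approximation $D^{(u)} \approx \eta_2 D$, only decrease the right-hand side (or are negligible at the order claimed), which is really the content of the ``justification of the assumptions'' promised for Appendix~\ref{sec:sup_proof_main_simp}. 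Everything else is a routine — if tedious — expansion of traces of block-structured matrices.
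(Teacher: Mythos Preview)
Your reduction of the null-space term via $V_\varnothing^{(u)\top}\mathfrak{l}=0$ and the removal of $O(1/\mathcal{G}_k)$ are exactly what the paper does. But two genuine gaps remain.

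\textbf{Source of the inequality.} You attribute the ``$\geq$'' to dropping the $O(\delta^2)$ remainder. That is not where it comes from. In the paper, $\Upsilon$ has the explicit form $\Upsilon = (1+\eta_2)H_\Pi - I - \tfrac{\eta_2}{N}\mathbf{1}_N\mathbf{1}_N^\top$, so the trace splits as $(1+\eta_2)\mathcal{M}'_H + \mathcal{M}'_I + \eta_2\mathcal{M}'_{\mathbf{1}}$, one piece for each of $H_\Pi$, $I$, and $\tfrac{1}{N}\mathbf{1}\mathbf{1}^\top$. The key step is to show $\mathcal{M}'_H + \mathcal{M}'_I \le 0$, which reduces to the Jensen-type inequality $\|\mathbb{E}_{i\in\pi}\*z_i\|_2^2 \le \mathbb{E}_{i\in\pi}\|\*z_i\|_2^2$. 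This lets you drop the identity-matrix piece of $\Upsilon$ and keep only $\eta_2(\mathcal{M}'_H + \mathcal{M}'_{\mathbf{1}})$; \emph{that} is the inequality in the theorem. Your plan never isolates the $I$-term, so you would not discover this step, and without it the claimed bound does not follow.

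\textbf{Origin of $\eta_2$.} You speculate that $\eta_2$ enters through a degree-matrix normalization $D^{(u)}\approx\eta_2 D$ or a feature rescaling. It does not: $\eta_1 = 1/\mathcal{M}_\text{inter}$ and $\eta_2 = \mathcal{M}_\text{intra}/\mathcal{M}_\text{inter}$ are the quotient-rule coefficients from differentiating $\mathcal{M}_{kms}=\mathcal{M}_\text{intra}/\mathcal{M}_\text{inter}$, and they sit directly inside $\Upsilon$. (Separately, the paper normalizes so that $D^{(u)}=I$, which is why $\tilde A_k^{(u)} = Z^{(u)}Z^{(u)\top}$; no extra scaling constant appears.) Once you have $\eta_2(\mathcal{M}'_H+\mathcal{M}'_{\mathbf{1}})$ from the Jensen step, the rest is exactly the block-constant trace expansion you describe, and the $(1-|\pi_c|/N)$ factor drops out of rewriting $\boldsymbol{\mu}_{\pi}^\top\boldsymbol{\mu}_\pi - \boldsymbol{\mu}_\pi^\top\boldsymbol{\mu}_\Pi$ in terms of intra- versus inter-class averages.
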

\end{tcolorbox}
\textbf{Implications.} In Theorem~\ref{th:main_simp}, we define the \textbf{class-wise perturbation} of the K-means measure as $\Delta_{\pi_c}(\delta)$. This way, we can interpret the effect of adding labels for a specific class $c$. If we desire $\Delta_{\pi_c}(\delta)$ to be large, the sufficient condition is that 
\begin{center}
    \textit{connection of class c to the labeled data > intra-class similarity - inter-class similarity}.
\end{center}
    We use examples in Figure~\ref{fig:teaser} to epitomize the core idea. Specifically, our unlabeled samples consist of three underlying classes: traffic lights (known), apples (novel), and flowers (novel). \textbf{(a)} For unlabeled traffic lights from \emph{known classes} which are strongly connected to the labeled data, adding labels to traffic lights can largely improve the clustering performance; \textbf{(b)} For \emph{novel classes} like apples, it may also help  when they have a strong connection to the traffic light, and their intra-class similarity is not as strong (due to different colors); \textbf{(c)} However, labeled data may offer little improvement in clustering the flower class, due to the minimal connection to the labeled data and that flowers' self-clusterability is already strong.

\section{Empirical Validation of Theory}
\label{sec:exp}
Beyond theoretical insights, we show empirically that SORL is effective on standard benchmark image
classification datasets CIFAR-10/100~\cite{krizhevsky2009learning}. 
Following the seminal work ORCA~\cite{cao2022openworld}, classes are divided into 50\% known and 50\% novel classes. We then use 50\% of samples from the known classes as the labeled dataset, and the rest as the unlabeled set. 
We follow the evaluation strategy in~\cite{cao2022openworld} and report the following metrics: (1) classification accuracy on known classes, (2) clustering accuracy on the novel data, and (3) overall accuracy on all classes. 
More experiment details are in Appendix~\ref{sec:sup_exp_details}. 

\begin{table}[ht]
\centering
\caption{\small Main Results. Mean and std are estimated on five different runs. Baseline numbers are  from ~\cite{sun2023open,cao2022openworld}.} 
\resizebox{0.7\linewidth}{!}{
\begin{tabular}{lllllll}
\toprule
\multirow{2}{*}{\textbf{Method}} & \multicolumn{3}{c}{\textbf{CIFAR-10}} & \multicolumn{3}{c}{\textbf{CIFAR-100}} \\
 & \textbf{All} & \textbf{Novel} & \textbf{Known} & \textbf{All} & \textbf{Novel} & \textbf{Known} \\ \midrule
\textbf{FixMatch}~\cite{alex2020fixmatch} & 49.5 & 50.4 & 71.5  & 20.3 & 23.5 & 39.6 \\
\textbf{DS$^{3}$L}~\cite{guo2020dsl} & 40.2 & 45.3 & 77.6  & 24.0 & 23.7 & 55.1\\
\textbf{CGDL}~\cite{sun2020cgdl} & 39.7 & 44.6 & 72.3  & 23.6 & 22.5 & 49.3 \\
\textbf{DTC}~\cite{Han2019dtc} & 38.3 & 39.5 & 53.9 & 18.3 & 22.9 & 31.3  \\
\textbf{RankStats}~\cite{zhao2021rankstat} & 82.9 & 81.0 & 86.6  & 23.1 & 28.4 & 36.4 \\
\textbf{SimCLR}~\cite{chen2020simclr} & 51.7 & 63.4 & 58.3 & 22.3 & 21.2 & 28.6 \\ \midrule
\textbf{ORCA}~\cite{cao2022openworld} & 88.3$ ^{\pm{0.3}} $ & 87.5$ ^{\pm{0.2}} $ & 89.9$ ^{\pm{0.4}} $ & 47.2$ ^{\pm{0.7}} $ & 41.0$ ^{\pm{1.0}} $ & 66.7$ ^{\pm{0.2}} $  \\
\textbf{GCD}~\cite{vaze22gcd} & 87.5$ ^{\pm{0.5}} $ & 86.7$ ^{\pm{0.4}} $ & 90.1$ ^{\pm{0.3}} $ & 46.8$ ^{\pm{0.5}} $ & 43.4$ ^{\pm{0.7}} $ & \textbf{69.7}$ ^{\pm{0.4}} $   \\
\textbf{OpenCon}~\cite{sun2023open} & 90.4 $ ^{\pm{0.6}} $ & 91.1$ ^{\pm{0.1}} $ & 89.3$ ^{\pm{0.2}} $ & 52.7$ ^{\pm{0.6}} $ & 47.8$ ^{\pm{0.6}} $ & {69.1}$ ^{\pm{0.3}} $ \\
\textbf{SORL (Ours)} & \textbf{93.5} $ ^{\pm{1.0}} $ & \textbf{92.5} $ ^{\pm{0.1}} $  & \textbf{94.0}$ ^{\pm{0.2}} $  & \textbf{56.1} $ ^{\pm{0.3}} $ & \textbf{52.0} $ ^{\pm{0.2}} $  & 68.2$ ^{\pm{0.1}} $
\\ \bottomrule
\end{tabular}}
\label{tab:main}
\end{table}

\noindent \textbf{SORL achieves competitive performance.} 
Our proposed loss SORL is amenable to the theoretical understanding, which is our primary goal of this work. Beyond theory, we show that SORL is equally desirable in empirical performance. In particular, SORL displays competitive performance compared to existing methods, as evidenced in Table~\ref{tab:main}. Our comparison covers an extensive collection of very recent algorithms developed for this problem, including ORCA~\cite{cao2022openworld}, GCD~\cite{vaze22gcd}, and OpenCon~\cite{sun2023open}. We also compare methods in related problem domains: 
(1) Semi-Supervised Learning~\cite{alex2020fixmatch,guo2020dsl,sun2020cgdl}, (2) Novel Class Discovery~\cite{Han2019dtc,zhao2021rankstat}, (3) common representation learning method SimCLR~\cite{chen2020simclr}.
In particular, on CIFAR-100, we improve upon the best baseline OpenCon by \textbf{3.4}\% in terms of overall accuracy.
 Our result further validates that putting analysis on SORL is appealing for both theoretical and empirical reasons.

\vspace{-0.2cm}
\section{Broader Impact}
\vspace{-0.2cm}
From a theoretical perspective, our graph-theoretic framework can facilitate and deepen the understanding of other representation learning methods that commonly involve the notion of positive/negative pairs. In Appendix~\ref{sec:sup_simclr_analysis}, \textit{we exemplify how our framework can be potentially generalized to other common contrastive loss functions}~\cite{van2018cpc, khosla2020supervised, chen2020simclr}, and baseline methods that are tailored for the open-world semi-supervised learning problem (e.g., GCD~\cite{vaze22gcd}, OpenCon~\cite{sun2023open}). Hence, we believe our theoretical framework has a broader utility and significance.

From a practical perspective, our work can directly impact and benefit many real-world applications, where unlabeled data are produced at an incredible rate today. Major companies exhibit a strong need for making their machine learning systems and services amendable for the open-world setting but lack fundamental and systematic knowledge. Hence, our research advances the understanding of open-world machine learning and helps the industry improve ML systems by discovering insights and structures from unlabeled data.
\section{Related Work}
\label{sec:related}
\noindent \textbf{Semi-supervised learning.} 
Semi-supervised learning (SSL) is a classic problem in machine learning. SSL typically assumes the same class space between labeled and unlabeled data, and hence remains closed-world. A rich line of empirical works~\citep{chapelle2006ssl, lee2013pseudo,sajjadi2016regularization,laine2016temporal,zhai2019s4l,rebuffi2020semi,alex2020fixmatch,guo2020dsl, chen2020semi, yu2020multi,park2021opencos, saito2021openmatch, huang2021trash, yang2022classaware,liu2010large} and theoretical efforts~\cite{oymak2021theoretical,sokolovska2008asymptotics,singh2008unlabeled,balcan2005pac,rigollet2007generalization,wasserman2007statistical,niyogi2013manifold} have been made to address this problem. An important class of SSL methods is to represent data as graphs and predict labels by aggregating proximal nodes' labels~\cite{zhu2002learning,zhang2009prototype,wang2006label,fergus2009semi,jebara2009graph,zhou2004semi,argyriou2005combining}. Different from classic SSL, we allow its
semantic space to cover both known and novel classes. Accordingly, we contribute a graph-theoretic framework tailored to the open-world setting, and reveal new insights on how the labeled data can benefit the clustering performance on both known and novel classes. %

\noindent \textbf{Open-world semi-supervised learning}. The learning setting that considers both labeled and unlabeled data with a mixture of known and novel classes is first proposed in~\cite{cao2022openworld} and inspires a proliferation of follow-up works~\cite{pu2023dynamic,zhang2022promptcal,rizve2022openldn,vaze22gcd,sun2023open} advancing empirical success. Most works put emphasis on learning high-quality embeddings~\cite{vaze22gcd,pu2023dynamic,sun2023open,zhang2022promptcal}. 
In particular, \citet{sun2023open} employ contrastive learning with both supervised and self-supervised signals, which aligns with our theoretical setup in Sec.~\ref{sec:graph_def}. Different from prior works, our paper focuses on \emph{advancing theoretical understanding}. To the best of our knowledge, we are the first to theoretically investigate the problem
from a graph-theoretic perspective and provide a rigorous error bound. %

\noindent \textbf{Spectral graph theory.} 
Spectral graph theory is a classic research problem~\cite{von2007tutorial,chung1997spectral,cheeger2015lower,kannan2004clusterings,lee2014multiway,mcsherry2001spectral}, which aims to partition the graph by studying the eigenspace of the adjacency matrix. The spectral graph theory is also widely applied in machine learning~\cite{ng2001spectral,shi2000normalized,blum2001learning,zhu2003semi,argyriou2005combining,shaham2018spectralnet,sun2023nscl}. Recently, ~\citet{haochen2021provable} derive a spectral contrastive loss from the factorization of the graph's adjacency matrix which facilitates theoretical study in unsupervised domain adaptation~\cite{shen2022connect,haochen2022beyond}. In these works, the graph's formulation is exclusively based on unlabeled data. Sun et al.~\cite{sun2023nscl} later expanded this spectral contrastive loss approach to cater to learning environments that encompass both labeled data from known classes and unlabeled data from novel ones. In this paper, our adaptation of the loss function from \cite{sun2023nscl} is tailored to address the open-world semi-supervised learning challenge, considering known class samples within unlabeled data. 
\noindent \textbf{Theory for self-supervised learning.} A proliferation of works in self-supervised representation learning demonstrates the empirical success~\cite{van2018cpc,chen2020simclr,caron2020swav,he2019moco,zbontar2021barlow,bardes2021vicreg,chen2021exploring,haochen2021provable} with the theoretical foundation by providing provable guarantees on the representations learned by contrastive learning for linear probing~\cite{arora2019theoretical,lee2021predicting,tosh2021contrastive,tosh2021contrastive2,balestriero2022contrastive,shi2023the}. From the graphic view, ~\citet{shen2022connect,haochen2021provable,haochen2022beyond}  model the pairwise relation by the augmentation probability and provided error analysis of the downstream tasks. 
The existing body of work has mostly focused on \emph{unsupervised learning}. 
In this paper, we systematically investigate how the label information can change the representation manifold and affect the downstream clustering performance on both known and novel classes.

\vspace{-0.2cm}
\section{Conclusion}
\label{sec:conclusion}
\vspace{-0.2cm}
In this paper, we present a graph-theoretic framework for open-world semi-supervised learning. The framework facilitates the understanding of how representations change as a result of adding labeling information to the graph. Specifically, we learn representation through Spectral Open-world Representation Learning (SORL). Minimizing this objective is equivalent to factorizing the graph’s 
adjacency matrix, which allows us to analyze the clustering error difference between having vs. excluding labeled data. Our main results suggest that the clustering error can be significantly reduced if the
connectivity to the labeled data is stronger than their self-clusterability. Our framework is also empirically appealing to use since it achieves competitive performance on par with existing baselines. 
Nevertheless, we acknowledge two limitations to practical application within our theoretical construct:
\begin{itemize}[leftmargin=*]
    \item The augmentation graph serves as a potent theoretical tool for elucidating the success of modern representation learning methods. However, it is challenging to ensure that current augmentation strategies, such as cropping, color jittering, can transform two dissimilar images into identical ones.
    \item The utilization of Theorems~\ref{sec:theory_toy} and~\ref{sec:th_main} necessitates an explicit knowledge of the adjacency matrix of the augmentation graph, a requirement that can be intractable in practice. 
\end{itemize}
In light of these limitations, we encourage further research to enhance the practicality of these theoretical findings.
We also hope our framework and insights can inspire the broader representation learning community to understand the role of labeling prior.

\vspace{-0.3cm}
\section*{Acknowledgement}
\label{sec:ack}
\vspace{-0.3cm}
Research is supported by the AFOSR Young Investigator Program under award number FA9550-23-1-0184, National Science Foundation (NSF) Award No. IIS-2237037 \& IIS-2331669, Office of Naval Research under grant number N00014-23-1-2643, and faculty research awards/gifts from Google and Meta.  Any opinions, findings, conclusions, or recommendations
expressed in this material are those of the authors and do not necessarily reflect the views, policies, or endorsements either expressed or implied, of the sponsors. The authors would also like to thank Tengyu Ma, Xuefeng Du, and Yifei Ming for their helpful suggestions and feedback.

\bibliographystyle{plainnat}
\bibliography{main}

\clearpage
\appendix
\begin{center}
	\textbf{\LARGE Appendix }
\end{center}

\section{Technical Details of Spectral Open-world Representation Learning}
\label{sec:proof-SORL}
\begin{theorem}
\label{th:sup-orl-scl} (Recap of Theorem~\ref{th:orl-scl}) 
We define $\*f_x = \sqrt{w_x}f(x)$ for some function $f$. Recall $\eta_{u},\eta_{l}$ are two hyper-parameters defined in Eq.~\eqref{eq:def_wxx}. Then minimizing the loss function $\mathcal{L}_{\mathrm{mf}}(F, A)$ is equivalent to minimizing the following loss function for $f$, which we term \textbf{Spectral Open-world Representation Learning (SORL)}:
\begin{align}
\begin{split}
    \mathcal{L}_\text{SORL}(f) \triangleq & - 2\eta_{l} \mathcal{L}_1(f) 
- 2\eta_{u}  \mathcal{L}_2(f) \\ & +  \eta_{l}^2 \mathcal{L}_3(f) + 2\eta_{l} \eta_{u} \mathcal{L}_4(f) +  
\eta_{u}^2 \mathcal{L}_5(f),
\label{eq:sup_def_sorl}
\end{split}
\end{align}
where 
\begin{align*}
    \mathcal{L}_1(f) &= \sum_{i \in \mathcal{Y}_l}\underset{\substack{\bar{x}_{l} \sim \mathcal{P}_{{l_i}}, \bar{x}'_{l} \sim \mathcal{P}_{{l_i}},\\x \sim \mathcal{T}(\cdot|\bar{x}_{l}), x^{+} \sim \mathcal{T}(\cdot|\bar{x}'_l)}}{\mathbb{E}}\left[f(x)^{\top} {f}\left(x^{+}\right)\right] , \\
    \mathcal{L}_2(f) &= \underset{\substack{\bar{x}_{u} \sim \mathcal{P},\\x \sim \mathcal{T}(\cdot|\bar{x}_{u}), x^{+} \sim \mathcal{T}(\cdot|\bar{x}_u)}}{\mathbb{E}}
\left[f(x)^{\top} {f}\left(x^{+}\right)\right], \\
    \mathcal{L}_3(f) &= \sum_{i,j \in \mathcal{Y}_l}\underset{\substack{\bar{x}_l \sim \mathcal{P}_{{l_i}}, \bar{x}'_l \sim \mathcal{P}_{{l_j}},\\x \sim \mathcal{T}(\cdot|\bar{x}_l), x^{-} \sim \mathcal{T}(\cdot|\bar{x}'_l)}}{\mathbb{E}}
\left[\left(f(x)^{\top} {f}\left(x^{-}\right)\right)^2\right], \\
    \mathcal{L}_4(f) &= \sum_{i \in \mathcal{Y}_l}\underset{\substack{\bar{x}_l \sim \mathcal{P}_{{l_i}}, \bar{x}_u \sim \mathcal{P},\\x \sim \mathcal{T}(\cdot|\bar{x}_l), x^{-} \sim \mathcal{T}(\cdot|\bar{x}_u)}}{\mathbb{E}}
\left[\left(f(x)^{\top} {f}\left(x^{-}\right)\right)^2\right], \\
    \mathcal{L}_5(f) &= \underset{\substack{\bar{x}_u \sim \mathcal{P}, \bar{x}'_u \sim \mathcal{P},\\x \sim \mathcal{T}(\cdot|\bar{x}_u), x^{-} \sim \mathcal{T}(\cdot|\bar{x}'_u)}}{\mathbb{E}}
\left[\left(f(x)^{\top} {f}\left(x^{-}\right)\right)^2\right].
\end{align*}
\end{theorem}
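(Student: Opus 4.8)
\textbf{Proof proposal for Theorem~\ref{th:sup-orl-scl}.}

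The plan is to start from the matrix factorization objective $\mathcal{L}_{\mathrm{mf}}(F,A) = \|\tilde A - FF^\top\|_F^2$ and expand it entrywise, exactly as indicated in the proof sketch in the main text. Writing $\tilde A_{xx'} = w_{xx'}/\sqrt{w_x w_{x'}}$ and using $\*f_x = \sqrt{w_x} f(x)$, the $(x,x')$ entry of $FF^\top$ is $\*f_x^\top \*f_{x'} = \sqrt{w_x w_{x'}}\, f(x)^\top f(x')$, so each squared term becomes $\big(w_{xx'}/\sqrt{w_x w_{x'}} - \sqrt{w_x w_{x'}} f(x)^\top f(x')\big)^2$. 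Expanding the square and multiplying through, the cross term contributes $-2 w_{xx'} f(x)^\top f(x')$, the quadratic term contributes $w_x w_{x'} (f(x)^\top f(x'))^2$, and the remaining term $w_{xx'}^2/(w_x w_{x'})$ is a constant independent of $f$. Hence, up to an additive constant,
\begin{equation*}
\mathcal{L}_{\mathrm{mf}}(F,A) \equiv \sum_{x,x'\in\mathcal{X}}\Big(-2 w_{xx'} f(x)^\top f(x') + w_x w_{x'}\big(f(x)^\top f(x')\big)^2\Big).
\end{equation*}

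Next I would substitute the decomposition $w_{xx'} = \eta_u w^{(u)}_{xx'} + \eta_l w^{(l)}_{xx'}$ from Eq.~\eqref{eq:def_wxx_b} into both the linear term and (after expanding the product $w_x w_{x'}$) the quadratic term. For the linear term this immediately splits the sum into an $\eta_l$ piece and an $\eta_u$ piece; for the quadratic term, $w_x w_{x'} = (\eta_u w^{(u)}_x + \eta_l w^{(l)}_x)(\eta_u w^{(u)}_{x'} + \eta_l w^{(l)}_{x'})$ expands into the three coefficient groups $\eta_l^2$, $\eta_l\eta_u$ (twice, by symmetry of the double sum over $x,x'$), and $\eta_u^2$. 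The key technical step is then to recognize each resulting weighted sum as an expectation: by definition, $w^{(u)}_{xx'} = \mathbb{E}_{\bar x\sim\mathcal{P}}\mathcal{T}(x|\bar x)\mathcal{T}(x'|\bar x)$, so $\sum_{x,x'} w^{(u)}_{xx'} f(x)^\top f(x')$ is precisely $\mathbb{E}_{\bar x_u\sim\mathcal{P}}\mathbb{E}_{x\sim\mathcal{T}(\cdot|\bar x_u), x^+\sim\mathcal{T}(\cdot|\bar x_u)}[f(x)^\top f(x^+)] = \mathcal{L}_2(f)$; similarly $w^{(l)}_{xx'} = \sum_{i\in\mathcal{Y}_l}\mathbb{E}_{\bar x_l,\bar x'_l\sim\mathcal{P}_{l_i}}\mathcal{T}(x|\bar x_l)\mathcal{T}(x'|\bar x'_l)$ gives $\mathcal{L}_1(f)$. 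For the quadratic terms, $\sum_{x,x'} w^{(u)}_x w^{(u)}_{x'}(f(x)^\top f(x'))^2$ uses $w^{(u)}_x = \sum_{x'} w^{(u)}_{xx'} = \mathbb{E}_{\bar x_u}\mathcal{T}(x|\bar x_u)$ (since $\mathcal{T}(\cdot|\bar x_u)$ sums to one over augmentations), which yields $\mathcal{L}_5(f)$ with two independent draws $\bar x_u, \bar x'_u\sim\mathcal{P}$; the mixed term $w^{(u)}_x w^{(l)}_{x'}$ yields $\mathcal{L}_4(f)$, and $w^{(l)}_x w^{(l)}_{x'}$ yields $\mathcal{L}_3(f)$ with the double sum over $i,j\in\mathcal{Y}_l$. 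Collecting coefficients gives exactly Eq.~\eqref{eq:sup_def_sorl}.

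The main obstacle, such as it is, is purely bookkeeping: correctly tracking the marginalization identities $\sum_{x'\in\mathcal{X}} w^{(u)}_{xx'} = w^{(u)}_x$ and in particular verifying that $w^{(u)}_x = \mathbb{E}_{\bar x\sim\mathcal{P}}\mathcal{T}(x|\bar x)$ — i.e., that summing the augmentation kernel over the (finite) population space $\mathcal{X}$ returns one — and the analogous statement for $w^{(l)}_x$; and being careful that the degree-normalizing $\sqrt{w_x w_{x'}}$ factors cancel cleanly against the $\*f_x = \sqrt{w_x} f(x)$ rescaling so that no residual weights survive in the final expectations. One also has to confirm that the $\eta_l\eta_u$ cross terms from the quadratic expansion combine with the matching factor of $2$ (from the two orderings in the symmetric double sum) to produce the stated coefficient $2\eta_l\eta_u$ on $\mathcal{L}_4$. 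Since this theorem is adapted from Theorem~4.1 of \cite{sun2023nscl}, the only genuinely new content is checking that the argument goes through verbatim when known-class samples are permitted inside the unlabeled distribution $\mathcal{P}$ — which it does, because $\mathcal{P}$ enters the derivation only as an abstract marginal and the computation never uses disjointness of the supports of $\mathcal{P}$ and $\mathcal{P}_{l_i}$.
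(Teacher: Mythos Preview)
Your proposal is correct and follows essentially the same route as the paper's proof: expand $\|\tilde A - FF^\top\|_F^2$ entrywise, absorb the $\sqrt{w_x}$ rescaling, drop the $f$-independent constant, then plug in the decomposition of $w_{xx'}$ and the marginalized form of $w_x$ (using $\sum_{x'}\mathcal{T}(x'|\bar x)=1$) to rewrite each weighted sum as the stated expectation. The paper carries out exactly these steps with the same bookkeeping, including the explicit computation $w_x = \eta_l\sum_{i\in\mathcal{Y}_l}\mathbb{E}_{\bar x_l\sim\mathcal{P}_{l_i}}\mathcal{T}(x|\bar x_l) + \eta_u\mathbb{E}_{\bar x_u\sim\mathcal{P}}\mathcal{T}(x|\bar x_u)$ that you describe in words.
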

\begin{proof} We can expand $\mathcal{L}_{\mathrm{mf}}(F, A)$ and obtain
\begin{align*}
\mathcal{L}_{\mathrm{mf}}(F, A) = &\sum_{x, x^{\prime} \in \mathcal{X}}\left(\frac{w_{x x^{\prime}}}{\sqrt{w_x w_{x^{\prime}}}}-\*f_x^{\top} \*f_{x^{\prime}}\right)^2 \\
= & \text{const} + \sum_{x, x^{\prime} \in \mathcal{X}}\left(-2 w_{x x^{\prime}} f(x)^{\top} {f}\left(x^{\prime}\right)+w_x w_{x^{\prime}}\left(f(x)^{\top}{f}\left(x^{\prime}\right)\right)^2\right),
\end{align*} 
where $\*f_x = \sqrt{w_x}f(x)$ is a re-scaled version of $f(x)$.
At a high level, we follow the proof in ~\cite{haochen2021provable}, while the specific form of loss varies with the different definitions of positive/negative pairs. The form of $\mathcal{L}_\text{SORL}(f)$ is derived from plugging $w_{xx'}$  and $w_x$. 

Recall that $w_{xx'}$ is defined by
\begin{align*}
w_{x x^{\prime}} &= \eta_{l} \sum_{i \in \mathcal{Y}_l}\mathbb{E}_{\bar{x}_{l} \sim {\mathcal{P}_{l_i}}} \mathbb{E}_{\bar{x}'_{l} \sim {\mathcal{P}_{l_i}}} \mathcal{T}(x | \bar{x}_{l}) \mathcal{T}\left(x' | \bar{x}'_{l}\right)+ \eta_{u} \mathbb{E}_{\bar{x}_{u} \sim {\mathcal{P}}} \mathcal{T}(x| \bar{x}_{u}) \mathcal{T}\left(x'| \bar{x}_{u}\right) ,
\end{align*}
and $w_{x}$ is given by 
\begin{align*}
w_{x } &= \sum_{x^{\prime}}w_{xx'} \\ &=\eta_{l} \sum_{i \in \mathcal{Y}_l}\mathbb{E}_{\bar{x}_{l} \sim {\mathcal{P}_{l_i}}} \mathbb{E}_{\bar{x}'_{l} \sim {\mathcal{P}_{l_i}}} \mathcal{T}(x | \bar{x}_{l}) \sum_{x^{\prime}} \mathcal{T}\left(x' | \bar{x}'_{l}\right)+ \eta_{u} \mathbb{E}_{\bar{x}_{u} \sim {\mathcal{P}}} \mathcal{T}(x| \bar{x}_{u}) \sum_{x^{\prime}} \mathcal{T}\left(x' | \bar{x}_{u}\right) \\
&= \eta_{l} \sum_{i \in \mathcal{Y}_l}\mathbb{E}_{\bar{x}_{l} \sim {\mathcal{P}_{l_i}}} \mathcal{T}(x | \bar{x}_{l}) + \eta_{u} \mathbb{E}_{\bar{x}_{u} \sim {\mathcal{P}}} \mathcal{T}(x| \bar{x}_{u}). 
\end{align*}

Plugging in $w_{x x^{\prime}}$ we have, 

\begin{align*}
    &-2 \sum_{x, x^{\prime} \in \mathcal{X}} w_{x x^{\prime}} f(x)^{\top} {f}\left(x^{\prime}\right) \\
    = & -2 \sum_{x, x^{+} \in \mathcal{X}} w_{x x^{+}} f(x)^{\top} {f}\left(x^{+}\right) 
    \\ = & -2\eta_{l}  \sum_{i \in \mathcal{Y}_l}\mathbb{E}_{\bar{x}_{l} \sim {\mathcal{P}_{l_i}}} \mathbb{E}_{\bar{x}'_{l} \sim {\mathcal{P}_{l_i}}} \sum_{x, x^{\prime} \in \mathcal{X}} \mathcal{T}(x | \bar{x}_{l}) \mathcal{T}\left(x' | \bar{x}'_{l}\right) f(x)^{\top} {f}\left(x^{\prime}\right) \\
    & -2 \eta_{u} \mathbb{E}_{\bar{x}_{u} \sim {\mathcal{P}}} \sum_{x, x^{\prime}} \mathcal{T}(x| \bar{x}_{u}) \mathcal{T}\left(x'| \bar{x}_{u}\right) f(x)^{\top} {f}\left(x^{\prime}\right) 
    \\ = & -2\eta_{l}  \sum_{i \in \mathcal{Y}_l}\underset{\substack{\bar{x}_{l} \sim \mathcal{P}_{{l_i}}, \bar{x}'_{l} \sim \mathcal{P}_{{l_i}},\\x \sim \mathcal{T}(\cdot|\bar{x}_{l}), x^{+} \sim \mathcal{T}(\cdot|\bar{x}'_l)}}{\mathbb{E}}  \left[f(x)^{\top} {f}\left(x^{+}\right)\right] \\
    & - 2\eta_{u} 
    \underset{\substack{\bar{x}_{u} \sim \mathcal{P},\\x \sim \mathcal{T}(\cdot|\bar{x}_{u}), x^{+} \sim \mathcal{T}(\cdot|\bar{x}_u)}}{\mathbb{E}}
\left[f(x)^{\top} {f}\left(x^{+}\right)\right] \\ =& - 2\eta_{l}  \mathcal{L}_1(f) 
- 2\eta_{u}  \mathcal{L}_2(f).
\end{align*}

Plugging $w_{x}$ and $w_{x'}$ we have, 

\begin{align*}
    &\sum_{x, x^{\prime} \in \mathcal{X}}w_x w_{x^{\prime}}\left(f(x)^{\top}{f}\left(x^{\prime}\right)\right)^2 \\
    = &  \sum_{x, x^{-} \in \mathcal{X}}w_x w_{x^{-}}\left(f(x)^{\top}{f}\left(x^{-}\right)\right)^2 \\
    = & \sum_{x, x^{\prime} \in \mathcal{X}} \left( \eta_{l} \sum_{i \in \mathcal{Y}_l}\mathbb{E}_{\bar{x}_{l} \sim {\mathcal{P}_{l_i}}} \mathcal{T}(x | \bar{x}_{l}) + \eta_{u} \mathbb{E}_{\bar{x}_{u} \sim {\mathcal{P}}} \mathcal{T}(x| \bar{x}_{u}) \right)  \\&~~~~~~~~~~~\cdot \left(\eta_{l}  \sum_{j \in \mathcal{Y}_l}\mathbb{E}_{\bar{x}'_{l} \sim {\mathcal{P}_{l_j}}} \mathcal{T}(x^{-} | \bar{x}'_{l}) + \eta_{u} \mathbb{E}_{\bar{x}'_{u} \sim {\mathcal{P}}} \mathcal{T}(x^{-}| \bar{x}'_{u}) \right) \left(f(x)^{\top}{f}\left(x^{-}\right)\right)^2 \\
    = & \eta_{l} ^ 2 \sum_{x, x^{-} \in \mathcal{X}}  \sum_{i \in \mathcal{Y}_l}\mathbb{E}_{\bar{x}_{l} \sim {\mathcal{P}_{l_i}}} \mathcal{T}(x | \bar{x}_{l}) \sum_{j \in \mathcal{Y}_l}\mathbb{E}_{\bar{x}'_{l} \sim {\mathcal{P}_{l_j}}} \mathcal{T}(x^{-} | \bar{x}'_{l})\left(f(x)^{\top}{f}\left(x^{-}\right)\right)^2 \\
    &+ 2\eta_{u} \eta_{l} \sum_{x, x^{-} \in \mathcal{X}} \sum_{i \in \mathcal{Y}_l}\mathbb{E}_{\bar{x}_{l} \sim {\mathcal{P}_{l_i}}} \mathcal{T}(x | \bar{x}_{l})  \mathbb{E}_{\bar{x}_{u} \sim {\mathcal{P}}} \mathcal{T}(x^{-}| \bar{x}_{u}) \left(f(x)^{\top}{f}\left(x^{-}\right)\right)^2  \\
    &+ \eta_{u}^2 \sum_{x, x^{-} \in \mathcal{X}}  \mathbb{E}_{\bar{x}_{u} \sim {\mathcal{P}}} \mathcal{T}(x| \bar{x}_{u}) \mathbb{E}_{\bar{x}'_{u} \sim {\mathcal{P}}} \mathcal{T}(x^{-}| \bar{x}'_{u}) \left(f(x)^{\top}{f}\left(x^{-}\right)\right)^2 \\
    = & \eta_{l}^2 \sum_{i \in \mathcal{Y}_l}\sum_{j \in \mathcal{Y}_l}\underset{\substack{\bar{x}_l \sim \mathcal{P}_{{l_i}}, \bar{x}'_l \sim \mathcal{P}_{{l_j}},\\x \sim \mathcal{T}(\cdot|\bar{x}_l), x^{-} \sim \mathcal{T}(\cdot|\bar{x}'_l)}}{\mathbb{E}}
\left[\left(f(x)^{\top} {f}\left(x^{-}\right)\right)^2\right] \\
&+ 2\eta_{u}\eta_{l}
    \sum_{i \in \mathcal{Y}_l}\underset{\substack{\bar{x}_l \sim \mathcal{P}_{{l_i}}, \bar{x}_u \sim \mathcal{P},\\x \sim \mathcal{T}(\cdot|\bar{x}_l), x^{-} \sim \mathcal{T}(\cdot|\bar{x}_u)}}{\mathbb{E}}
\left[\left(f(x)^{\top} {f}\left(x^{-}\right)\right)^2\right] \\ &+ \eta_{u}^2
     \underset{\substack{\bar{x}_u \sim \mathcal{P}, \bar{x}'_u \sim \mathcal{P},\\x \sim \mathcal{T}(\cdot|\bar{x}_u), x^{-} \sim \mathcal{T}(\cdot|\bar{x}'_u)}}{\mathbb{E}}
\left[\left(f(x)^{\top} {f}\left(x^{-}\right)\right)^2\right] 
\\ = & \eta_{l}^2 \mathcal{L}_3(f) + 2\eta_{u}\eta_{l} \mathcal{L}_4(f) + \eta_{u}^2\mathcal{L}_5(f).
\end{align*}
\end{proof}

\newpage
\section{Technical Details for Toy Example}
\label{sec:sup_toy}

\subsection{Calculation Details for Figure~\ref{fig:toy_setting}.}

We first recap the toy example, which illustrates the core idea of our theoretical findings. Specifically, the example aims to distinguish 3D objects with different shapes, as shown in Figure~\ref{fig:toy_setting}. These images are generated by a 3D rendering software~\cite{johnson2017clevr} with user-defined properties including colors, shape, size, position, etc. 

\noindent \textbf{Data design.} Suppose the training samples come from three types, $\mathcal{X}_{\cube{1}}$, $\mathcal{X}_{\sphere{0.7}{gray}}$, $\mathcal{X}_{\cylinder{0.6}}$. Let $\mathcal{X}_{\cube{1}}$ be the sample space with \textbf{known} class, and $\mathcal{X}_{\sphere{0.7}{gray}}, \mathcal{X}_{\cylinder{0.6}}$ be the sample space with \textbf{novel} classes. Further, the two novel classes are constructed to have different relationships with the known class. 
Specifically, we construct the toy dataset with 6 elements as shown in Figure~\ref{fig:sup_toy}(a).

\textbf{Augmentation graph.} Based on the data design, we formally define the augmentation graph, which encodes the probability of augmenting a source image $\bar{x}$ to the augmented view $x$:
\begin{align}
    \mathcal{T}\left(x \mid \bar{x} \right)=
    \left\{\begin{array}{ll}
    \tau_{1} & \text { if }  \text{color}(x) = \text{color}(\bar{x}), \text{shape}(x) = \text{shape}(\bar{x}); \\
    \tau_{c} & \text { if }  \text{color}(x) = \text{color}(\bar{x}), \text{shape}(x) \neq \text{shape}(\bar{x}); \\
    \tau_{s} & \text { if }  \text{color}(x) \neq \text{color}(\bar{x}), \text{shape}(x) = \text{shape}(\bar{x}); \\
    \tau_{0} & \text { if }  \text{color}(x) \neq \text{color}(\bar{x}), \text{shape}(x) \neq \text{shape}(\bar{x}). \\
    \end{array}\right.
    \label{eq:sup_def_edge}
\end{align}

According to the definition above,  the corresponding augmentation matrix $T$ with each element formed by $\mathcal{T}(\cdot \mid \cdot)$ is given in Figure~\ref{fig:sup_toy}(b). We proceed by showing the details to derive $A^{(u)}$ and $A$ using $T$. 
\begin{figure*}[htb]
    \centering
\includegraphics[width=0.80\linewidth]{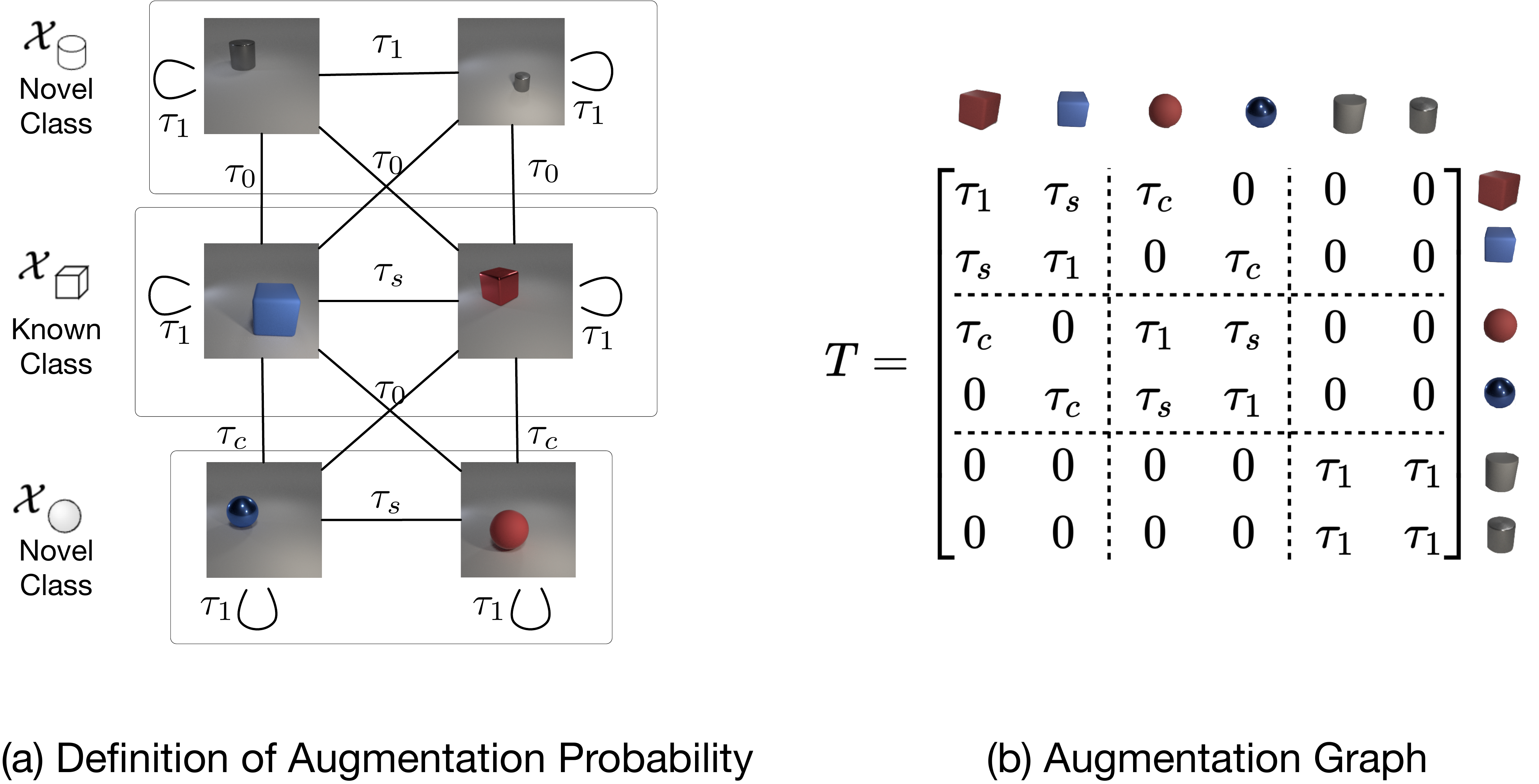}
    \caption{An illustrative example for theoretical analysis. We consider a 6-node graph with one known class (cube) and two novel classes (sphere, cylinder). (a) The augmentation probabilities between nodes are defined by their color and shape in Eq.~\eqref{eq:sup_def_edge}. (b) The augmentation matrices $T$ derived by Eq.~\eqref{eq:sup_def_edge} where we let $\tau_0=0$.  }
    \vspace{-0.2cm}
    \label{fig:sup_toy}
\end{figure*}

\textbf{Derivation details for $A^{(u)}$ and $A$.} 
Recall that each element of $A^{(u)}$ is formed by 
$
w^{(u)}_{x x^{\prime}} = \mathbb{E}_{\bar{x} \sim {\mathcal{P}}}  \mathcal{T}(x| \bar{x}) \mathcal{T}\left(x'| \bar{x}\right).
$
In this toy example, one can then see that $A^{(u)} = \frac{1}{6} T T^{\top}$ since augmentation matrix $T$ is defined that each element $T_{x\bar{x}}=\mathcal{T}(x| \bar{x})$. Note that $T$ is explicitly given in Figure~\ref{fig:sup_toy}(b) and then if we let $\eta_u = 6$, we have the close-from: 

$$\eta_u A^{(u)}= T^2 = \left[\begin{array}{cccccc}\tau_1^2+\tau_s^2+\tau_c^2 & 2 \tau_1\tau_s & 2 \tau_1\tau_c & 2 \tau_c \tau_s & 0 & 0 \\ 2 \tau_1\tau_s & \tau_1^2+\tau_s^2+\tau_c^2 & 2 \tau_c \tau_s & 2 \tau_1\tau_c & 0 & 0 \\  2 \tau_1\tau_c & 2 \tau_c \tau_s & \tau_1^2+\tau_s^2+\tau_c^2 & 2 \tau_1\tau_s & 0 & 0 \\ 2 \tau_c \tau_s & 2 \tau_1\tau_c & 2 \tau_1\tau_s & \tau_1^2+\tau_s^2+\tau_c^2 & 0 & 0 \\  0 & 0 & 0 & 0 & 2 \tau_1^2 & 2 \tau_1^2 \\ 0 & 0 & 0 & 0 & 2 \tau_1^2 & 2 \tau_1^2\end{array}\right].$$

We then derive the second part $A^{(l)}$ whose element is given by:
$$w^{(l)}_{x x^{\prime}} \triangleq \sum_{i \in \mathcal{Y}_l}\mathbb{E}_{\bar{x}_{l} \sim {\mathcal{P}_{l_i}}} \mathbb{E}_{\bar{x}'_{l} \sim {\mathcal{P}_{l_i}}} \mathcal{T}(x | \bar{x}_{l}) \mathcal{T}\left(x' | \bar{x}'_{l}\right).$$ 
Such a form can be simplified in Section~\ref{sec:theory} by defining $\mathfrak{l} \in \mathbb{R}^{N}, (\mathfrak{l})_x = \mathbb{E}_{\bar{x}_{l} \sim {\mathcal{P}_{l_1}}} \mathcal{T}(x | \bar{x}_{l})$ and by letting $|\mathcal{Y}_l|=1$. 
In this toy example, the known class only has two elements, so $\mathfrak{l} = \frac{1}{2}(T_{:, 1}+T_{:, 2})$ (average of $T$'s 1st \& 2nd column), we then have:
$$
A^{(l)} = \mathfrak{l} \mathfrak{l}^{\top} = \frac{1}{4}\left[\begin{array}{cccccc}\left(\tau_1+\tau_s\right)^2 & \left(\tau_1+\tau_s\right)^2 & \tau_c\left(\tau_1+\tau_s\right) & \tau_c\left(\tau_1+\tau_s\right) & 0 & 0 \\ \left(\tau_1+\tau_s\right)^2 & \left(\tau_1+\tau_s\right)^2 & \tau_c\left(\tau_1+\tau_s\right) & \tau_c\left(\tau_1+\tau_s\right) & 0 & 0 \\  \tau_c\left(\tau_1+\tau_s\right) & \tau_c\left(\tau_1+\tau_s\right) & \tau_c^2 & \tau_c^2 & 0 & 0 \\ \tau_c\left(\tau_1+\tau_s\right) & \tau_c\left(\tau_1+\tau_s\right) & \tau_c^2 & \tau_c^2 & 0 & 0 \\  0 & 0 & 0 & 0 & 0 & 0 \\ 0 & 0 & 0 & 0 & 0 & 0\end{array}\right].
$$
Finally, if we let $\eta_{l} = 4$ and $A =  \eta_{u} A^{(u)} + \eta_{l} A^{(l)}$, we have the full results in Figure~\ref{fig:toy_setting}.

\subsection{Calculation Details for Figure~\ref{fig:toy_result}.}
In this section, we present the analysis of eigenvectors and their orders for toy examples shown in Figure~\ref{fig:toy_setting}. In Theorem~\ref{th:sup_toy_label} we present the spectral analysis for the adjacency matrix with additional label information while in Theorem~\ref{th:sup_toy_base}, we show the spectral 
 analysis for the unlabeled case. 

\begin{theorem}\label{th:sup_toy_label} Let 
$$ \eta_u A^{(u)}  = \left[\begin{array}{cccccc}
\tau_1^2+\tau_s^2+\tau_c^2 & 2\tau_1\tau_s & 2\tau_1\tau_c & 2\tau_c\tau_s & 0 & 0\\
    2\tau_1\tau_s & \tau_1^2+\tau_s^2+\tau_c^2 & 2\tau_c\tau_s & 2\tau_1\tau_c & 0 & 0\\
    2\tau_1\tau_c & 2\tau_c\tau_s & \tau_1^2+\tau_s^2+\tau_c^2 & 2\tau_1\tau_s & 0 & 0\\
    2\tau_c\tau_s & 2\tau_1\tau_c & 2\tau_1\tau_s & \tau_1^2+\tau_s^2+\tau_c^2 & 0 & 0\\
    0 & 0 & 0 & 0 & 2\tau_1^2 & 2\tau_1^2\\
    0 & 0 & 0 & 0 & 2\tau_1^2 & 2\tau_1^2\\
\end{array}\right],$$ 

$$  A  = \eta_u A^{(u)} + \left[\begin{array}{cccccc}
    (\tau_1+\tau_s)^2 & (\tau_1+\tau_s)^2 & \tau_c(\tau_1+\tau_s) & \tau_c(\tau_1+\tau_s) & 0 & 0\\
    (\tau_1+\tau_s)^2 & (\tau_1+\tau_s)^2 & \tau_c(\tau_1+\tau_s) & \tau_c(\tau_1+\tau_s) & 0 & 0\\
    \tau_c(\tau_1+\tau_s) & \tau_c(\tau_1+\tau_s) & \tau_c^2 & \tau_c^2 & 0 & 0\\
    \tau_c(\tau_1+\tau_s) & \tau_c(\tau_1+\tau_s) & \tau_c^2 & \tau_c^2 & 0 & 0\\
    0 & 0 & 0 & 0 & 0 & 0\\
    0 & 0 & 0 & 0 & 0 & 0\\
\end{array}\right], $$ 
and we assume that $1 \gg {\tau_{c} \over \tau_1} > {\tau_{s} \over \tau_1} > 0$, $ {4 \over 9} \tau_c \le \tau_s \le \tau_c$ and $\tau_1 + \tau_c + \tau_s = 1$. 

Let $\lambda_1, \lambda_2, \lambda_3$ and $v_1,v_2,v_3$ be the largest three eigenvalues and their corresponding eigenvectors of ${D^{{{-{1\over 2}}}}} {A} {D^{{{-{1\over 2}}}}}$, which is the normalized adjacency matrix of $A$. Then the concrete form of $\lambda_1, \lambda_2, \lambda_3$ and $v_1,v_2,v_3$ can be approximately given by: 
\begin{align*}
    &\hat{\lambda}_1 = 1, ~~\hat{\lambda}_2 = 1, ~~\hat{\lambda}_3 = 1-{16\over 3} {\tau_c \over \tau_1}, \\
    & \hat{v}_1 = [0,0,0,0,1,1], \\
    & \hat{v}_2 = [\sqrt{3},\sqrt{3},1,1,0,0], \\
    &\hat{v}_3 = [1,1,-\sqrt{3},-\sqrt{3},0,0].
\end{align*}
Note that the approximation gap can be tightly bounded. Specifically, for $i \in \{1,2,3\}$, we have $|\lambda_i - \hat{\lambda}_i| \le O(({\tau_c \over \tau_1})^2 )$ and $\|\sin(U, \hat{U})\footnote{The $\sin$ operation measures the distance of two matrices with orthonormal columns, which is usually used in the subspace distance. See more in \url{https://trungvietvu.github.io/notes/2020/DavisKahan}.}\|_F \le O({\tau_c \over \tau_1})$, where  $U = [v_1,v_2,v_3], \hat{U} = [\hat{v}_1, \hat{v}_2, \hat{v}_3]$.
\end{theorem}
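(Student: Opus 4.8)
The plan is to compute the eigen-decomposition of $\tilde A = D^{-1/2} A D^{-1/2}$ by exploiting the block structure of $A$ and treating $\tau_c/\tau_1$ and $\tau_s/\tau_1$ as small parameters, using the Davis--Kahan $\sin\Theta$ theorem to control the approximation error. First I would observe that $A$ is block-diagonal with a $4\times 4$ block on the cube/sphere coordinates $\{1,2,3,4\}$ and a $2\times 2$ block on the cylinder coordinates $\{5,6\}$; the $2\times 2$ block is $4\tau_1^2\begin{pmatrix}1&1\\1&1\end{pmatrix}$, which contributes the exact eigenpair $(\text{const},[0,0,0,0,1,1])$ after normalization. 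So the real work is the $4\times 4$ block.

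Next I would compute the degree matrix $D$. Since each row sum of the $4\times 4$ block is, to leading order, dominated by the $\tau_1$-involving terms, $D$ restricted to coordinates $1\!-\!4$ is approximately a scalar multiple of the identity up to $O(\tau_c/\tau_1,\tau_s/\tau_1)$ corrections; I would write $D = D_0(I + E)$ with $\|E\|$ small, so that $\tilde A \approx c^{-1} A$ on this block up to controlled perturbation. Then I would diagonalize the leading-order $4\times 4$ matrix exactly. By the color/shape symmetry the matrix is a linear combination of commuting permutation-type matrices, so its eigenvectors are the characters: $[1,1,1,1]$, $[1,1,-1,-1]$, $[1,-1,1,-1]$, $[1,-1,-1,1]$ (up to reordering to match the coordinate labeling in Figure~\ref{fig:sup_toy}). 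I would identify which two of these combine with the label perturbation $\mathfrak{l}\mathfrak{l}^\top$ — whose support is also on coordinates $1\!-\!4$ and which is rank one — to produce the stated $\hat v_2 = [\sqrt3,\sqrt3,1,1,0,0]$ and $\hat v_3 = [1,1,-\sqrt3,-\sqrt3,0,0]$; the $\sqrt3$ weights should emerge from the relative magnitudes of the diagonal entries of $D$ on cube versus sphere nodes (cube nodes have extra degree from the added label edges), i.e. the $D^{-1/2}$ rescaling tilts the symmetric eigenvectors. I would then read off the eigenvalues: the two top ones collapse to $1$ at leading order, and $\hat\lambda_3 = 1 - \tfrac{16}{3}\tau_c/\tau_1$ comes from the first non-trivial correction, which I would get via a second-order perturbation expansion in $\tau_c/\tau_1$ (using the hypotheses $\tfrac49\tau_c\le\tau_s\le\tau_c$ and $\tau_1+\tau_c+\tau_s=1$ to simplify the coefficient).

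For the error bounds, I would set up $\tilde A = M_0 + \Delta$ where $M_0$ has the exact eigenpairs $(\hat\lambda_i,\hat v_i)$ (after orthonormalization) and $\|\Delta\|_{op} = O(\tau_c/\tau_1)$, coming jointly from (i) the degree-normalization corrections $E$, (ii) the $\tau_s$ and $\tau_0=0$ terms relative to $\tau_1$, and (iii) dropping higher-order terms in the rank-one perturbation. Weyl's inequality then gives $|\lambda_i - \hat\lambda_i| \le \|\Delta\|_{op}$; to upgrade the eigenvalue bound to $O((\tau_c/\tau_1)^2)$ I would need to notice that the first-order perturbation of $\hat\lambda_i$ vanishes or is already absorbed into $\hat\lambda_3$, so only the second-order term survives. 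For the eigenvector bound, Davis--Kahan applied to the top-$3$ invariant subspace gives $\|\sin(U,\hat U)\|_F \le \|\Delta\|_{op}/\gamma$ where $\gamma$ is the gap between the third and fourth eigenvalues of $M_0$; since that gap is $\Theta(1)$ (the fourth eigenvector, corresponding to $[1,-1,-1,1]$-type character, has eigenvalue bounded away from $\hat\lambda_3$), this yields $\|\sin(U,\hat U)\|_F = O(\tau_c/\tau_1)$. The main obstacle I anticipate is bookkeeping: correctly tracking the degree-normalization so that the $\sqrt3$ factors in $\hat v_2,\hat v_3$ come out exactly, and verifying that the first-order eigenvalue perturbation of the two degenerate top eigenvalues does not spoil the claimed $O((\tau_c/\tau_1)^2)$ accuracy for $\lambda_1,\lambda_2$ — this requires a careful degenerate perturbation analysis on the $2$-dimensional near-$1$ eigenspace rather than naive first-order estimates.
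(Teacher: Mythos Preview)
Your high-level plan (block structure, degree normalization producing the $\sqrt3$ weights, then Weyl plus Davis--Kahan) matches the paper's, but there is a genuine error in your spectral-gap accounting. You assert that the gap $\gamma$ between the third and fourth eigenvalues of $M_0$ is $\Theta(1)$, arguing that the $[1,-1,-1,1]$-type character has eigenvalue bounded away from $\hat\lambda_3$. This is false: once you include the label perturbation and the degree normalization, the fourth and fifth eigenvalues of the approximated matrix are of the form $1 - O(\tau_c/\tau_1)$ as well (the paper computes them explicitly as $\hat\lambda_{e,f} = \tfrac19\big(\pm\sqrt{(3-12\tau_s/\tau_1-16\tau_c/\tau_1)^2+108(\tau_c/\tau_1)^2}-24\tau_s/\tau_1-20\tau_c/\tau_1+6\big)$), so the gap $\hat\lambda_3-\hat\lambda_4$ is only $\Omega(\tau_c/\tau_1)$, not $\Theta(1)$. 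Relatedly, your bookkeeping of $\|\Delta\|_{op}$ is inconsistent: if $M_0$ already has the stated first-order eigenvalues $\hat\lambda_i$, then $\|\Delta\|_{op}=O((\tau_c/\tau_1)^2)$, not $O(\tau_c/\tau_1)$; if instead $M_0$ is the zeroth-order matrix, then eigenvalue $1$ has multiplicity four and there is no gap at all.

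The paper's resolution is to build the approximant $\widehat{D^{-1/2}}\widehat A\,\widehat{D^{-1/2}}$ by keeping all terms linear in $\tau_c/\tau_1,\tau_s/\tau_1$ and discarding quadratics, so that the Frobenius approximation error is $O((\tau_c/\tau_1)^2)$ outright. Weyl then gives the eigenvalue bound directly, with no need for your ``first-order perturbation vanishes'' step, and Davis--Kahan gives $O((\tau_c/\tau_1)^2)/\Omega(\tau_c/\tau_1)=O(\tau_c/\tau_1)$ for the subspace. Note also that you never use the hypothesis $\tfrac49\tau_c\le\tau_s$: in the paper this is exactly what guarantees $\hat\lambda_3>\hat\lambda_e$, i.e.\ that $[1,1,-\sqrt3,-\sqrt3,0,0]$ really is the third eigenvector rather than the fourth, so it is not a cosmetic assumption and your argument needs it at the eigenvalue-ordering step.
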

\begin{proof}
By $\tau_1 + \tau_c + \tau_s = 1$ and $1 \gg {\tau_{c} \over \tau_1} > {\tau_{s} \over \tau_1} > 0$, 
we define the following equation which approximates the corresponding terms up to error $O(({\tau_c \over \tau_1})^2)$: 
$$ A \approx \widehat{A}  =  \tau_1^2 \left[\begin{array}{cccccc}
    2+2{\tau_s \over \tau_1} & 1+4{\tau_s \over \tau_1} & 3{\tau_c \over \tau_1} & {\tau_c \over \tau_1} & 0 & 0\\
    1+4{\tau_s \over \tau_1} & 2+2{\tau_s \over \tau_1} & {\tau_c \over \tau_1} & 3{\tau_c \over \tau_1} & 0 & 0\\
    3{\tau_c \over \tau_1} & {\tau_c \over \tau_1} & 1 & 2{\tau_s \over \tau_1} & 0 & 0\\
    {\tau_c \over \tau_1} & 3{\tau_c \over \tau_1} & 2{\tau_s \over \tau_1} & 1 & 0 & 0\\
    0 & 0 & 0 & 0 & 2 & 2\\
    0 & 0 & 0 & 0 & 2 & 2\\
\end{array}\right].$$ 

$$ D \approx \widehat{D}  =  \tau_1^2  diag\left(\left[3\left(1+2{\tau_s \over \tau_1}+{4\over 3}{\tau_c \over \tau_1} \right), 3\left(1+2{\tau_s \over \tau_1}+{4\over 3}{\tau_c \over \tau_1} \right), 1+ 2{\tau_s \over \tau_1}+4{\tau_c \over \tau_1}, 1+ 2{\tau_s \over \tau_1}+4{\tau_c \over \tau_1}, 4,  4 \right]\right).$$ 

$$ D^{{{-{1\over 2}}}} \approx \widehat{D^{{{-{1\over 2}}}}}  =  {1\over \tau_1}  diag\left(\left[\sqrt{3}\left(1-{\tau_s \over \tau_1}-{2\over 3}{\tau_c \over \tau_1} \right), \sqrt{3}\left(1-{\tau_s \over \tau_1}-{2\over 3}{\tau_c \over \tau_1} \right), 1- {\tau_s \over \tau_1}-2{\tau_c \over \tau_1}, 1-{\tau_s \over \tau_1}-2{\tau_c \over \tau_1}, 2,  2 \right]\right).$$ 

\begin{align*}
     &D^{{{-{1\over 2}}}} A D^{{{-{1\over 2}}}}\approx \widehat{D^{{{-{1\over 2}}}}} \widehat{A} \widehat{D^{{{-{1\over 2}}}}}  \\ &~~~~~~=   \left[\begin{array}{cccccc}
    {2\over 3}\left(1-{\tau_s \over \tau_1} - {4\over 3}{\tau_c \over \tau_1}\right) & {1\over 3}\left(1+2{\tau_s \over \tau_1} - {4\over 3}{\tau_c \over \tau_1}\right) & \sqrt{3}{\tau_c \over \tau_1} & {1\over \sqrt{3}}{\tau_c \over \tau_1} & 0 & 0\\
    {1\over 3}\left(1+2{\tau_s \over \tau_1} - {4\over 3}{\tau_c \over \tau_1}\right) & {2\over 3}\left(1-{\tau_s \over \tau_1} - {4\over 3}{\tau_c \over \tau_1}\right) & {1\over \sqrt{3}}{\tau_c \over \tau_1} & \sqrt{3}{\tau_c \over \tau_1} & 0 & 0\\
    \sqrt{3}{\tau_c \over \tau_1} & {1\over \sqrt{3}}{\tau_c \over \tau_1} & 1-2{\tau_s \over \tau_1} - 4{\tau_c \over \tau_1} & 2{\tau_s \over \tau_1} & 0 & 0\\
    {1\over \sqrt{3}}{\tau_c \over \tau_1} & \sqrt{3}{\tau_c \over \tau_1} & 2{\tau_s \over \tau_1} & 1-2{\tau_s \over \tau_1} - 4{\tau_c \over \tau_1} & 0 & 0\\
    0 & 0 & 0 & 0 & {1\over 2} & {1\over 2}\\
    0 & 0 & 0 & 0 & {1\over 2} & {1\over 2}\\
\end{array}\right].
\end{align*}

And we have 
\begin{align*}
& \left\|{D^{{{-{1\over 2}}}}} {A} {D^{{{-{1\over 2}}}}} - \widehat{D^{{{-{1\over 2}}}}} \widehat{A} \widehat{D^{{{-{1\over 2}}}}}\right\|_2 \\
\le & \left\|{D^{{{-{1\over 2}}}}} {A} {D^{{{-{1\over 2}}}}} - \widehat{D^{{{-{1\over 2}}}}} \widehat{A} \widehat{D^{{{-{1\over 2}}}}}\right\|_F \\
\le &  O(({\tau_c \over \tau_1})^2 ).
\end{align*}
Let $\hat{\lambda}_a, \dots, \hat{\lambda}_f $ be six eigenvalues of $\widehat{D^{{{-{1\over 2}}}}} \widehat{A} \widehat{D^{{{-{1\over 2}}}}}$, and $\hat{v}_a, \dots, \hat{v}_f $ be corresponding eigenvectors. By direct calculation we have 
$$
\hat{\lambda}_a = 1, ~~\hat{\lambda}_b = 1, ~~\hat{\lambda}_c = 1-{16\over 3} {\tau_c \over \tau_1}, ~~\hat{\lambda}_d = 0
$$ 
and corresponding eigenvectors as 
\begin{align*}
    & \hat{v}_a = [0,0,0,0,1,1], \\
    & \hat{v}_b = [\sqrt{3},\sqrt{3},1,1,0,0], \\
    &\hat{v}_c = [1,1,-\sqrt{3},-\sqrt{3},0,0], \\
    &\hat{v}_d = [0,0,0,0,1,-1].
\end{align*}
For the remaining two eigenvectors, by the symmetric property, they have the formula 
\begin{align*}
    & \hat{v}_e = [\alpha({\tau_s \over \tau_1},{\tau_c \over \tau_1}), -\alpha({\tau_s \over \tau_1},{\tau_c \over \tau_1}), \beta({\tau_s \over \tau_1},{\tau_c \over \tau_1}), -\beta({\tau_s \over \tau_1},{\tau_c \over \tau_1}), 0, 0], \\
    & \hat{v}_f = [\beta({\tau_s \over \tau_1},{\tau_c \over \tau_1}), -\beta({\tau_s \over \tau_1},{\tau_c \over \tau_1}), -\alpha({\tau_s \over \tau_1},{\tau_c \over \tau_1}), \alpha({\tau_s \over \tau_1},{\tau_c \over \tau_1}), 0, 0],
\end{align*}
where $\alpha, \beta$ are some real functions. Then, by solving 
\begin{align*}
    & \widehat{D^{{{-{1\over 2}}}}} \widehat{A} \widehat{D^{{{-{1\over 2}}}}} \hat{v}_e = \hat{\lambda}_e\hat{v}_e \\
    & \widehat{D^{{{-{1\over 2}}}}} \widehat{A} \widehat{D^{{{-{1\over 2}}}}} \hat{v}_f = \hat{\lambda}_f\hat{v}_f, 
\end{align*}
we get 
\begin{align*}
    \hat{\lambda}_e = {1\over 9} \left( \sqrt{(3- 12 {\tau_s \over \tau_1} - 16 {\tau_c \over \tau_1})^2   + 108({\tau_c \over \tau_1})^2  } -24{\tau_s \over \tau_1} -20{\tau_c \over \tau_1} + 6\right)\\
    \hat{\lambda}_f = {1\over 9} \left( -\sqrt{(3- 12 {\tau_s \over \tau_1} - 16 {\tau_c \over \tau_1})^2   + 108({\tau_c \over \tau_1})^2  } -24{\tau_s \over \tau_1} -20{\tau_c \over \tau_1} + 6\right).
\end{align*}
Now, we show that $\hat{\lambda}_c > \hat{\lambda}_e$. By ${\tau_c \over \tau_1} \ll 1$ and $ {4 \over 9} \tau_c \le \tau_s \le \tau_c$
\begin{align*}
    \hat{\lambda}_c \ge \hat{\lambda}_e \Leftrightarrow~ & 3  + 24 {\tau_s \over \tau_1} - 28 {\tau_c \over \tau_1} \ge \sqrt{(3- 12 {\tau_s \over \tau_1} - 16 {\tau_c \over \tau_1})^2   + 108({\tau_c \over \tau_1})^2  } \\
    \Leftrightarrow~ & 36 ({\tau_s \over \tau_1})^2 + 35({\tau_c \over \tau_1})^2  - 144  {\tau_s \over \tau_1}{\tau_c \over \tau_1} + 18{\tau_s \over \tau_1} - 6 {\tau_c \over \tau_1} \ge 0.
\end{align*}
Thus, we have $1 = \hat{\lambda}_a = \hat{\lambda}_b > \hat{\lambda}_c > \hat{\lambda}_e > \hat{\lambda}_f > \hat{\lambda}_d = 0$. Moreover, we also have
\begin{align*}
    \hat{\lambda}_c - \hat{\lambda}_e = & 1-{16\over 3} {\tau_c \over \tau_1} - {1\over 9} \left( \sqrt{(3- 12 {\tau_s \over \tau_1} - 16 {\tau_c \over \tau_1})^2   + 108({\tau_c \over \tau_1})^2  } -24{\tau_s \over \tau_1} -20{\tau_c \over \tau_1} + 6\right)\\
    \ge & \Omega\left({\tau_c \over \tau_1}\right).
\end{align*}

Let $\hat{\lambda}_1 = \hat{\lambda}_a, \hat{\lambda}_2 = \hat{\lambda}_b, \hat{\lambda}_3 = \hat{\lambda}_c$. 
Then, by Weyl's Theorem, for $i \in \{1,2,3\}$, we have 
$$|\lambda_i - \hat{\lambda}_i| \le \left\|{D^{{{-{1\over 2}}}}} {A} {D^{{{-{1\over 2}}}}} - \widehat{D^{{{-{1\over 2}}}}} \widehat{A} \widehat{D^{{{-{1\over 2}}}}}\right\|_2 \le O(({\tau_c \over \tau_1})^2 ).  $$
By Davis-Kahan theorem, we have 
\begin{align*}
    \|\sin(U, \hat{U})\|_F \le { O(({\tau_c \over \tau_1})^2 )\over \Omega\left({\tau_c \over \tau_1}\right)} \le O({\tau_c \over \tau_1}).
\end{align*}
We finish the proof. 
\end{proof}

\begin{theorem}\label{th:sup_toy_base}
Recall $\eta_u A^{(u)}$ is defined in Theorem~\ref{th:sup_toy_label}.
Assume $1 \gg {\tau_{c} \over \tau_1} > {\tau_{s} \over \tau_1} > 0$ and $\tau_1 + \tau_c + \tau_s = 1$. Let $\lambda_1^{(u)}, \lambda_2^{(u)}, \lambda_3^{(u)}$ and $v_1^{(u)},v_2^{(u)},v_3^{(u)}$ be the largest three eigenvalues and their corresponding eigenvectors of $ {D^{(u){{-{1\over 2}}}}} {(\eta_u A^{(u)})} {D^{(u){{-{1\over 2}}}}}$, which is the normalized adjacency matrix of ${\eta_u A^{(u)}}$. Let 
\begin{align*}
    & \hat{\lambda}^{(u)}_1 = 1, ~~\hat{\lambda}^{(u)}_2 = 1, ~~\hat{\lambda}^{(u)}_3 = 1-4 {\tau_s \over \tau_1}, \\
    & \hat{v}^{(u)}_1 = [0,0,0,0,1,1], \\
    & \hat{v}^{(u)}_2 = [1,1,1,1,0,0], \\
    &\hat{v}^{(u)}_3 = [1,-1,1,-1,0,0].
\end{align*}
Let $U^{(u)} = [v^{(u)}_1,v^{(u)}_2,v^{(u)}_3], \hat{U}^{(u)} = [\hat{v}^{(u)}_1, \hat{v}^{(u)}_2, \hat{v}^{(u)}_3]$. Then, for $i \in \{1,2,3\}$, we have $|\lambda^{(u)}_i - \hat{\lambda}^{(u)}_i| \le O(({\tau_c \over \tau_1})^2 )$ and $\|\sin(U^{(u)}, \hat{U}^{(u)})\|_F \le O({\tau_c^2 \over \tau_1(\tau_c -\tau_s)})$.
\end{theorem}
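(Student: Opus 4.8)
My plan is to follow the template of the proof of Theorem~\ref{th:sup_toy_label}, while exploiting that in the purely unlabeled case the normalized matrix can be written down \emph{exactly}. Set $B \triangleq \eta_u A^{(u)}$. Each of the first four rows of $B$ sums to $\tau_1^2+\tau_s^2+\tau_c^2+2\tau_1\tau_s+2\tau_1\tau_c+2\tau_c\tau_s=(\tau_1+\tau_c+\tau_s)^2=1$, using $\tau_1+\tau_c+\tau_s=1$, each of the last two rows sums to $4\tau_1^2$, and $B$ has no nonzero entries between the first four and the last two coordinates. Hence $D^{(u)}=\mathrm{diag}(1,1,1,1,4\tau_1^2,4\tau_1^2)$ exactly, and
\[
\widetilde B \;\triangleq\; D^{(u)-1/2}\,B\,D^{(u)-1/2} \;=\; \begin{bmatrix} M & 0 \\ 0 & \tfrac12\,\mathbf{1}\mathbf{1}^{\top}\end{bmatrix},
\]
where $M\in\mathbb{R}^{4\times4}$ is the (unchanged) upper-left block of $B$ and $\mathbf 1=(1,1)^{\top}$. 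Thus $\widetilde B$ is block diagonal and its spectrum is the union of the spectra of the two blocks, with no approximation incurred so far.

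The second step is to diagonalize each block in closed form. The rank-one block $\tfrac12\mathbf 1\mathbf 1^\top$ has eigenvalue $1$ with eigenvector $(1,1)^\top$, which is $\hat v_1^{(u)}=[0,0,0,0,1,1]$ in $\mathbb R^6$, and eigenvalue $0$ with eigenvector $(1,-1)^\top$. For $M$, the crucial observation is that, after indexing the four cube/sphere nodes by $(\text{shape bit},\text{color bit})\in\mathbb Z_2\times\mathbb Z_2$, the entry $M_{g,g'}$ depends only on $g-g'$; i.e. $M$ is a $\mathbb Z_2\times\mathbb Z_2$ group-circulant matrix, hence diagonalized by the four sign characters, whose restrictions to the nodes are $[1,1,1,1]$, $[1,-1,1,-1]$, $[1,1,-1,-1]$, $[1,-1,-1,1]$. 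A direct computation of the corresponding eigenvalues (which are perfect squares, matching $B=T^2$ from Appendix~\ref{sec:sup_toy}) gives, using $\tau_1+\tau_c+\tau_s=1$,
\[
(\tau_1+\tau_c+\tau_s)^2=1,\quad (\tau_1+\tau_c-\tau_s)^2=(1-2\tau_s)^2,\quad (\tau_1+\tau_s-\tau_c)^2=(1-2\tau_c)^2,\quad (\tau_1-\tau_c-\tau_s)^2=(2\tau_1-1)^2,
\]
attached to those characters in the same order. If one prefers, each claimed eigenpair of $\widetilde B$ can instead be verified by a single matrix-vector product.

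Third, I order the six eigenvalues. Since $0<\tau_s<\tau_c$, and since $\tau_1\gg\tau_c>\tau_s$ together with $\tau_1+\tau_c+\tau_s=1$ forces $\tau_c+\tau_s<\tfrac12$, the numbers $2\tau_s,2\tau_c,2\tau_c+2\tau_s$ all lie in $(0,1)$, so $1=1>(1-2\tau_s)^2>(1-2\tau_c)^2>(2\tau_1-1)^2>0$. Therefore $\lambda_1^{(u)}=\lambda_2^{(u)}=1$ and $\lambda_3^{(u)}=(1-2\tau_s)^2$, and the top-$3$ invariant subspace is \emph{exactly} $\mathrm{span}\{[1,1,1,1,0,0],[0,0,0,0,1,1],[1,-1,1,-1,0,0]\}=\mathrm{span}(\hat U^{(u)})$. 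For the eigenvalue estimate, $\hat\lambda_1^{(u)}=\hat\lambda_2^{(u)}=1$ is exact, while expanding $\tfrac1{\tau_1}=\tfrac1{1-\tau_c-\tau_s}=1+(\tau_c+\tau_s)+O((\tau_c+\tau_s)^2)$ shows $\hat\lambda_3^{(u)}=1-\tfrac{4\tau_s}{\tau_1}=(1-2\tau_s)^2+O(\tau_s\tau_c+\tau_s^2)$, and since $\tau_s<\tau_c$ and $\tau_1\le1$ this error is $O(\tau_c^2)\le O((\tau_c/\tau_1)^2)$, the claimed $|\lambda_i^{(u)}-\hat\lambda_i^{(u)}|$ bound. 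For the subspace bound, $\mathrm{span}(U^{(u)})=\mathrm{span}(\hat U^{(u)})$ already gives $\|\sin(U^{(u)},\hat U^{(u)})\|_F=0$, trivially satisfying the stated estimate; the precise form $O\!\left(\tfrac{\tau_c^2}{\tau_1(\tau_c-\tau_s)}\right)$ is what the Weyl/Davis--Kahan route paralleling Theorem~\ref{th:sup_toy_label} delivers: replacing $B$ by its first-order-in-$\tfrac{\tau_c}{\tau_1},\tfrac{\tau_s}{\tau_1}$ approximation $\widehat B$ gives $\|\widetilde B-\widehat{\widetilde B}\|_2=O((\tau_c/\tau_1)^2)$, Weyl's inequality then bounds the eigenvalues, and Davis--Kahan bounds the subspace using the spectral gap $\mathcal G_3=\lambda_3^{(u)}-\lambda_4^{(u)}=(1-2\tau_s)^2-(1-2\tau_c)^2=4\tau_1(\tau_c-\tau_s)$.

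The only nontrivial points are (i) spotting the $\mathbb Z_2\times\mathbb Z_2$ (color $\times$ shape) symmetry of $M$, so that its spectrum comes out in closed form rather than as roots of a quartic, and (ii) pinning down the eigenvalue ordering — specifically that $(1-2\tau_s)^2$ is the third-largest eigenvalue (hence $\hat v_3^{(u)}=[1,-1,1,-1,0,0]$) and that the governing spectral gap is $4\tau_1(\tau_c-\tau_s)$ rather than $\Theta(\tau_c/\tau_1)$. This smaller gap is exactly why the eigenvector error degrades from $O(\tau_c/\tau_1)$ in Theorem~\ref{th:sup_toy_label} to $O\!\left(\tfrac{\tau_c^2}{\tau_1(\tau_c-\tau_s)}\right)$ here. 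Everything else — the exact degree matrix, the block decomposition, Weyl, and Davis--Kahan — is routine and mirrors the proof of Theorem~\ref{th:sup_toy_label}.
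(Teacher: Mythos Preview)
Your proposal is correct and in fact cleaner than the paper's argument. The paper first replaces $\eta_u A^{(u)}$, $D^{(u)}$, and $D^{(u)-1/2}$ by first-order-in-$\tau_c/\tau_1,\tau_s/\tau_1$ approximations, obtains an approximate normalized matrix whose six eigenpairs it lists as $\hat\lambda^{(u)}_i,\hat v^{(u)}_i$ (with $\hat\lambda^{(u)}_3=1-4\tau_s/\tau_1$, $\hat\lambda^{(u)}_4=1-4\tau_c/\tau_1$, etc.), and then invokes Weyl and Davis--Kahan with gap $\hat\lambda^{(u)}_3-\hat\lambda^{(u)}_4=4(\tau_c-\tau_s)/\tau_1$ to get the stated bounds. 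You instead observe that the constraint $\tau_1+\tau_c+\tau_s=1$ makes the row sums and hence $D^{(u)}$ \emph{exact}, and that the $\mathbb Z_2\times\mathbb Z_2$ (color $\times$ shape) circulant structure of the $4\times4$ block diagonalizes it in closed form with eigenvalues $1,(1-2\tau_s)^2,(1-2\tau_c)^2,(2\tau_1-1)^2$. This yields the top-3 eigenspace \emph{exactly} equal to $\mathrm{span}(\hat U^{(u)})$, so the subspace error is literally zero; the eigenvalue estimate then reduces to a Taylor comparison of $(1-2\tau_s)^2$ with $1-4\tau_s/\tau_1$. What your route buys is a sharper conclusion (zero subspace error) and a transparent explanation of where the $\tau_c-\tau_s$ gap comes from; what the paper's route buys is uniformity with the proof of Theorem~\ref{th:sup_toy_label}, where the labeled perturbation destroys the exact row-sum and symmetry structure so that the approximate-then-perturb template is genuinely needed.
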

\begin{proof}
Similar to the proof of Theorem~\ref{th:sup_toy_label},  up to error $O(({\tau_c \over \tau_1})^2)$, we have the following equation,

$$  \widehat{\eta_u A^{(u)}}  =  \tau_1^2 \left[\begin{array}{cccccc}
    1 & 2{\tau_s \over \tau_1} & 2{\tau_c \over \tau_1} & 0 & 0 & 0\\
    2{\tau_s \over \tau_1} & 1 & 0 & 2{\tau_c \over \tau_1} & 0 & 0\\
    2{\tau_c \over \tau_1} & 0 & 1 & 2{\tau_s \over \tau_1} & 0 & 0\\
    0 & 2{\tau_c \over \tau_1} & 2{\tau_s \over \tau_1} & 1 & 0 & 0\\
    0 & 0 & 0 & 0 & 2 & 2\\
    0 & 0 & 0 & 0 & 2 & 2\\
\end{array}\right].$$ 

$$
\widehat{D^{(u)}} = \tau_1^2 
 diag\left(\left[1+2{\tau_s \over \tau_1}+2{\tau_c \over \tau_1}, 1+2{\tau_s \over \tau_1}+2{\tau_c \over \tau_1}, 1+2{\tau_s \over \tau_1}+2{\tau_c \over \tau_1}, 1+2{\tau_s \over \tau_1}+2{\tau_c \over \tau_1}, 4,  4 \right]\right).
$$

$$
\widehat{D^{(u)-{1\over 2}}} = {1\over \tau_1} diag\left(\left[1-{\tau_s \over \tau_1}-{\tau_c \over \tau_1}, 1-{\tau_s \over \tau_1}-{\tau_c \over \tau_1}, 1-{\tau_s \over \tau_1}-{\tau_c \over \tau_1}, 1-{\tau_s \over \tau_1}-{\tau_c \over \tau_1}, 2,  2 \right]\right).
$$

$$  \widehat{D^{(u)-{1\over 2}}} \widehat{\eta_u A^{(u)}} \widehat{D^{(u)-{1\over 2}}}  =   \left[\begin{array}{cccccc}
    1-2{\tau_s \over \tau_1}-2{\tau_c \over \tau_1} & 2{\tau_s \over \tau_1} & 2{\tau_c \over \tau_1} & 0 & 0 & 0\\
    2{\tau_s \over \tau_1} & 1-2{\tau_s \over \tau_1}-2{\tau_c \over \tau_1} & 0 & 2{\tau_c \over \tau_1} & 0 & 0\\
    2{\tau_c \over \tau_1} & 0 & 1-2{\tau_s \over \tau_1}-2{\tau_c \over \tau_1} & 2{\tau_s \over \tau_1} & 0 & 0\\
    0 & 2{\tau_c \over \tau_1} & 2{\tau_s \over \tau_1} & 1-2{\tau_s \over \tau_1}-2{\tau_c \over \tau_1} & 0 & 0\\
    0 & 0 & 0 & 0 & {1\over 2} & {1\over 2}\\
    0 & 0 & 0 & 0 & {1\over 2} & {1\over 2}\\
\end{array}\right].$$

Let $\hat{\lambda}^{(u)}_1, \dots, \hat{\lambda}^{(u)}_6 $ be six eigenvalue of $ \widehat{D^{(u)-{1\over 2}}} \widehat{\eta_u A^{(u)}} \widehat{D^{(u)-{1\over 2}}}$, and $\hat{v}^{(u)}_1, \dots, \hat{v}^{(u)}_6 $ be corresponding eigenvectors. By direct calculation we have 
$$
\hat{\lambda}^{(u)}_1 = 1, ~~\hat{\lambda}^{(u)}_2 = 1, ~~\hat{\lambda}^{(u)}_3 = 1-4 {\tau_s \over \tau_1}, ~~\hat{\lambda}^{(u)}_4 = 1-4 {\tau_c \over \tau_1}, ~~\hat{\lambda}^{(u)}_5 = 1-4 {\tau_s \over \tau_1} -4 {\tau_c \over \tau_1}, ~~\hat{\lambda}^{(u)}_6 = 0
$$ 
and corresponding eigenvector as 
\begin{align*}
    & \hat{v}^{(u)}_1 = [0,0,0,0,1,1], \\
    & \hat{v}^{(u)}_2 = [1,1,1,1,0,0], \\
    &\hat{v}^{(u)}_3 = [1,-1,1,-1,0,0], \\
    &\hat{v}^{(u)}_4 = [1,1,-1,-1,0,0], \\
    &\hat{v}^{(u)}_5 = [1,-1,-1,1,0,0], \\
    &\hat{v}^{(u)}_6 = [0,0,0,0,1,-1].
\end{align*}

Then, by Weyl's Theorem, for $i \in \{1,2,3\}$, we have 
$$|\lambda^{(u)}_i - \hat{\lambda}^{(u)}_i| \le \left\|{D^{(u){{-{1\over 2}}}}} {\eta_u A^{(u)}} {D^{(u){{-{1\over 2}}}}} - \widehat{D^{(u)-{1\over 2}}} \widehat{\eta_u A^{(u)}} \widehat{D^{(u)-{1\over 2}}} \right\|_2 \le O(({\tau_c \over \tau_1})^2 ).  $$
By Davis-Kahan theorem, we have 
\begin{align*}
    \|\sin(U^{(u)}, \hat{U}^{(u)})\|_F \le { O(({\tau_c \over \tau_1})^2 )\over 4({\tau_c \over \tau_1} - {\tau_s \over \tau_1})} \le O({\tau_c^2 \over \tau_1(\tau_c -\tau_s)}).
\end{align*}
We finish the proof. 
\end{proof}

\newpage
\section{Technical Details for Main Theory}

\subsection{Notation}
We let $\mathbf{1}_n$, $\mathbf{0}_n$ be the $n$-dimensional vector with all 1 or 0 values respectively. $\mathbf{1}_{m\times n}$, $\mathbf{0}_{m\times n}$ are similarly defined for $m$-by-$n$ matrix. $I_n$ is the identity matrix with shape $n\times n$. For any matrix $V$, $V_{(i,j)}$ indictes the value at $i$-th row and $j$-th column of $V$. If the matrix is subscripted like $V_k$, we use a comma in-between like $V_{k, (i,j)}$. Similarly, $\*v_{(i)}$ and $\*v_{k, (i)}$ are the $i$-th value for vector $\*v$ and $\*v_{k}$ respectively. 
$[n]$ is used to abbreviate the set $\{1, 2, ..., n\}$.

\subsection{Matrix Form of K-means and the Derivative}
\label{sec:sup_matrix_kmeans}
Recall that we defined the K-means clustering measure of features in Sec.~\ref{sec:theory}:
\begin{equation}
    \mathcal{M}_{kms}(\Pi, Z) = \sum_{\pi\in \Pi}  \sum_{i \in \pi}\left\|\*z_i-\boldsymbol{\mu}_\pi\right\|^2 / \sum_{\pi\in \Pi}  |\pi|\left\|\boldsymbol{\mu}_\pi-\boldsymbol{\mu}_\Pi\right\|^2,
\label{eq:sup_def_kms_measure}
\end{equation}
where the numerator measures the intra-class distance:
\begin{equation}
    \mathcal{M}_{intra}(\Pi, Z) = \sum_{\pi\in \Pi}  \sum_{i \in \pi}\left\|\*z_i-\boldsymbol{\mu}_\pi \right\|^2, 
\label{eq:sup_def_align_measure}
\end{equation}
and the denominator measures the inter-class distance: 
\begin{equation}
    \mathcal{M}_{inter}(\Pi, Z) =  \sum_{\pi\in \Pi}  |\pi|\left\|\boldsymbol{\mu}_\pi-\boldsymbol{\mu}_\Pi\right\|^2.
\label{eq:sup_def_sep_measure}
\end{equation}
We will show next how to convert the intra-class and the inter-class measure into a matrix form, which is desirable for analysis. 

\paragraph{Intra-class measure.} Note that the $K$-means intra-class measure can be rewritten in a matrix form:

$$\mathcal{M}_{intra}(\Pi, Z)= \|Z - H_\Pi Z\|^2_F,$$

where $H_\Pi$ is a matrix to convert $Z$ to mean vectors w.r.t clusters defined by $\Pi$. Without losing the generality, we assume $Z$ is ordered according to the partition in $\Pi$ --- first $|\pi_1|$ vectors are in $\pi_1$, next $|\pi_2|$ vectors are in $\pi_2$, etc. Then  $H_\Pi$ is given by: 

$$H_\Pi = \left[\begin{array}{cccc}
    \frac{1}{|\pi_1|}\mathbf{1}_{|\pi_1| \times |\pi_1|} & \mathbf{0} & ... & \mathbf{0}\\
     \mathbf{0} & \frac{1}{|\pi_2|}\mathbf{1}_{|\pi_2| \times |\pi_2|} & ... & \mathbf{0} \\
     ... & ... & ... & ... \\
     \mathbf{0} & \mathbf{0}  & ... & \frac{1}{|\pi_k|}\mathbf{1}_{|\pi_k| \times |\pi_k|}
\end{array}\right].$$

 Going further, we have:
\begin{align*}
    \mathcal{M}_{intra}(\Pi, Z) &= \|Z - H_\Pi Z\|^2_F \\
    &= \operatorname{Tr}((I-H_\Pi)^2ZZ^{\top}) \\
    &= \operatorname{Tr}((I-2H_\Pi + H_\Pi^2)ZZ^{\top}) \\
    &= \operatorname{Tr}((I-H_\Pi)ZZ^{\top}).
\end{align*}

\paragraph{Inter-class measure.}
The inter-class  measure can be equivalently given by: 
$$\mathcal{M}_{inter}(\Pi, Z)= \|H_\Pi Z - \frac{1}{N}\mathbf{1}_{N \times N} Z\|^2_F,$$
where $H_{\Pi}$ is defined as above. And we can also derive:
\begin{align*}
    \mathcal{M}_{inter}(\Pi, Z) &= \|H_\Pi Z - \frac{1}{N}\mathbf{1}_{N \times N} Z\|^2_F\\
    &= \operatorname{Tr}((H_\Pi - \frac{1}{N}\mathbf{1}_{N \times N})^2ZZ^{\top})\\
    &= \operatorname{Tr}((H_\Pi^2 - \frac{2}{N}H_\Pi \mathbf{1}_{N \times N} + \frac{1}{N^2} \mathbf{1}_{N \times N}^2)ZZ^{\top})\\
    &= \operatorname{Tr}((H_\Pi - \frac{1}{N} \mathbf{1}_{N \times N} )ZZ^{\top}).
\end{align*}

\subsection{K-means Measure Has the Same Order as K-means Error}
\label{sec:sup_cls_err}

\begin{theorem} 
(Recap of Theorem~\ref{th:cluster_err_bound})
We define the $\xi_{\pi \rightarrow \pi'}$ as the index of samples that is from class division $\pi$ however is closer to $\boldsymbol{\mu}_{\pi'}$ than $\boldsymbol{\mu}_{\pi}$. 
In other word, $\xi_{\pi \rightarrow \pi'} = \{i: i \in \pi, \|\*z_i - \boldsymbol{\mu}_{\pi}\|_2 \geq \|\*z_i - \boldsymbol{\mu}_{\pi'}\|_2\}$. Assuming $|\xi_{\pi \rightarrow \pi'}| > 0$, we define below the  clustering error ratio from $\pi$ to $\pi'$ as $\mathcal{E}_{\pi \rightarrow \pi'}$ and the overall cluster error ratio $\mathcal{E}_{\Pi, Z}$ as the \textbf{Harmonic Mean} of $\mathcal{E}_{\pi \rightarrow \pi'}$ among all class pairs:
$$
\mathcal{E}_{\Pi, Z} = C(C-1)/\left(\sum_{\substack{\pi \neq \pi'\\ \pi, \pi' \in \Pi}}\frac{1}{\mathcal{E}_{\pi \rightarrow \pi'}}\right), \text{where }\mathcal{E}_{\pi \rightarrow \pi'} = \frac{|\xi_{\pi \rightarrow \pi'}|}{|\pi'| + |\pi|}.
$$
The K-means measure $\mathcal{M}_{kms}(\Pi, Z)$ has the same order as the Harmonic Mean of the cluster error ratio between all cluster pairs:
     $$\mathcal{E}_{\Pi, Z} = O(\mathcal{M}_{kms}(\Pi, Z)).$$
    \vspace{-0.5cm}
\label{th:sup_cluster_err_bound}
\end{theorem}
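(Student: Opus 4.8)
The plan is to bound the harmonic mean $\mathcal{E}_{\Pi,Z}$ from above by reducing everything to a single inequality on the sum of reciprocals, namely $\sum_{\pi \neq \pi'} 1/\mathcal{E}_{\pi \to \pi'} \geq 1/\mathcal{M}_{kms}(\Pi,Z)$. Once this is established, the definition $\mathcal{E}_{\Pi,Z} = C(C-1)/(\sum_{\pi \neq \pi'} 1/\mathcal{E}_{\pi \to \pi'})$ immediately yields $\mathcal{E}_{\Pi,Z} \leq C(C-1)\,\mathcal{M}_{kms}(\Pi,Z) = O(\mathcal{M}_{kms}(\Pi,Z))$, treating the number of classes $C$ as a constant. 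We may assume $\mathcal{M}_{inter}(\Pi,Z) > 0$ (otherwise $\mathcal{M}_{kms} = \infty$ and the claim is vacuous) and $\mathcal{M}_{intra}(\Pi,Z) > 0$, since $\mathcal{M}_{intra} = 0$ together with the hypothesis $|\xi_{\pi \to \pi'}| > 0$ for every pair forces all cluster means to coincide, a situation already covered by the previous case.

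The first key step is a pointwise geometric estimate. Fix $\pi \neq \pi'$ and a sample $i \in \xi_{\pi \to \pi'}$, so $\|\*z_i - \boldsymbol{\mu}_\pi\| \geq \|\*z_i - \boldsymbol{\mu}_{\pi'}\|$. The triangle inequality gives $\|\boldsymbol{\mu}_\pi - \boldsymbol{\mu}_{\pi'}\| \leq \|\*z_i - \boldsymbol{\mu}_\pi\| + \|\*z_i - \boldsymbol{\mu}_{\pi'}\| \leq 2\|\*z_i - \boldsymbol{\mu}_\pi\|$, hence $\|\*z_i - \boldsymbol{\mu}_\pi\|^2 \geq \tfrac{1}{4}\|\boldsymbol{\mu}_\pi - \boldsymbol{\mu}_{\pi'}\|^2$. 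Summing over $i \in \xi_{\pi \to \pi'} \subseteq \pi$ and discarding the remaining nonnegative intra-class terms, $\mathcal{M}_{intra}(\Pi,Z) \geq \sum_{i \in \pi}\|\*z_i - \boldsymbol{\mu}_\pi\|^2 \geq \tfrac{1}{4}|\xi_{\pi \to \pi'}|\,\|\boldsymbol{\mu}_\pi - \boldsymbol{\mu}_{\pi'}\|^2$. Using $\mathcal{E}_{\pi \to \pi'} = |\xi_{\pi \to \pi'}|/(|\pi| + |\pi'|)$, this rearranges to $1/\mathcal{E}_{\pi \to \pi'} \geq (|\pi| + |\pi'|)\,\|\boldsymbol{\mu}_\pi - \boldsymbol{\mu}_{\pi'}\|^2 / (4\,\mathcal{M}_{intra}(\Pi,Z))$, an inequality that stays valid when $\boldsymbol{\mu}_\pi = \boldsymbol{\mu}_{\pi'}$ (the right side is then zero).

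The second key step is summation together with a combinatorial identity. Summing the last inequality over all ordered pairs $\pi \neq \pi'$ reduces the goal to lower bounding $\sum_{\pi \neq \pi'}(|\pi| + |\pi'|)\|\boldsymbol{\mu}_\pi - \boldsymbol{\mu}_{\pi'}\|^2$ by a constant multiple of $\mathcal{M}_{inter}$. Since distinct clusters are disjoint subsets of $[N]$ we have $N \geq |\pi|$ and $N \geq |\pi'|$, so $N(|\pi| + |\pi'|) \geq |\pi'||\pi| + |\pi||\pi'| = 2|\pi||\pi'|$, i.e. $|\pi| + |\pi'| \geq \tfrac{2}{N}|\pi||\pi'|$. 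I would then prove the identity $\sum_{\pi,\pi'}|\pi||\pi'|\,\|\boldsymbol{\mu}_\pi - \boldsymbol{\mu}_{\pi'}\|^2 = 2N\,\mathcal{M}_{inter}(\Pi,Z)$ by expanding the square and using $\sum_\pi |\pi|\boldsymbol{\mu}_\pi = \sum_i \*z_i = N\boldsymbol{\mu}_\Pi$ together with $\mathcal{M}_{inter} = \sum_\pi |\pi|\|\boldsymbol{\mu}_\pi\|^2 - N\|\boldsymbol{\mu}_\Pi\|^2$. Chaining the three estimates, $\sum_{\pi \neq \pi'} 1/\mathcal{E}_{\pi \to \pi'} \geq \tfrac{1}{4\mathcal{M}_{intra}}\cdot\tfrac{2}{N}\cdot 2N\,\mathcal{M}_{inter} = \mathcal{M}_{inter}/\mathcal{M}_{intra} = 1/\mathcal{M}_{kms}$, and substituting into the harmonic-mean formula completes the argument.

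The main obstacle is not any single inequality but the bookkeeping: $\mathcal{E}_{\pi \to \pi'}$ is normalized by $|\pi| + |\pi'|$ whereas the natural spectral quantity $\mathcal{M}_{inter}$ couples to $|\pi||\pi'|$, so one must supply exactly the right $O(1/N)$ conversion above and check that it costs only a constant factor; a secondary point of care is the degenerate cases ($\boldsymbol{\mu}_\pi = \boldsymbol{\mu}_{\pi'}$, $\mathcal{M}_{intra} = 0$, $\mathcal{M}_{inter} = 0$), all handled either by the ``zero on both sides'' convention or excluded by the hypothesis $|\xi_{\pi \to \pi'}| > 0$. An alternative would route through the matrix identities $\mathcal{M}_{intra} = \operatorname{Tr}((I - H_\Pi)ZZ^\top)$ and $\mathcal{M}_{inter} = \operatorname{Tr}((H_\Pi - \tfrac{1}{N}\mathbf{1}_{N\times N})ZZ^\top)$ from Appendix~\ref{sec:sup_matrix_kmeans}, but the direct geometric argument above appears cleaner and avoids eigenvalue comparisons.
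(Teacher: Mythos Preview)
Your proof is correct and follows essentially the same skeleton as the paper's: the geometric estimate $\|\*z_i-\boldsymbol{\mu}_\pi\|^2\ge\tfrac14\|\boldsymbol{\mu}_\pi-\boldsymbol{\mu}_{\pi'}\|^2$ for $i\in\xi_{\pi\to\pi'}$, yielding $\mathcal{M}_{intra}\ge\tfrac14|\xi_{\pi\to\pi'}|\|\boldsymbol{\mu}_\pi-\boldsymbol{\mu}_{\pi'}\|^2$, and then a combinatorial link between the pairwise center distances and $\mathcal{M}_{inter}$. The one substantive difference is in that second step. The paper upper-bounds $\mathcal{M}_{inter}$ via convexity, writing $\boldsymbol{\mu}_\pi-\boldsymbol{\mu}_\Pi=\tfrac1N\sum_{\pi'}|\pi'|(\boldsymbol{\mu}_\pi-\boldsymbol{\mu}_{\pi'})$ and applying $\|\sum\cdot\|^2\le C\sum\|\cdot\|^2$ to obtain $\mathcal{M}_{inter}\le\tfrac{C}{N^2}\sum_{\pi\neq\pi'}|\pi||\pi'|(|\pi|+|\pi'|)\|\boldsymbol{\mu}_\pi-\boldsymbol{\mu}_{\pi'}\|^2$; it then uses $|\pi||\pi'|\le N^2$ to pass from $|\pi|+|\pi'|$ to the weighted form. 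You instead invoke the exact variance identity $\sum_{\pi,\pi'}|\pi||\pi'|\|\boldsymbol{\mu}_\pi-\boldsymbol{\mu}_{\pi'}\|^2=2N\,\mathcal{M}_{inter}$ together with the elementary bound $|\pi|+|\pi'|\ge\tfrac{2}{N}|\pi||\pi'|$. Your route avoids the Cauchy--Schwarz slack and lands on the sharper constant $\mathcal{E}_{\Pi,Z}\le C(C-1)\,\mathcal{M}_{kms}$ rather than the paper's $4C^2(C-1)\,\mathcal{M}_{kms}$; both are $O(\mathcal{M}_{kms})$ for fixed $C$, so the conclusion is the same, but your combinatorial step is the tidier of the two.
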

\begin{proof}
We have the following inequality for $i \in \xi_{\pi \rightarrow \pi'}$:
$$ 4 \|\*z_i - \boldsymbol{\mu}_{\pi}\|^2_2 \geq  2\|\*z_i - \boldsymbol{\mu}_{\pi}\|^2_2 + 2\|\*z_i - \boldsymbol{\mu}_{\pi'}\|^2_2 \geq \|\boldsymbol{\mu}_{\pi} - \boldsymbol{\mu}_{\pi'}\|^2_2. $$

Then we have: 
\begin{align*}
\mathcal{M}_{intra}(\Pi, Z) &=  \sum_{\pi \in \Pi}\sum_{i \in \pi}  \|\*z_i - \boldsymbol{\mu}_{\pi}\|^2_2 \\ &\geq 
    \sum_{i \in \pi}  \|\*z_i - \boldsymbol{\mu}_{\pi}\|^2_2 \\ &\geq  \sum_{i \in \xi_{\pi \rightarrow \pi'}}  \|\*z_i - \boldsymbol{\mu}_{\pi}\|^2_2 
    \\ &\geq  \frac{1}{4}\sum_{i \in \xi_{\pi \rightarrow \pi'}} 
 \|\boldsymbol{\mu}_{\pi} - \boldsymbol{\mu}_{\pi'}\|^2_2 \\ 
 &= \frac{1}{4}|\xi_{\pi \rightarrow \pi'}| \|\boldsymbol{\mu}_{\pi} - \boldsymbol{\mu}_{\pi'}\|^2_2.
\end{align*}

Note that the inter-class measure can be decomposed into the summation of cluster center distances:
\begin{align*}
    \mathcal{M}_{inter}(\Pi, Z) &=  \sum_{\pi \in \Pi}  |\pi|\left\|\boldsymbol{\mu}_{\pi} -\boldsymbol{\mu}_\Pi\right\|_2^2 
    \\&= \sum_{\pi\in \Pi}  \frac{|\pi|}{N^2} \left\|(\sum_{\pi' \in \Pi} |\pi'|)\boldsymbol{\mu}_{\pi} -\sum_{\pi' \in \Pi} |\pi'| \boldsymbol{\mu}_{\pi'} \right\|_2^2 
    \\& \leq \frac{C}{N^2} \sum_{\pi\in \Pi} |\pi| \sum_{\pi' \in \Pi} |\pi'|^2 \left\|\boldsymbol{\mu}_{\pi} -\boldsymbol{\mu}_{\pi'} \right\|_2^2
    \\&= \frac{C}{N^2} \sum_{\pi \neq \pi'} |\pi||\pi'|(|\pi'| + |\pi|) \left\|\boldsymbol{\mu}_{\pi} -\boldsymbol{\mu}_{\pi'} \right\|_2^2,
\end{align*}
where $\sum_{\pi \neq \pi'}$ is enumerating over any two different class partitions in $\Pi$. Combining together, we have:
\begin{align*}
C(C-1) / \left(\sum_{\pi \neq \pi'} \frac{(|\pi'| + |\pi|)}{|\xi_{\pi \rightarrow \pi'}|} \right) & = 
C(C-1) / \left(\sum_{\pi \neq \pi'} \frac{(|\pi'| + |\pi|)\|\boldsymbol{\mu}_{\pi} -\boldsymbol{\mu}_{\pi'} \|_2^2}{|\xi_{\pi \rightarrow \pi'}|\|\boldsymbol{\mu}_{\pi} -\boldsymbol{\mu}_{\pi'} \|_2^2} \right) 
     \\ & \leq 
    C(C-1) / \left(\sum_{\pi \neq \pi'} \frac{|\pi'||\pi|(|\pi'| + |\pi|)\|\boldsymbol{\mu}_{\pi} -\boldsymbol{\mu}_{\pi'} \|_2^2}{N^2|\xi_{\pi \rightarrow \pi'}|\|\boldsymbol{\mu}_{\pi} -\boldsymbol{\mu}_{\pi'} \|_2^2} \right) 
    \\&\leq  C(C-1) / \left( \frac{\mathcal{M}_{inter}(\Pi, Z)}{4C\mathcal{M}_{intra}(\Pi, Z)} \right) 
    \\&= O(\mathcal{M}_{kms}(\Pi, Z)).
\end{align*}

\end{proof}

\subsection{Proof of Theorem~\ref{th:main}}
\label{sec:sup_proof_main}
We start by providing more details to supplement Sec.~\ref{sec:feature}.

\noindent \textbf{Matrix perturbation by adding labels.} Recall that we define in Eq.~\ref{eq:adj_orl} that the adjacency matrix  is the unlabeled one $A^{(u)}$ plus the perturbation of the label information $A^{(l)}$:
\begin{equation*}
    A = \eta_{u} A^{(u)} +  \eta_{l} A^{(l)}.
\end{equation*}
We study the perturbation from two aspects: (1) The direction of the perturbation which is given by $A^{(l)}$, (2) The perturbation magnitude $\eta_{l}$.
We first consider the perturbation direction  $A^{(l)}$ and recall that we defined the concrete form in Eq.~\ref{eq:def_wxx_b}:
\begin{align*}
A_{x x^{\prime}}^{(l)} = w^{(l)}_{x x^{\prime}} \triangleq \sum_{i \in \mathcal{Y}_l}\mathbb{E}_{\bar{x}_{l} \sim {\mathcal{P}_{l_i}}} \mathbb{E}_{\bar{x}'_{l} \sim {\mathcal{P}_{l_i}}} \mathcal{T}(x | \bar{x}_{l}) \mathcal{T}\left(x' | \bar{x}'_{l}\right). 
\end{align*}
For simplicity, we consider $|\mathcal{Y}_l| = 1$ in this theoretical analysis. Then we observe that $A_{x x^{\prime}}^{(l)}$ is a rank-1 matrix can be written as 
\begin{align*}
A_{x x^{\prime}}^{(l)} = \mathfrak{l} \mathfrak{l}^{\top},  
\end{align*}
where $\mathfrak{l} \in \mathbb{R}^{N\times 1}$ with $(\mathfrak{l})_x = \mathbb{E}_{\bar{x}_{l} \sim {\mathcal{P}_{l_1}}} \mathcal{T}(x | \bar{x}_{l})$. And we define $D_{l} \triangleq diag(\mathfrak{l})$.  

\noindent \textbf{The perturbation function of representation.} We then consider a more generalized form for the adjacency matrix: 
\begin{equation*}
    A(\delta) \triangleq \eta_u A^{(u)} + \delta \mathfrak{l} \mathfrak{l}^{\top}.
\end{equation*}
where we treat the adjacency matrix as a function of the  
``labeling perturbation'' degree $\delta$. It is clear that  $A(0) = \eta_{u} A^{(u)} $ which is the scaled adjacency matrix for the unlabeled case and that $A(\eta_{l}) = A$. When we let the adjacency matrix be a function of $\delta$, the normalized form and the derived feature representation should also be the function of $\delta$. We proceed by defining these terms.

Without losing the generality, 
we let $diag( \mathbf{1}_N^{\top}A(0)) = I_{N}$ which means the node in the unlabeled graph has equal degree. We then have: 
$$D(\delta) \triangleq diag(\mathbf{1}_N^{\top}A(\delta)) = I_N + \delta D_{l}.$$

The normalized adjacency matrix is given by: $$\tilde{A}(\delta) \triangleq D(\delta)^{-\frac{1}{2}}A(\delta)D(\delta)^{-\frac{1}{2}}.$$

For feature representation $Z(\delta)$, it is derived from the top-$k$ SVD components of $\tilde{A}(\delta)$. 
Specifically, we have:
$$Z(\delta)Z(\delta)^{\top} = D(\delta)^{-\frac{1}{2}} \tilde{A}_k(\delta) D(\delta)^{-\frac{1}{2}} = D(\delta)^{-\frac{1}{2}} \sum_{j=1}^{k} \lambda_j(\delta) \Phi_j(\delta)
    D(\delta)^{-\frac{1}{2}},$$ 
where we define $\tilde{A}_k(\delta)$ as the top-$k$ SVD components of $\tilde{A}(\delta)$ and can be further written as $\tilde{A}_k(\delta) = \sum_{j=1}^{k} \lambda_j(\delta) \Phi_j(\delta)$. Here the $\lambda_j(\delta)$ is the $j$-th singular value and $\Phi_j(\delta)$ is the $j$-th singular projector ($\Phi_j(\delta) = v_j(\delta)v_j(\delta)^{\top}$) defined by the $j$-th singular vector $v_j(\delta)$. 
\textbf{For brevity, when $\delta=0$, we remove the suffix $(0)$} since it is equivalent to the unperturbed version of notations. For example, we let 
$$\Tilde{A}(0) = \Tilde{A}^{(u)}, Z(0) = Z^{(u)}, \lambda_i(0) = \lambda_i^{(u)}, v_i(0) = v_i^{(u)}, \Phi_i(0) = \Phi_i^{(u)}.$$

\begin{theorem} (Recap of Th.~\ref{th:main}) Denote $V_{\varnothing}^{(u)} \in \mathbb{R}^{N \times (N-k)}$ as the \textit{null space} of $V_k^{(u)}$ and $\Tilde{A}_k^{(u)} = V_k^{(u)} \Sigma_k^{(u)} V_k^{(u)\top}$ as the rank-$k$ approximation for $\Tilde{A}^{(u)}$.  
 Given $\delta, \eta_{1} > 0 $ and let 
 $\mathcal{G}_k$ as the spectral gap between $k$-th and $k+1$-th singular values of $\Tilde{A}^{(u)}$, we have: 
\begin{align*}
    &\Delta_{kms}(\delta) =  \delta \eta_{1} \operatorname{Tr} \left( \Upsilon \left(V_k^{(u)} V_k^{(u)\top} \mathfrak{l} \mathfrak{l}^{\top}(I +   V_\varnothing^{(u)}  V_\varnothing^{(u)\top}) - 2\Tilde{A}_k^{(u)} diag(\mathfrak{l}) \right)\right) + O(\frac{1}{\mathcal{G}_k} + \delta^2),
\end{align*}
where $diag(\cdot)$ converts the vector to the corresponding diagonal matrix and $\Upsilon \in \mathbb{R}^{N\times N}$ is a matrix encoding the \textbf{ground-truth clustering structure} in the way that $\Upsilon_{xx'} > 0$ if $x$ and $x'$ has the same label and  $\Upsilon_{xx'} < 0$ otherwise.
\label{th:sup_main}
\end{theorem}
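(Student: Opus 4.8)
The plan is to linearize the K-means measure in the perturbation strength $\delta$ and then evaluate the linear term with spectral-perturbation theory. Write $M(\delta)=Z(\delta)Z(\delta)^{\top}=D(\delta)^{-1/2}\tilde A_k(\delta)D(\delta)^{-1/2}$ for the Gram matrix of the learned features. By the matrix identities in Appendix~\ref{sec:sup_matrix_kmeans}, $\mathcal{M}_{kms}(\Pi,Z(\delta))=\operatorname{Tr}((I-H_\Pi)M(\delta))/\operatorname{Tr}((H_\Pi-\tfrac1N\mathbf{1}_{N\times N})M(\delta))$, a ratio of linear functionals of $M(\delta)$. First I would Taylor-expand this ratio about $\delta=0$: setting $M(\delta)=M^{(u)}+\delta M'+O(\delta^{2})$, the quotient rule gives $\Delta_{kms}(\delta)=\delta\,\eta_{1}\operatorname{Tr}(\Upsilon M')+O(\delta^{2})$, where $\eta_{1}>0$ is the normalization coming from the denominator of the K-means ratio (explicitly $1/\mathcal{M}_\text{inter-class}(\Pi,Z^{(u)})$, up to the convention the appendix fixes) and $\Upsilon=\mathcal{M}_{kms}(\Pi,Z^{(u)})\,(H_\Pi-\tfrac1N\mathbf{1}_{N\times N})-(I-H_\Pi)$ is, up to that positive scalar, the negative gradient of the ratio in the Gram variable. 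Reading off the entries of $H_\Pi$ and $\mathbf{1}_{N\times N}$ shows $\Upsilon_{xx'}>0$ when $x,x'$ carry the same label and $\Upsilon_{xx'}<0$ otherwise, which is the announced structure; its concrete form is what Appendix~\ref{sec:sup_proof_main} records.

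The heart of the argument is computing $M'$. Since $D(\delta)=I_N+\delta\operatorname{diag}(\mathfrak{l})$, we have $D(\delta)^{-1/2}=I_N-\tfrac{\delta}{2}\operatorname{diag}(\mathfrak{l})+O(\delta^{2})$, so that $M'=-\tfrac12\big(\operatorname{diag}(\mathfrak{l})\tilde A_k^{(u)}+\tilde A_k^{(u)}\operatorname{diag}(\mathfrak{l})\big)+\frac{d}{d\delta}\tilde A_k(\delta)\big|_{0}$, using $\tilde A(0)=\tilde A^{(u)}$ under the normalization $D(0)=I_N$. For the last term I would invoke eigenprojector perturbation theory. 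Let $B:=\frac{d}{d\delta}\tilde A(\delta)\big|_{0}=\mathfrak{l}\mathfrak{l}^{\top}-\tfrac12\big(\operatorname{diag}(\mathfrak{l})\tilde A^{(u)}+\tilde A^{(u)}\operatorname{diag}(\mathfrak{l})\big)$, let $P_k=V_k^{(u)}V_k^{(u)\top}$, $P_\varnothing=V_\varnothing^{(u)}V_\varnothing^{(u)\top}$, and write $\tilde A_k(\delta)=P_k(\delta)\tilde A(\delta)$. Differentiating, $\frac{d}{d\delta}\tilde A_k\big|_{0}=\dot P_k\,\tilde A^{(u)}+P_k B$; expanding $\dot P_k=\sum_{i\le k<j}\tfrac{v_j^{\top}Bv_i}{\lambda_i-\lambda_j}(v_jv_i^{\top}+v_iv_j^{\top})$ and splitting $\tfrac{\lambda_i}{\lambda_i-\lambda_j}=1+\tfrac{\lambda_j}{\lambda_i-\lambda_j}$ with $|\lambda_j|\le 1$ and $|\lambda_i-\lambda_j|\ge\mathcal{G}_k$, the projector contribution collapses to $P_\varnothing B P_k$ modulo $O(1/\mathcal{G}_k)$, giving $\frac{d}{d\delta}\tilde A_k\big|_{0}=B P_k+P_kBP_\varnothing+O(1/\mathcal{G}_k)$. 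Substituting $B$, expanding all products of $P_k$, $P_\varnothing$, $\operatorname{diag}(\mathfrak{l})$, $\tilde A^{(u)}$, and repeatedly using $\operatorname{Tr}(\Upsilon X)=\operatorname{Tr}(\Upsilon X^{\top})$ (as $\Upsilon=\Upsilon^{\top}$) together with $\tilde A^{(u)}P_k=\tilde A_k^{(u)}$, the rank-one part should collect into $V_k^{(u)}V_k^{(u)\top}\mathfrak{l}\mathfrak{l}^{\top}(I+V_\varnothing^{(u)}V_\varnothing^{(u)\top})$ and the degree-correction part into $-2\tilde A_k^{(u)}\operatorname{diag}(\mathfrak{l})$, with the residual cross-terms folded into $O(1/\mathcal{G}_k)$; this rearrangement is the most delicate algebraic step and must be done with care.

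The remaining work is the error control. The $O(\delta^{2})$ term bundles the quadratic remainders of $D(\delta)^{-1/2}$, of $\tilde A(\delta)$, and of the quotient rule; the $O(1/\mathcal{G}_k)$ term bundles the eigenprojector corrections, which I would make rigorous via Weyl's inequality ($|\lambda_i(\delta)-\lambda_i|\le\delta\|B\|+O(\delta^{2})$) and the Davis--Kahan theorem ($\|P_k(\delta)-P_k\|=O(\delta\|B\|/\mathcal{G}_k)$), so that $\|\tilde A_k(\delta)-\tilde A_k^{(u)}-\delta\,\tfrac{d}{d\delta}\tilde A_k|_0\|=O(\delta^{2}+\delta/\mathcal{G}_k)$. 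The main obstacle is precisely this truncated-SVD perturbation: the rank-$k$ projection is only Lipschitz once a spectral gap is assumed, so one must separate cleanly the part of $\dot P_k$ that survives as $\mathcal{G}_k\to\infty$ (the clean $P_\varnothing B P_k$ block, coming from the ``$1$'' in $\lambda_i/(\lambda_i-\lambda_j)$) from the part that is genuinely $O(1/\mathcal{G}_k)$, and check that no hidden dependence on $N$ or on $\|\mathfrak{l}\|$ contaminates the two remainder terms. Carrying the normalization constants consistently through the argument — which factor of $\mathcal{M}_\text{inter-class}$ lands in $\eta_1$ versus in $\Upsilon$ — is routine bookkeeping but must respect the $D(0)=I_N$ convention.
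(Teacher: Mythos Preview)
Your proposal is correct and follows essentially the same route as the paper's Lemma~\ref{lemma:sup_kms_derivative}: linearize the K-means ratio via the quotient rule (which produces exactly the paper's $\eta_1=1/\mathcal{M}_{\text{inter}}$ and $\Upsilon=(1+\eta_2)H_\Pi-I-\tfrac{\eta_2}{N}\mathbf{1}\mathbf{1}^\top$), differentiate $D(\delta)^{-1/2}$ and $\tilde A(\delta)$ at $\delta=0$, apply first-order eigenprojector perturbation, and split $\lambda_i/(\lambda_i-\lambda_j)=1+O(1/\mathcal{G}_k)$ for $i\le k<j$. Your packaging via $\tilde A_k(\delta)=P_k(\delta)\tilde A(\delta)$ is slightly more streamlined than the paper's term-by-term differentiation of $\sum_j\lambda_j(\delta)\Phi_j(\delta)$ (it bypasses the within-$k$ cancellations the paper works out by hand), but the content is identical. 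One small slip: in the paper $\mathcal{G}_k=\lambda_k^{(u)}/\lambda_{k+1}^{(u)}$ is the spectral \emph{ratio}, not the difference, so the residual bound comes from $\lambda_j/(\lambda_i-\lambda_j)\le 1/(\mathcal{G}_k-1)$ (equivalently the paper's geometric-series expansion) rather than from ``$|\lambda_i-\lambda_j|\ge\mathcal{G}_k$''.
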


\begin{proof}
As we shown in Sec~\ref{sec:sup_matrix_kmeans}, we can now also write the K-means measure as the function of perturbation:  
$$\mathcal{M}_{kms}(\delta) = \frac{\operatorname{Tr}((I-H_\Pi)Z(\delta)Z(\delta)^{\top})}{\operatorname{Tr}((H_\Pi - \frac{1}{N} \mathbf{1}_{N \times N} )Z(\delta)Z(\delta)^{\top})}. $$
The proof is directly given by the following Lemma~\ref{lemma:sup_kms_derivative}.
\end{proof}

\begin{lemma} Let $\eta_{1}, \eta_{2}$ be two real values and $\Upsilon  = (1+\eta_{2})H_\Pi - I - \frac{\eta_{2}}{N} \mathbf{1}_{N}\mathbf{1}_{N}^{\top}$. Let  the spectrum gap $\mathcal{G}_k = \frac{\lambda^{(u)}_k}{\lambda^{(u)}_{k+1}}$, 
we have the derivative of the K-means measure evaluated at $\delta=0$: 
\begin{align*}
    [\mathcal{M}_{kms}(\delta)]'\Bigr|_{\delta=0} &= - \eta_{1} \operatorname{Tr} \left( \Upsilon \left(V_k^{(u)} V_k^{(u)\top} \mathfrak{l} \mathfrak{l}^{\top} - 2\Tilde{A}_k^{(u)} D_{l} +  V_k^{(u)} V_k^{(u)\top} \mathfrak{l}\mathfrak{l}^{\top} V_\varnothing^{(u)}  V_\varnothing^{(u)\top}\right)\right) + O(\frac{1}{\mathcal{G}_k}). 
\end{align*}
\label{lemma:sup_kms_derivative}
\end{lemma}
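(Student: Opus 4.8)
The identity is obtained in four moves: (i) reduce the derivative of the \emph{ratio} $\mathcal{M}_{kms}(\delta)$ to a single trace against $\Upsilon$; (ii) differentiate the closed form $Z(\delta)Z(\delta)^{\top}=D(\delta)^{-1/2}\tilde A_k(\delta)D(\delta)^{-1/2}$; (iii) resolve the derivative of the rank-$k$ truncation $\tilde A_k(\delta)$ at $\delta=0$ via first-order eigenprojector perturbation, controlling the error by the spectral gap $\mathcal{G}_k$; (iv) contract against the symmetric matrix $\Upsilon$ and collect terms. For step (i): writing $M(\delta)=Z(\delta)Z(\delta)^{\top}$ and using the matrix forms of the intra-/inter-class measures from Appendix~\ref{sec:sup_matrix_kmeans}, we have $\mathcal{M}_{kms}(\delta)=\operatorname{Tr}(B_1 M(\delta))/\operatorname{Tr}(B_2 M(\delta))$ with $B_1=I-H_\Pi$ and $B_2=H_\Pi-\tfrac1N\mathbf{1}_N\mathbf{1}_N^{\top}$. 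The quotient rule at $\delta=0$ then gives
\[
\left.[\mathcal{M}_{kms}(\delta)]'\right|_{\delta=0}\;=\;\frac{1}{\operatorname{Tr}(B_2 M(0))}\;\operatorname{Tr}\!\left(\bigl(B_1-\mathcal{M}_{kms}(0)\,B_2\bigr)\,M'(0)\right),
\]
and taking $\eta_1:=1/\operatorname{Tr}(B_2 M(0))=1/\mathcal{M}_{inter}(\Pi,Z^{(u)})$ and $\eta_2:=\mathcal{M}_{kms}(0)=\mathcal{M}_{kms}(\Pi,Z^{(u)})$, a one-line check shows $B_1-\eta_2 B_2=-\Upsilon$. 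So the task reduces to computing $M'(0)$ and then $-\eta_1\operatorname{Tr}(\Upsilon M'(0))$.

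\textbf{Differentiating $M$.} Since $A(\delta)=\eta_u A^{(u)}+\delta\,\mathfrak{l}\mathfrak{l}^{\top}$ and, by the chosen normalization, $D(0)=I_N$, $D(\delta)=I_N+\delta D_l$, one gets $\frac{d}{d\delta}D(\delta)^{-1/2}$ at $\delta=0$ equal to $-\tfrac12 D_l$ and $\tilde A'(0)=E:=-\tfrac12\bigl(D_l\tilde A^{(u)}+\tilde A^{(u)}D_l\bigr)+\mathfrak{l}\mathfrak{l}^{\top}$. Because $\tilde A^{(u)}$ is a normalized adjacency matrix ($\lambda_1^{(u)}\le 1$), $\|\tilde A^{(u)}-\tilde A_k^{(u)}\|_2=\lambda^{(u)}_{k+1}\le\lambda^{(u)}_k/\mathcal{G}_k\le 1/\mathcal{G}_k$, so $\tilde A^{(u)}$ may be replaced by $\tilde A_k^{(u)}$ inside $E$ at the cost of an $O(1/\mathcal{G}_k)$ term. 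The product rule on $M(\delta)=D(\delta)^{-1/2}\tilde A_k(\delta)D(\delta)^{-1/2}$ then yields $M'(0)=-\tfrac12\bigl(D_l\tilde A_k^{(u)}+\tilde A_k^{(u)}D_l\bigr)+\tilde A_k'(0)$, and it remains to evaluate $\tilde A_k'(0)$.

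\textbf{The crux: $\tilde A_k'(0)$ by eigenprojector perturbation.} Write $P:=V_k^{(u)}V_k^{(u)\top}$ and $Q:=V_\varnothing^{(u)}V_\varnothing^{(u)\top}=I-P$, and express $\tilde A_k(\delta)=P(\delta)\tilde A(\delta)P(\delta)$ for the top-$k$ spectral projector $P(\delta)$. Differentiating and using $\tilde A^{(u)}P=P\tilde A^{(u)}=\tilde A_k^{(u)}$ gives $\tilde A_k'(0)=P'(0)\tilde A_k^{(u)}+PEP+\tilde A_k^{(u)}P'(0)$. First-order perturbation theory gives $P'(0)=\sum_{i\le k<j}\frac{v_j^{(u)\top}E\,v_i^{(u)}}{\lambda^{(u)}_i-\lambda^{(u)}_j}\bigl(v_i^{(u)}v_j^{(u)\top}+v_j^{(u)}v_i^{(u)\top}\bigr)$; multiplying on the right by $\tilde A_k^{(u)}$ annihilates the $v_i^{(u)}v_j^{(u)\top}$ blocks and multiplies the remaining blocks by the weights $\lambda^{(u)}_i/(\lambda^{(u)}_i-\lambda^{(u)}_j)=1+O(1/\mathcal{G}_k)$, so $P'(0)\tilde A_k^{(u)}=QEP+O(1/\mathcal{G}_k)$ and, by transposing, $\tilde A_k^{(u)}P'(0)=PEQ+O(1/\mathcal{G}_k)$; hence $\tilde A_k'(0)=PEP+QEP+PEQ+O(1/\mathcal{G}_k)$. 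Expanding $E$ and using $P+Q=I$ together with $\tilde A_k^{(u)}Q=Q\tilde A_k^{(u)}=0$ to telescope the $D_l\tilde A_k^{(u)}$ and $\tilde A_k^{(u)}D_l$ pieces, this collapses to $\tilde A_k'(0)=-\tfrac12\bigl(D_l\tilde A_k^{(u)}+\tilde A_k^{(u)}D_l\bigr)+P\mathfrak{l}\mathfrak{l}^{\top}P+Q\mathfrak{l}\mathfrak{l}^{\top}P+P\mathfrak{l}\mathfrak{l}^{\top}Q+O(1/\mathcal{G}_k)$, so $M'(0)=-\bigl(D_l\tilde A_k^{(u)}+\tilde A_k^{(u)}D_l\bigr)+P\mathfrak{l}\mathfrak{l}^{\top}P+Q\mathfrak{l}\mathfrak{l}^{\top}P+P\mathfrak{l}\mathfrak{l}^{\top}Q+O(1/\mathcal{G}_k)$. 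Finally, contracting with the symmetric $\Upsilon$: cyclicity plus symmetry of $\Upsilon$ and $\tilde A_k^{(u)}$ give $\operatorname{Tr}\bigl(\Upsilon(D_l\tilde A_k^{(u)}+\tilde A_k^{(u)}D_l)\bigr)=2\operatorname{Tr}(\Upsilon\tilde A_k^{(u)}D_l)$; and since $P\mathfrak{l}\mathfrak{l}^{\top}P+Q\mathfrak{l}\mathfrak{l}^{\top}P=\mathfrak{l}\mathfrak{l}^{\top}P=(P\mathfrak{l}\mathfrak{l}^{\top})^{\top}$ and $\operatorname{Tr}(\Upsilon X^{\top})=\operatorname{Tr}(\Upsilon X)$ for symmetric $\Upsilon$, the $\mathfrak{l}\mathfrak{l}^{\top}$ block contributes $\operatorname{Tr}\bigl(\Upsilon(P\mathfrak{l}\mathfrak{l}^{\top}+P\mathfrak{l}\mathfrak{l}^{\top}Q)\bigr)$. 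Putting $P=V_k^{(u)}V_k^{(u)\top}$, $Q=V_\varnothing^{(u)}V_\varnothing^{(u)\top}$, $D_l=\operatorname{diag}(\mathfrak{l})$ and multiplying by $-\eta_1$ reproduces exactly the claimed formula, with all discarded pieces absorbed into $O(1/\mathcal{G}_k)$.

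\textbf{Main obstacle.} The delicate step is the third one. First one must justify that the rank-$k$ truncation $\tilde A_k(\delta)$ (equivalently the projector $P(\delta)$) is well defined and differentiable in a neighbourhood of $\delta=0$ — this is precisely where the spectral gap is needed — and then the bound $P'(0)\tilde A_k^{(u)}=QEP+O(1/\mathcal{G}_k)$ must be made rigorous, e.g.\ via a contour-integral (Kato) representation of $P'(0)$ or a Davis--Kahan estimate, bounding the reweighted off-diagonal block in Frobenius norm and noting that the weights $\lambda^{(u)}_i/(\lambda^{(u)}_i-\lambda^{(u)}_j)$ deviate from $1$ by at most $O(\lambda^{(u)}_{k+1}/\lambda^{(u)}_k)=O(1/\mathcal{G}_k)$ uniformly over $i\le k<j$. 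A minor care point is that $\tilde A(\delta)=\tilde A^{(u)}+\delta E$ only up to $O(\delta^{2})$ (since $D(\delta)^{-1/2}$ is nonlinear in $\delta$); this is harmless because only $\tilde A'(0)=E$ enters a first derivative at $0$. Everything else — the quotient rule, the projector bookkeeping using $P+Q=I$, and the symmetrization that folds $P\mathfrak{l}\mathfrak{l}^{\top}P+Q\mathfrak{l}\mathfrak{l}^{\top}P+P\mathfrak{l}\mathfrak{l}^{\top}Q$ into $P\mathfrak{l}\mathfrak{l}^{\top}(I+Q)$ under $\operatorname{Tr}(\Upsilon\,\cdot)$ — is routine.
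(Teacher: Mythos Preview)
Your proposal is correct and rests on the same ingredients as the paper's proof: the quotient-rule reduction to $-\eta_1\operatorname{Tr}(\Upsilon\,M'(0))$, the computation of $\tilde A'(0)=E$, first-order eigenprojector perturbation, and the spectral-gap estimate $\lambda_i^{(u)}/(\lambda_i^{(u)}-\lambda_j^{(u)})=1+O(1/\mathcal{G}_k)$ for $i\le k<j$. The organization differs, however. The paper differentiates each eigenvalue $\lambda_j(\delta)$ and each rank-one projector $\Phi_j(\delta)$ separately (citing the formulas of Greenbaum et al.), then splits $[\mathcal{M}_{kms}(\delta)]'$ into three pieces $\mathcal{M}_a'+\mathcal{M}_b'+\mathcal{M}_c'$ (coming from $[D^{-1/2}]'$, $[\lambda_j]'$, and $[\Phi_j]'$ respectively) and recombines them through several pages of index manipulation before applying the geometric-series bound. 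You instead work at the block level: write $\tilde A_k(\delta)=P(\delta)\tilde A(\delta)P(\delta)$, differentiate once, and use the single clean identity $P'(0)\tilde A_k^{(u)}=QEP+O(1/\mathcal{G}_k)$ to bypass the eigenvalue derivatives entirely. This buys a noticeably shorter and more transparent computation, at the modest cost of having to note that the within-block contributions to $P'(0)$ (pairs $i,j\le k$) cancel in the sum of rank-one projector derivatives---a fact the paper's term-by-term bookkeeping makes explicit. Both routes arrive at exactly the same trace expression after the same symmetrization $\operatorname{Tr}(\Upsilon X^\top)=\operatorname{Tr}(\Upsilon X)$.
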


The proof for Lemma~\ref{lemma:sup_kms_derivative} is lengthy.  We postpone it to Sec.~\ref{sec:sup_proof_kms_derivative}.

\subsection{Proof of Theorem~\ref{th:main_simp}}
\label{sec:sup_proof_main_simp}
We start by showing the justification of the assumptions made in Theorem~\ref{th:main_simp}. 

\begin{assumption}
We assume the spectral gap $\mathcal{G}_k$ is large. Such an assumption is commonly used in theory works using spectral analysis~\cite{shen2022connect,joseph2016impact}.
\label{ass:spectral-gap}
\end{assumption}
\begin{assumption}
We assume $\mathfrak{l}$ lies in the linear span of $V_k^{(u)}$. \textit{i.e.}, $V_k^{(u)} V_k^{(u)\top} \mathfrak{l} = \mathfrak{l},   V_{\varnothing}^{(u)\top}\mathfrak{l} = 0$. The goal of this assumption is to simplify $(V_k^{(u)} V_k^{(u)\top} \mathfrak{l} \mathfrak{l}^{\top} +  V_k^{(u)} V_k^{(u)\top} \mathfrak{l}\mathfrak{l}^{\top} V_\varnothing^{(u)}  V_\varnothing^{(u)\top})$ to $\mathfrak{l}\mathfrak{l}^{\top}$. 
\label{ass:tl_in_vk_space}
\end{assumption}
\begin{assumption}
For any $\pi_c \in \Pi$, $\forall i, j \in \pi_c, \mathfrak{l}_{(i)} =  \mathfrak{l}_{(j)} =: \mathfrak{l}_{\pi_c}$. Recall that the $\mathfrak{l}_{(i)}$ means the connection between the $i$-th sample to the labeled data. Here we can view $\mathfrak{l}_{\pi_c}$ as the \textit{ connection between class $c$ to the labeled data}. 
\label{ass:tl_cls}
\end{assumption}

\begin{theorem} (Recap of Theorem~\ref{th:main_simp}.) 
With Assumption~\ref{ass:spectral-gap},~\ref{ass:tl_in_vk_space} and ~\ref{ass:tl_cls}.
 Given $\delta, \eta_{1}, \eta_{2} > 0 $, we have: 
\begin{align*}
    &\Delta_{kms}(\delta) \geq  \delta \eta_{1}\eta_{2} \sum_{\pi_c \in \Pi}  |\pi_c| \mathfrak{l}_{\pi_c} \Delta_{\pi_c}(\delta),
\end{align*}
where 
\begin{equation*}
    \Delta_{\pi_c}(\delta) = (\mathfrak{l}_{\pi_c} - \frac{1}{N}) - 2(1-\frac{|\pi_c|}{N})(\mathbb{E}_{i \in \pi_c} \mathbb{E}_{j \in \pi_c}\*z_i^{\top}\*z_j - \mathbb{E}_{i \in \pi_c} \mathbb{E}_{j \notin \pi_c}\*z_i^{\top}\*z_j).
\end{equation*}
\label{th:sup_main_simp}
\end{theorem}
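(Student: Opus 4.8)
~~~

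The plan is to start from the general closed-form expression for $\Delta_{kms}(\delta)$ established in Theorem~\ref{th:main} (equivalently Lemma~\ref{lemma:sup_kms_derivative} together with a first-order Taylor expansion $\Delta_{kms}(\delta) = -\delta\,[\mathcal{M}_{kms}]'|_{\delta=0} + O(\delta^2)$), and then simplify it under the three stated assumptions. First I would apply Assumption~\ref{ass:spectral-gap} to discard the $O(1/\mathcal{G}_k)$ remainder, leaving
\begin{align*}
\Delta_{kms}(\delta) = \delta\eta_1 \operatorname{Tr}\!\Bigl(\Upsilon\bigl(V_k^{(u)}V_k^{(u)\top}\mathfrak{l}\mathfrak{l}^{\top}(I+V_\varnothing^{(u)}V_\varnothing^{(u)\top}) - 2\tilde A_k^{(u)}\operatorname{diag}(\mathfrak{l})\bigr)\Bigr) + O(\delta^2).
\end{align*}
Next I would invoke Assumption~\ref{ass:tl_in_vk_space}, i.e.\ $V_k^{(u)}V_k^{(u)\top}\mathfrak{l}=\mathfrak{l}$ and $V_\varnothing^{(u)\top}\mathfrak{l}=0$: the first factor collapses to $\mathfrak{l}\mathfrak{l}^\top(I+V_\varnothing^{(u)}V_\varnothing^{(u)\top})$, and since $\mathfrak{l}^\top V_\varnothing^{(u)}=0$ this is just $\mathfrak{l}\mathfrak{l}^\top$. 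For the second term, $\tilde A_k^{(u)}\mathfrak{l} = \sum_j \lambda_j^{(u)} v_j^{(u)} v_j^{(u)\top}\mathfrak{l}$; using that $\mathfrak{l}\in\operatorname{span}(V_k^{(u)})$ we can rewrite $\tilde A_k^{(u)}\operatorname{diag}(\mathfrak{l})$ in terms that will ultimately be controlled by the $\*z_i^\top\*z_j$ inner products, because $Z^{(u)}Z^{(u)\top} = [D^{(u)}]^{-1/2}\tilde A_k^{(u)}[D^{(u)}]^{-1/2}$ and under the normalization convention $D^{(u)}=I_N$ (used in the appendix derivation) we simply have $Z^{(u)}Z^{(u)\top} = \tilde A_k^{(u)}$. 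So $(\tilde A_k^{(u)})_{ij} = \*z_i^\top\*z_j$, which is the bridge to the right-hand side of the claimed bound.

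The heart of the computation is then expanding the two traces entrywise using the block structure. With $\Upsilon = (1+\eta_2)H_\Pi - I - \frac{\eta_2}{N}\mathbf{1}_N\mathbf{1}_N^\top$ and Assumption~\ref{ass:tl_cls} ($\mathfrak{l}$ constant $=\mathfrak{l}_{\pi_c}$ on each block $\pi_c$), I would compute $\operatorname{Tr}(\Upsilon\,\mathfrak{l}\mathfrak{l}^\top) = \mathfrak{l}^\top\Upsilon\mathfrak{l}$ and $\operatorname{Tr}(\Upsilon\,\tilde A_k^{(u)}\operatorname{diag}(\mathfrak{l})) = \sum_{i,j}\Upsilon_{ij}(\tilde A_k^{(u)})_{ji}\mathfrak{l}_i$. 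For the first, note $H_\Pi\mathfrak{l}$ replaces each coordinate by its within-cluster average, which by Assumption~\ref{ass:tl_cls} leaves $\mathfrak{l}$ unchanged; $\mathbf{1}_N^\top\mathfrak{l} = \sum_c|\pi_c|\mathfrak{l}_{\pi_c}$; and $\mathfrak{l}^\top\mathfrak{l} = \sum_c|\pi_c|\mathfrak{l}_{\pi_c}^2$. Collecting gives $\mathfrak{l}^\top\Upsilon\mathfrak{l} = \eta_2\sum_c|\pi_c|\mathfrak{l}_{\pi_c}^2 - \frac{\eta_2}{N}(\sum_c|\pi_c|\mathfrak{l}_{\pi_c})^2 = \eta_2\sum_c|\pi_c|\mathfrak{l}_{\pi_c}(\mathfrak{l}_{\pi_c} - \bar{\mathfrak{l}})$ where $\bar{\mathfrak{l}} = \frac1N\sum_c|\pi_c|\mathfrak{l}_{\pi_c}$; to match the stated form with $(\mathfrak{l}_{\pi_c}-\tfrac1N)$ one uses a normalization/bound on $\bar{\mathfrak{l}}$ or absorbs the difference into the inequality (this is where the ``$\geq$'' rather than ``$=$'' enters). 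For the second trace, splitting the sum over $(i,j)$ according to whether $i,j$ lie in the same block, and using $\Upsilon_{ij} = \eta_2/|\pi_c| - \eta_2/N$ when $i,j\in\pi_c$ and $\Upsilon_{ij} = -\eta_2/N$ otherwise (modulo the $-I$ diagonal, which contributes a lower-order or sign-favorable term), one gets $\eta_2\sum_c|\pi_c|\mathfrak{l}_{\pi_c}\bigl[(1-\tfrac{|\pi_c|}{N})\mathbb{E}_{i\in\pi_c}\mathbb{E}_{j\in\pi_c}\*z_i^\top\*z_j - (1-\tfrac{|\pi_c|}{N})\mathbb{E}_{i\in\pi_c}\mathbb{E}_{j\notin\pi_c}\*z_i^\top\*z_j\bigr]$ after regrouping. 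Multiplying by $\delta\eta_1$, identifying $\eta_1 = \eta_1$ and the $\eta_2$ factored out, and recombining yields exactly $\delta\eta_1\eta_2\sum_{\pi_c}|\pi_c|\mathfrak{l}_{\pi_c}\Delta_{\pi_c}(\delta)$ with $\Delta_{\pi_c}(\delta)$ as stated, up to the $O(\delta^2)$ term and the diagonal $-I$ contribution of $\Upsilon$ which I would argue is sign-consistent (hence dropping it preserves the inequality direction).

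The main obstacle I anticipate is the bookkeeping in the second trace: carefully handling the diagonal $i=j$ terms coming from the $-I$ in $\Upsilon$ and from $\operatorname{diag}(\mathfrak{l})$, and making sure the $H_\Pi$ off-diagonal structure is paired correctly with the $(\tilde A_k^{(u)})_{ij}=\*z_i^\top\*z_j$ entries so that the within-block and cross-block averages $\mathbb{E}_{i\in\pi_c}\mathbb{E}_{j\in\pi_c}$ and $\mathbb{E}_{i\in\pi_c}\mathbb{E}_{j\notin\pi_c}$ emerge with the correct combinatorial weights $|\pi_c|(1-|\pi_c|/N)$. A secondary subtlety is justifying the passage from the exact first-order identity to the stated inequality: one must verify that the discarded pieces (the $O(\delta^2)$ Taylor remainder, the $O(1/\mathcal{G}_k)$ term under Assumption~\ref{ass:spectral-gap}, and the diagonal $-I$ contribution) are nonnegative or negligible in the relevant regime, which is presumably why the theorem is phrased with ``$\geq$'' and why the justification of the assumptions is deferred to the appendix.
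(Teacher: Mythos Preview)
Your overall route matches the paper's: start from Theorem~\ref{th:main}, use Assumption~\ref{ass:spectral-gap} to kill the $O(1/\mathcal{G}_k)$ term, use Assumption~\ref{ass:tl_in_vk_space} to collapse $V_k^{(u)}V_k^{(u)\top}\mathfrak{l}\mathfrak{l}^\top(I+V_\varnothing^{(u)}V_\varnothing^{(u)\top})$ to $\mathfrak{l}\mathfrak{l}^\top$, identify $\tilde A_k^{(u)}=Z^{(u)}Z^{(u)\top}$ under $D^{(u)}=I$, and then expand the trace block-wise using Assumption~\ref{ass:tl_cls}. That is exactly how the paper proceeds. But there are two concrete points where your account does not line up with what actually happens.

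\textbf{The source of the inequality.} You attribute the ``$\geq$'' to the $O(\delta^2)$ and $O(1/\mathcal{G}_k)$ remainders together with a claim that the $-I$ diagonal piece of $\Upsilon$ is ``sign-consistent''. The first two are fine, but the third is not justified and is not how the paper argues. The paper writes the full derivative as $(1+\eta_2)\mathcal{M}'_H+\mathcal{M}'_I+\eta_2\mathcal{M}'_{\mathbf 1}$, where $\mathcal{M}'_H,\mathcal{M}'_I,\mathcal{M}'_{\mathbf 1}$ are the contributions of $H_\Pi$, $-I$, and $\frac{1}{N}\mathbf 1\mathbf 1^\top$ respectively to $-\operatorname{Tr}(\Upsilon(\mathfrak{l}\mathfrak{l}^\top-2\tilde A_k^{(u)}D_l))$. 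The key step is the \emph{pairing}
\[
\mathcal{M}'_I+\mathcal{M}'_H \;=\; 2\sum_{\pi}|\pi|\,\mathfrak{l}_\pi\Bigl(\|\mathbb{E}_{i\in\pi}\*z_i\|_2^2-\mathbb{E}_{i\in\pi}\|\*z_i\|_2^2\Bigr)\;\le\;0
\]
by Jensen (equivalently, nonnegativity of the within-cluster variance). Dropping this nonpositive quantity from the derivative gives $[\mathcal{M}_{kms}]'|_{\delta=0}\le\eta_1\eta_2(\mathcal{M}'_H+\mathcal{M}'_{\mathbf 1})$, which is the inequality. The $-I$ term on its own, $\mathcal{M}'_I=\sum_\pi|\pi|\mathfrak{l}_\pi(\mathfrak{l}_\pi-2\mathbb{E}_{i\in\pi}\|\*z_i\|^2)$, has no a priori sign, so your ``drop it because it is sign-favorable'' does not go through without the pairing with one copy of $\mathcal{M}'_H$.

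\textbf{The constant $1/N$.} Your computation of $\mathfrak{l}^\top\Upsilon\mathfrak{l}$ arrives at $\eta_2\sum_c|\pi_c|\mathfrak{l}_{\pi_c}(\mathfrak{l}_{\pi_c}-\bar{\mathfrak l})$ with $\bar{\mathfrak l}=\frac{1}{N}\sum_c|\pi_c|\mathfrak{l}_{\pi_c}$ and you then say one needs ``a normalization/bound on $\bar{\mathfrak l}$''. It is in fact an exact normalization: since $\mathfrak{l}_x=\mathbb{E}_{\bar x_l\sim\mathcal{P}_{l_1}}\mathcal{T}(x\mid\bar x_l)$, summing over $x$ gives $\mathbf 1_N^\top\mathfrak{l}=1$, hence $\bar{\mathfrak l}=1/N$ identically. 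This is precisely what the paper uses when it writes $\operatorname{Tr}(\frac{1}{N}\mathbf 1\mathbf 1^\top\mathfrak{l}\mathfrak{l}^\top)=\frac{1}{N}$ and then rewrites $\frac{1}{N}=\sum_\pi|\pi|\mathfrak{l}_\pi\cdot\frac{1}{N}$. No inequality is needed at this step.

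Once you fix these two points (pair $-I$ with one copy of $H_\Pi$ and apply Jensen; use $\mathbf 1^\top\mathfrak{l}=1$), the remaining algebra---rewriting $\boldsymbol{\mu}_\pi^\top\boldsymbol{\mu}_\pi-\boldsymbol{\mu}_\pi^\top\boldsymbol{\mu}_\Pi=(1-\tfrac{|\pi|}{N})(\mathbb{E}_{i\in\pi}\mathbb{E}_{j\in\pi}\*z_i^\top\*z_j-\mathbb{E}_{i\in\pi}\mathbb{E}_{j\notin\pi}\*z_i^\top\*z_j)$---is exactly as you indicate.
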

\begin{proof}
    The proof is directly given by Lemma~\ref{lemma:sup_kms_derivative_simp} and plugging the definition of $\Delta_{kms}(\delta)$.
\end{proof}

\begin{lemma} With Assumption~\ref{ass:spectral-gap}~\ref{ass:tl_in_vk_space} and ~\ref{ass:tl_cls},
we have the derivative of K-means measure with the upper bound: 
\begin{align*}
    [\mathcal{M}_{kms}(\delta)]'\Bigr|_{\delta=0} &\leq - \eta_{1}\eta_{2} \sum_{\pi \in \Pi}  |\pi| \mathfrak{l}_{\pi} \left((\mathfrak{l}_{\pi} - \frac{1}{N}) - 2(\boldsymbol{\mu}_\pi^{\top}\boldsymbol{\mu}_\pi - \boldsymbol{\mu}_\pi^{\top}\boldsymbol{\mu}_{\Pi})\right). 
\end{align*}
\label{lemma:sup_kms_derivative_simp}
\end{lemma}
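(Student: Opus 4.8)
The plan is to read the simplified bound straight off the closed-form derivative of Lemma~\ref{lemma:sup_kms_derivative}, discharging the three assumptions one at a time and then controlling a single residual term with Jensen's inequality; no spectral analysis is needed beyond what that lemma already supplies. Concretely, I would start from
\[
[\mathcal{M}_{kms}(\delta)]'\Bigr|_{\delta=0}=-\eta_{1}\operatorname{Tr}\!\Big(\Upsilon\big(V_k^{(u)}V_k^{(u)\top}\mathfrak{l}\mathfrak{l}^\top-2\Tilde{A}_k^{(u)}D_{l}+V_k^{(u)}V_k^{(u)\top}\mathfrak{l}\mathfrak{l}^\top V_\varnothing^{(u)}V_\varnothing^{(u)\top}\big)\Big)+O\!\big(\tfrac{1}{\mathcal{G}_k}\big),
\]
where $\Upsilon=(1+\eta_{2})H_\Pi-I-\tfrac{\eta_{2}}{N}\mathbf{1}_N\mathbf{1}_N^\top$ and $D_{l}=diag(\mathfrak{l})$. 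Assumption~\ref{ass:spectral-gap} absorbs the $O(1/\mathcal{G}_k)$ remainder; Assumption~\ref{ass:tl_in_vk_space} gives $V_k^{(u)}V_k^{(u)\top}\mathfrak{l}=\mathfrak{l}$ and $V_\varnothing^{(u)\top}\mathfrak{l}=0$, so the first summand collapses to $\mathfrak{l}\mathfrak{l}^\top$ and the third vanishes, leaving $\operatorname{Tr}\big(\Upsilon(\mathfrak{l}\mathfrak{l}^\top-2\Tilde{A}_k^{(u)}D_{l})\big)$; and the normalization $D(0)=I_N$ adopted in Appendix~\ref{sec:sup_proof_main} yields $\Tilde{A}_k^{(u)}=Z^{(u)}Z^{(u)\top}$, i.e.\ $(\Tilde{A}_k^{(u)})_{ij}=\*z_i^\top\*z_j$, so everything is now expressed through the unperturbed features.

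Next I would expand the two remaining traces. For $\operatorname{Tr}(\Upsilon\mathfrak{l}\mathfrak{l}^\top)=\mathfrak{l}^\top\Upsilon\mathfrak{l}$, Assumption~\ref{ass:tl_cls} makes $\mathfrak{l}$ constant on each class so $H_\Pi\mathfrak{l}=\mathfrak{l}$, while $\mathbf{1}_N^\top\mathfrak{l}=\sum_x(\mathfrak{l})_x=1$ because $\mathfrak{l}$ collects conditional augmentation probabilities; the $\eta_{2}$-free pieces then cancel, giving $\mathfrak{l}^\top\Upsilon\mathfrak{l}=\eta_{2}\sum_{\pi\in\Pi}|\pi|\mathfrak{l}_\pi(\mathfrak{l}_\pi-\tfrac1N)$. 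For the other trace, I would write $\operatorname{Tr}(\Upsilon Z^{(u)}Z^{(u)\top}D_{l})=\sum_{i,j}\Upsilon_{ij}\mathfrak{l}_i\*z_i^\top\*z_j$ and substitute the three pieces of $\Upsilon$: the $H_\Pi$ piece gives $(1+\eta_{2})\sum_\pi|\pi|\mathfrak{l}_\pi\|\boldsymbol{\mu}_\pi\|^2$ (using $\sum_{i,j\in\pi}\*z_i^\top\*z_j=|\pi|^2\|\boldsymbol{\mu}_\pi\|^2$), the $\tfrac1N\mathbf{1}_N\mathbf{1}_N^\top$ piece gives $-\eta_{2}\sum_\pi|\pi|\mathfrak{l}_\pi\boldsymbol{\mu}_\pi^\top\boldsymbol{\mu}_\Pi$ (using $\sum_j\*z_j=N\boldsymbol{\mu}_\Pi$), and the $I$ piece gives $-\sum_\pi\mathfrak{l}_\pi\sum_{i\in\pi}\|\*z_i\|^2$.

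The $I$ contribution is the only term not already of target form, and this is where the single inequality enters: each $\mathfrak{l}_\pi\ge0$ (an average of nonnegative augmentation weights) and $\sum_{i\in\pi}\|\*z_i\|^2\ge|\pi|\|\boldsymbol{\mu}_\pi\|^2$ by Jensen, so replacing $\sum_{i\in\pi}\|\*z_i\|^2$ by $|\pi|\|\boldsymbol{\mu}_\pi\|^2$ can only decrease $\mathfrak{l}^\top\Upsilon\mathfrak{l}-2\operatorname{Tr}(\Upsilon Z^{(u)}Z^{(u)\top}D_{l})$. After that substitution the $(1+\eta_{2})$ term and the bare $\|\boldsymbol{\mu}_\pi\|^2$ term collapse, leaving the lower bound $\eta_{2}\sum_\pi|\pi|\mathfrak{l}_\pi\big((\mathfrak{l}_\pi-\tfrac1N)-2(\boldsymbol{\mu}_\pi^\top\boldsymbol{\mu}_\pi-\boldsymbol{\mu}_\pi^\top\boldsymbol{\mu}_\Pi)\big)$; multiplying by $-\eta_{1}<0$ flips it into the claimed upper bound on $[\mathcal{M}_{kms}(\delta)]'|_{\delta=0}$, which is Lemma~\ref{lemma:sup_kms_derivative_simp}. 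To then obtain Theorem~\ref{th:main_simp} I would Taylor-expand $\mathcal{M}_{kms}(\Pi,Z(\delta))$ about $\delta=0$, use $\Delta_{kms}(\delta)=\mathcal{M}_{kms}(\Pi,Z)-\mathcal{M}_{kms}(\Pi,Z(\delta))$ (so $\delta>0$ turns the derivative bound into the stated lower bound), and rewrite $\boldsymbol{\mu}_\pi^\top\boldsymbol{\mu}_\pi-\boldsymbol{\mu}_\pi^\top\boldsymbol{\mu}_\Pi=(1-\tfrac{|\pi|}{N})(\mathbb{E}_{i\in\pi}\mathbb{E}_{j\in\pi}\*z_i^\top\*z_j-\mathbb{E}_{i\in\pi}\mathbb{E}_{j\notin\pi}\*z_i^\top\*z_j)$ via $\boldsymbol{\mu}_\Pi=\tfrac{|\pi|}{N}\boldsymbol{\mu}_\pi+\tfrac{N-|\pi|}{N}\mathbb{E}_{j\notin\pi}\*z_j$.

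The genuine difficulty is not in this reduction, which is bookkeeping plus one Jensen step, but is carried entirely by Lemma~\ref{lemma:sup_kms_derivative}: getting the first-order expansion of the K-means quotient under the rank-one perturbation $\delta\mathfrak{l}\mathfrak{l}^\top$ requires differentiating the top-$k$ spectral projector of $\Tilde{A}(\delta)$, a Davis--Kahan/resolvent-expansion argument whose error is governed by the spectral gap, which is precisely the $O(1/\mathcal{G}_k)$ term that Assumption~\ref{ass:spectral-gap} removes. What is delicate in the present statement is only keeping track of which piece of $\Upsilon$ produces a $\|\boldsymbol{\mu}_\pi\|^2$, a $\|\*z_i\|^2$, or a $\boldsymbol{\mu}_\pi^\top\boldsymbol{\mu}_\Pi$ term and verifying that the $\eta_{2}$-independent contributions cancel; the gap between $\|\*z_i\|^2$ and $\|\boldsymbol{\mu}_\pi\|^2$ is exactly what forces ``$\le$'' rather than ``$=$'' in the conclusion.
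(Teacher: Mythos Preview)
Your proposal is correct and follows essentially the same route as the paper's own proof: start from Lemma~\ref{lemma:sup_kms_derivative}, use Assumptions~\ref{ass:spectral-gap}--\ref{ass:tl_cls} to reduce to $-\eta_1\operatorname{Tr}\big(\Upsilon(\mathfrak{l}\mathfrak{l}^\top-2\Tilde{A}_k^{(u)}D_l)\big)$, expand the trace via the three pieces of $\Upsilon$, and close with a single Jensen step comparing $\sum_{i\in\pi}\|\*z_i\|^2$ to $|\pi|\|\boldsymbol{\mu}_\pi\|^2$. The only cosmetic difference is the order of decomposition---the paper groups terms as $(1+\eta_2)\mathcal{M}'_H+\mathcal{M}'_I+\eta_2\mathcal{M}'_{\mathbf{1}}$ and shows $\mathcal{M}'_H+\mathcal{M}'_I\le 0$, whereas you split first by $\mathfrak{l}\mathfrak{l}^\top$ versus $ZZ^\top D_l$---but the algebra and the inequality are identical.
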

\begin{proof}
By Assumption~\ref{ass:spectral-gap}~\ref{ass:tl_in_vk_space} and ~\ref{ass:tl_cls} and Theorem~\ref{th:main}, we have
\begin{align*}
    \frac{1}{\eta_{1}}[\mathcal{M}_{kms}(\delta)]'\Bigr|_{\delta=0} &= - \operatorname{Tr} \left( \Upsilon \left(V_k^{(u)} V_k^{(u)\top} \mathfrak{l} \mathfrak{l}^{\top} - 2\Tilde{A}_k^{(u)} D_{l}\right)\right)  \\
     &= - \operatorname{Tr} \left( \Upsilon \left(\mathfrak{l} \mathfrak{l}^{\top} - 2\Tilde{A}_k^{(u)} D_{l}\right)\right) \\
 &= - \operatorname{Tr} \left(\left(
     (1+\eta_{2})H_\Pi - I - \frac{\eta_{2}}{N} \mathbf{1}_{N}\mathbf{1}_{N}^{\top}\right)\left(\mathfrak{l} \mathfrak{l}^{\top} - 2\Tilde{A}_k^{(u)} D_{l}\right)\right)   \\
&= (1+\eta_{2})\mathcal{M}'_{H} + \mathcal{M}'_{I} + \eta_{2}\mathcal{M}'_{\mathbf{1}},
\end{align*}
where 
 \begin{align*}
    \mathcal{M}'_{H}
     &= - \operatorname{Tr} \left(
     H_\Pi \left(\mathfrak{l} \mathfrak{l}^{\top} - 2\Tilde{A}_k^{(u)} D_{l}\right)\right) \\
     &= - \sum_{\pi \in \Pi} \left( |\pi| (\mathbb{E}_{i\in \pi}\mathfrak{l}_{(i)})^2 - \frac{2}{|\pi|}\sum_{i \in \pi}\sum_{j \in \pi}\mathfrak{l}_{(i)} \Tilde{A}_{k,(i,j)}^{(u)} \right)\\
     &= - \sum_{\pi \in \Pi} \left( |\pi| \mathfrak{l}_{\pi}^2 - 2|\pi| \mathfrak{l}_{\pi}\mathbb{E}_{(i,j) \in \pi \times \pi} \*z_i^{\top}\*z_j\right) \\
      &= - \sum_{\pi \in \Pi}  |\pi| \mathfrak{l}_{\pi}(\mathfrak{l}_{\pi} - 2\boldsymbol{\mu}_\pi^{\top}\boldsymbol{\mu}_\pi),
\end{align*}

 \begin{align*}
    \mathcal{M}'_{I}
     &= \operatorname{Tr} \left(
     \left(\mathfrak{l} \mathfrak{l}^{\top} - 2\Tilde{A}_k^{(u)} D_{l}\right)\right) 
     \\
      &= \sum_{\pi \in \Pi}  |\pi| \mathfrak{l}_{\pi}(\mathfrak{l}_{\pi} - 2\mathbb{E}_{i \in \pi} \*z_i^{\top}\*z_i),
\end{align*}
and
 \begin{align*}
    \mathcal{M}'_{\mathbf{1}}
     &= \operatorname{Tr} \left(
     \frac{1}{N} \mathbf{1}_{N}\mathbf{1}_{N}^{\top}\left(\mathfrak{l} \mathfrak{l}^{\top} - 2\Tilde{A}_k^{(u)} D_{l}\right)\right) 
     \\
      &= \frac{1}{N} - 2 \sum_{\pi \in \Pi} \sum_{i \in \pi} \mathfrak{l}_{(i)} \mathbb{E}_{ j \in [N]} \*z_i^{\top}\*z_j
    \\
      &= \frac{1}{N} - 2 \sum_{\pi \in \Pi} |\pi| \mathfrak{l}_{\pi} \boldsymbol{\mu}_{\pi}^{\top}\boldsymbol{\mu}_{\Pi}.
\end{align*}

We observe that 
\begin{align*}
    \mathcal{M}'_{I} +  \mathcal{M}'_{H}&= - \sum_{\pi \in \Pi}  |\pi| \mathfrak{l}_{\pi}(\mathfrak{l}_{\pi} - 2\boldsymbol{\mu}_\pi^{\top}\boldsymbol{\mu}_\pi) + \sum_{\pi \in \Pi}  |\pi| \mathfrak{l}_{\pi}(\mathfrak{l}_{\pi} - 2\mathbb{E}_{i \in \pi} \*z_i^{\top}\*z_i) 
     \\ &= 2\sum_{\pi \in \Pi}  |\pi| \mathfrak{l}_{\pi}(\|\mathbb{E}_{i \in \pi}\*z_i\|_2^2 - \mathbb{E}_{i \in \pi} \|\*z_i\|_2^2)
     \\ &\leq 0,
\end{align*}

where the last inequality is by Jensen's Inequality.  
We then have
\begin{align*}
    \frac{1}{\eta_{1}\eta_{2}}[\mathcal{M}_{kms}(\delta)]'\Bigr|_{\delta=0} 
    &\leq  \mathcal{M}'_{H} + \mathcal{M}'_{\mathbf{1}} 
    \\ &= - \sum_{\pi \in \Pi}  |\pi| \mathfrak{l}_{\pi}(\mathfrak{l}_{\pi} - 2\boldsymbol{\mu}_\pi^{\top}\boldsymbol{\mu}_\pi) + \frac{1}{N} - 2 \sum_{\pi \in \Pi} |\pi| \mathfrak{l}_{\pi} \boldsymbol{\mu}_{\pi}^{\top}\boldsymbol{\mu}_{\Pi}
    \\ &= \frac{1}{N} - \sum_{\pi \in \Pi}  |\pi| \mathfrak{l}_{\pi}(\mathfrak{l}_{\pi} - 2(\boldsymbol{\mu}_\pi^{\top}\boldsymbol{\mu}_\pi - \boldsymbol{\mu}_\pi^{\top}\boldsymbol{\mu}_{\Pi}))
        \\ &= - \sum_{\pi \in \Pi}  |\pi| \mathfrak{l}_{\pi}((\mathfrak{l}_{\pi} - \frac{1}{N} ) - 2(\boldsymbol{\mu}_\pi^{\top}\boldsymbol{\mu}_\pi - \boldsymbol{\mu}_\pi^{\top}\boldsymbol{\mu}_{\Pi})).
\end{align*}
\end{proof}

\subsection{Proof of Lemma~\ref{lemma:sup_kms_derivative}}
\label{sec:sup_proof_kms_derivative}

\textbf{Notation Recap:} 
  We define $\tilde{A}_k(\delta)$ as the top-$k$ SVD components of $\tilde{A}(\delta)$ and can be further written as $\tilde{A}_k(\delta) = \sum_{j=1}^{k} \lambda_j(\delta) \Phi_j(\delta)$. Here the $\lambda_j(\delta)$ is the $j$-th singular value and $\Phi_j(\delta)$ is the $j$-th singular projector ($\Phi_j(\delta) = v_j(\delta)v_j(\delta)^{\top}$) defined by the $j$-th singular vector $v_j(\delta)$. 
\textbf{For brevity, when $\delta=0$, we remove the suffix $(0)$} since it is equivalent to the unperturbed version of notations. For example, we let 
$$\Tilde{A}(0) = \Tilde{A}^{(u)}, Z(0) = Z^{(u)}, \lambda_i(0) = \lambda_i^{(u)}, v_i(0) = v_i^{(u)}, \Phi_i(0) = \Phi_i^{(u)}.$$

\begin{proof}
By the derivative rule, we have, 
\begin{align*}
    \mathcal{M}_{kms}'(\delta) &= \frac{1}{\mathcal{M}_{inter}(\Pi, Z)} \mathcal{M}_{intra}'(\delta) - \frac{\mathcal{M}_{intra}(\Pi, Z)}{\mathcal{M}_{inter}(\Pi, Z)^2} \mathcal{M}_{inter}'(\delta) \\
    &= \eta_{1} \mathcal{M}_{intra}'(\delta) - \eta_{1}\eta_{2} \mathcal{M}_{inter}'(\delta) \\
    &= \eta_{1} \left(\operatorname{Tr}((I_{\Pi}-H_\Pi)[Z(\delta)Z(\delta)^{\top}]') - \eta_{2} \operatorname{Tr}((H_\Pi - \frac{1}{N} \mathbf{1}_{N \times N} )[Z(\delta)Z(\delta)^{\top}]')\right) \\
    &= \eta_{1} \left(\operatorname{Tr}((I_{\Pi}+\frac{\eta_{2}}{N} \mathbf{1}_{N \times N}-(\eta_{2}+1)H_\Pi)[Z(\delta)Z(\delta)^{\top}]') \right)  \\
    &= - \eta_{1} \left(\operatorname{Tr}(\Upsilon [Z(\delta)Z(\delta)^{\top}]') \right) \\
    &= - \eta_{1} \sum_{j=1}^{k} \operatorname{Tr}(\Upsilon [D(\delta)^{-\frac{1}{2}}  \lambda_j(\delta) \Phi_j(\delta)
    D(\delta)^{-\frac{1}{2}}]'),\\
\end{align*}
where we let $\eta_{1} = \frac{1}{\mathcal{M}_{inter}(\Pi, Z)}$, $\eta_{2} = \frac{\mathcal{M}_{intra}(\Pi, Z)}{\mathcal{M}_{inter}(\Pi, Z)}$ and $\Upsilon  = (1+\eta_{2})H_\Pi - I_{\Pi} - \frac{\eta_{2}}{N} \mathbf{1}_{N}\mathbf{1}_{N}^{\top}$. We proceed by showing the calculation of $[D(\delta)^{-\frac{1}{2}}]'$, $[\lambda_j(\delta)]'$ and $[\Phi_j(\delta)]'$.

Since $D(\delta) = I + \delta D_{l},$ then $[D(\delta)^{-\frac{1}{2}}]'\Bigr|_{\delta=0} = -\frac{1}{2}D_{l}$. To calculate $[\lambda_j(\delta)]'$ and $[\Phi_j(\delta)]'$, we first need: 
\begin{align*}
[\tilde{A}(\delta)]'\Bigr|_{\delta=0} &= [D(\delta)^{-\frac{1}{2}} A(\delta) D(\delta)^{-\frac{1}{2}}]' \\
&= [D(\delta)^{-\frac{1}{2}}]' \Tilde{A}^{(u)} +  [A(\delta)]' + \Tilde{A}^{(u)} [D(\delta)^{-\frac{1}{2}}]' \\
&= -\frac{1}{2}D_{l} \Tilde{A}^{(u)} + \mathfrak{l}\mathfrak{l}^{\top} -\frac{1}{2} \Tilde{A}^{(u)} D_{l}.
\end{align*}

Then, according to Equation (3) in ~\cite{greenbaum2020first}, we have: 

\begin{align*}
[\lambda_j(\delta)]'\Bigr|_{\delta=0} &= \operatorname{Tr}(\Phi_j^{(u)}[\tilde{A}(\delta)]') \\
&= \operatorname{Tr}(\Phi_j^{(u)}(-\frac{1}{2}D_{l} \Tilde{A}^{(u)} + \mathfrak{l}\mathfrak{l}^{\top} -\frac{1}{2} \Tilde{A}^{(u)} D_{l})) \\
&= \operatorname{Tr}((-\frac{\lambda_j^{(u)}}{2}D_{l}\Phi_j^{(u)} + \Phi_j^{(u)}\mathfrak{l}\mathfrak{l}^{\top} -\frac{\lambda_j^{(u)}}{2} \Phi_j^{(u)}D_{l})) \\
&= \operatorname{Tr}(\Phi_j^{(u)}(\mathfrak{l}\mathfrak{l}^{\top} - \lambda_j^{(u)} D_{l})).
\end{align*}

According to Equation (10) in ~\cite{greenbaum2020first}, we have: 

\begin{align*}
[\Phi_j(\delta)]'\Bigr|_{\delta=0} &= (\lambda_j^{(u)}I_N - \tilde{A}^{(u)})^{\dagger} [\tilde{A}(\delta)]' \Phi_j^{(u)}+ \Phi_j^{(u)}[\tilde{A}(\delta)]' (\lambda_j^{(u)}I_N - \tilde{A}^{(u)})^{\dagger} \\
&= \sum^{N}_{i \neq j} \frac{1}{\lambda_j^{(u)} - \lambda_i^{(u)}} (\Phi_i^{(u)}[\tilde{A}(\delta)]' \Phi_j^{(u)}+ \Phi_j^{(u)}[\tilde{A}(\delta)]' \Phi_i^{(u)}) \\
&= \sum^{N}_{i \neq j} \frac{1}{\lambda_j^{(u)} - \lambda_i^{(u)}} (\Phi_i^{(u)}(-\frac{1}{2}D_{l} \Tilde{A}^{(u)} + \mathfrak{l}\mathfrak{l}^{\top} -\frac{1}{2} \Tilde{A}^{(u)} D_{l}) \Phi_j^{(u)}+ \Phi_j^{(u)}(...)\Phi_i^{(u)} ) \\
&= \sum^{N}_{i \neq j} \frac{1}{\lambda_j^{(u)} - \lambda_i^{(u)}} (\Phi_i^{(u)}(\mathfrak{l}\mathfrak{l}^{\top} -\frac{\lambda_j^{(u)}+\lambda_i^{(u)}}{2}D_{l}) \Phi_j^{(u)}+ \Phi_j^{(u)}(\mathfrak{l}\mathfrak{l}^{\top} -\frac{\lambda_j^{(u)}+\lambda_i^{(u)}}{2}D_{l})\Phi_i^{(u)} ). 
\end{align*}

Now we calculate the derivative of the $K$-means loss: 

\begin{align*}
    \frac{1}{\eta_{1}}[\mathcal{M}_{kms}(\delta)]'\Bigr|_{\delta=0} 
    &= - \sum_{j=1}^{k} [\operatorname{Tr}(\Upsilon D(\delta)^{-\frac{1}{2}}  \lambda_j(\delta) \Phi_j(\delta)
    D(\delta)^{-\frac{1}{2}})]'\Bigr|_{\delta=0} \\
    &= - \sum_{j=1}^{k} \operatorname{Tr}\left(\Upsilon \left([D(\delta)^{-\frac{1}{2}}]'  \lambda_j^{(u)} \Phi_j^{(u)}+  \lambda_j^{(u)} \Phi_j^{(u)}
    [D(\delta)^{-\frac{1}{2}}]' +  [\lambda_j(\delta)]' \Phi_j^{(u)}+    \lambda_j^{(u)}[\Phi_j(\delta)]'\right)\right) \\
    &= \sum_{j=1}^{k} \operatorname{Tr}\left(\Upsilon \left(\frac{\lambda_j^{(u)}}{2} D_l  \Phi_j^{(u)}+\frac{\lambda_j^{(u)}}{2} \Phi_j^{(u)} D_l -[\lambda_j(\delta)]' \Phi_j^{(u)}-   \lambda_j^{(u)}[\Phi_j(\delta)]'\right)\right) \\
    &= \mathcal{M}_{a}' + \mathcal{M}_{b}' + \mathcal{M}_{c}',
\end{align*}
where 
\begin{align*}
    \mathcal{M}_{a}' = \sum_{j=1}^{k} \frac{\lambda_j^{(u)}}{2}\operatorname{Tr}\left(\Upsilon \left( D_l  \Phi_j^{(u)}+ \Phi_j^{(u)}    D_l\right)\right), 
\end{align*}
\begin{align*}
    \mathcal{M}_{b}' &= - \sum_{j=1}^{k} \operatorname{Tr}\left(\Upsilon [\lambda_j(\delta)]' \Phi_j^{(u)} \right) 
    = -\sum_{j=1}^{k} \operatorname{Tr}\left( (\mathfrak{l}\mathfrak{l}^{\top} - \lambda_j^{(u)} D_{l})\Phi_j^{(u)} \right)\operatorname{Tr}\left( \Upsilon \Phi_j^{(u)} \right) \\
    &= -\sum_{j=1}^{k} \operatorname{Tr}\left( (\mathfrak{l}\mathfrak{l}^{\top} - \lambda_j^{(u)} D_{l})\Phi_j^{(u)}\Upsilon \Phi_j^{(u)} \right), 
\end{align*}
\begin{align*}
    \mathcal{M}_{c}' &= - \sum_{j=1}^{k} \operatorname{Tr}\left(\Upsilon    \lambda_j^{(u)}[\Phi_j(\delta)]'\right)\\
    &= - \sum_{j=1}^{k} \operatorname{Tr} \left(  \sum^{N}_{i \neq j} \frac{\lambda_j^{(u)}}{\lambda_j^{(u)} - \lambda_i^{(u)}} (\Upsilon \Phi_i^{(u)}(\mathfrak{l}\mathfrak{l}^{\top} -\frac{\lambda_j^{(u)}+\lambda_i^{(u)}}{2}D_{l}) \Phi_j^{(u)}+ \Upsilon \Phi_j^{(u)}(\mathfrak{l}\mathfrak{l}^{\top} -\frac{\lambda_j^{(u)}+\lambda_i^{(u)}}{2}D_{l})\Phi_i^{(u)} ) \right)\\
    &= - \sum_{j=1}^{k} \operatorname{Tr} \left(  \sum^{N}_{i \neq j} \frac{\lambda_j^{(u)}}{\lambda_j^{(u)} - \lambda_i^{(u)}} \left((\Phi_j^{(u)}\Upsilon \Phi_i^{(u)}+ \Phi_i^{(u)}\Upsilon \Phi_j^{(u)} ) (\mathfrak{l}\mathfrak{l}^{\top} -\frac{\lambda_j^{(u)}+\lambda_i^{(u)}}{2}D_{l}) \right) \right)
    \\ 
    &= - \sum_{j=1}^{k} \operatorname{Tr} \left(  \sum_{i \neq j, i \le k} \frac{\lambda_j^{(u)}}{\lambda_j^{(u)} - \lambda_i^{(u)}}  \left((\Phi_j^{(u)}\Upsilon \Phi_i^{(u)}+ \Phi_i^{(u)}\Upsilon \Phi_j^{(u)} ) (\mathfrak{l}\mathfrak{l}^{\top} -\frac{\lambda_j^{(u)}+\lambda_i^{(u)}}{2}D_{l}) \right) \right)\\
    & ~~~~ - \sum_{j=1}^{k} \operatorname{Tr} \left(  \sum^{N}_{i = k+1} \frac{\lambda_j^{(u)}}{\lambda_j^{(u)} - \lambda_i^{(u)}} \left((\Phi_j^{(u)}\Upsilon \Phi_i^{(u)}+ \Phi_i^{(u)}\Upsilon \Phi_j^{(u)} ) (\mathfrak{l}\mathfrak{l}^{\top} -\frac{\lambda_j^{(u)}+\lambda_i^{(u)}}{2}D_{l}) \right) \right)    \\
    &= - \sum_{j=1}^{k} \operatorname{Tr} \left(  \sum_{i < j} \left(\frac{\lambda_j^{(u)}}{\lambda_j^{(u)} - \lambda_i^{(u)}} + \frac{\lambda_i^{(u)}}{\lambda_i^{(u)} - \lambda_j^{(u)}}\right) \left((\Phi_j^{(u)}\Upsilon \Phi_i^{(u)}+ \Phi_i^{(u)}\Upsilon \Phi_j^{(u)} ) (\mathfrak{l}\mathfrak{l}^{\top} -\frac{\lambda_j^{(u)}+\lambda_i^{(u)}}{2}D_{l}) \right) \right)\\
    & ~~~~ - \sum_{j=1}^{k} \operatorname{Tr} \left(  \sum^{N}_{i = k+1} \frac{\lambda_j^{(u)}}{\lambda_j^{(u)} - \lambda_i^{(u)}} \left((\Phi_j^{(u)}\Upsilon \Phi_i^{(u)}+ \Phi_i^{(u)}\Upsilon \Phi_j^{(u)} ) (\mathfrak{l}\mathfrak{l}^{\top} -\frac{\lambda_j^{(u)}+\lambda_i^{(u)}}{2}D_{l}) \right) \right)
    \\ 
    \end{align*}
    \begin{align*}
    &= - \sum_{j=1}^{k} \operatorname{Tr} \left(  \sum_{i < j} \left((\Phi_j^{(u)}\Upsilon \Phi_i^{(u)}+ \Phi_i^{(u)}\Upsilon \Phi_j^{(u)} ) (\mathfrak{l}\mathfrak{l}^{\top} -\frac{\lambda_j^{(u)}+\lambda_i^{(u)}}{2}D_{l}) \right) \right)\\
    & ~~~~ - \sum_{j=1}^{k} \operatorname{Tr} \left(  \sum^{N}_{i = k+1} \frac{\lambda_j^{(u)}}{\lambda_j^{(u)} - \lambda_i^{(u)}} \left((\Phi_j^{(u)}\Upsilon \Phi_i^{(u)}+ \Phi_i^{(u)}\Upsilon \Phi_j^{(u)} ) (\mathfrak{l}\mathfrak{l}^{\top} -\frac{\lambda_j^{(u)}+\lambda_i^{(u)}}{2}D_{l}) \right) \right)
    \\
    &= - \sum_{j=1}^{k} \operatorname{Tr} \left(  \sum_{i \neq j, i \le k} \frac{1}{2} \left((\Phi_j^{(u)}\Upsilon \Phi_i^{(u)}+ \Phi_i^{(u)}\Upsilon \Phi_j^{(u)} ) (\mathfrak{l}\mathfrak{l}^{\top} -\frac{\lambda_j^{(u)}+\lambda_i^{(u)}}{2}D_{l}) \right) \right)\\
    & ~~~~ - \sum_{j=1}^{k} \operatorname{Tr} \left(  \sum^{N}_{i = k+1} \frac{\lambda_j^{(u)}}{\lambda_j^{(u)} - \lambda_i^{(u)}} \left((\Phi_j^{(u)}\Upsilon \Phi_i^{(u)}+ \Phi_i^{(u)}\Upsilon \Phi_j^{(u)} ) (\mathfrak{l}\mathfrak{l}^{\top} -\frac{\lambda_j^{(u)}+\lambda_i^{(u)}}{2}D_{l}) \right) \right).
\end{align*}
Thus, we have:
\begin{align*}
    \mathcal{M}_{b}' + \mathcal{M}_{c}' &= - \sum_{j=1}^{k} \operatorname{Tr} \left(  \sum^{k}_{ i = 1} \frac{1}{2} \left((\Phi_j^{(u)}\Upsilon \Phi_i^{(u)}+ \Phi_i^{(u)}\Upsilon \Phi_j^{(u)} ) (\mathfrak{l}\mathfrak{l}^{\top} -\frac{\lambda_j^{(u)}+\lambda_i^{(u)}}{2}D_{l}) \right) \right)\\
    & ~~~~ - \sum_{j=1}^{k} \operatorname{Tr} \left(  \sum^{N}_{i = k+1} \frac{\lambda_j^{(u)}}{\lambda_j^{(u)} - \lambda_i^{(u)}} \left((\Phi_j^{(u)}\Upsilon \Phi_i^{(u)}+ \Phi_i^{(u)}\Upsilon \Phi_j^{(u)} ) (\mathfrak{l}\mathfrak{l}^{\top} -\frac{\lambda_j^{(u)}+\lambda_i^{(u)}}{2}D_{l}) \right) \right),
\end{align*}
\begin{align*}
    \mathcal{M}_{a}' = & \sum_{j=1}^{k} \frac{\lambda_j^{(u)}}{2}\operatorname{Tr}\left(\Upsilon \left( D_l  \Phi_j^{(u)}+ \Phi_j^{(u)}
    D_l\right)\right) \\
    = & \sum_{j=1}^{k} \frac{\lambda_j^{(u)}}{2}\operatorname{Tr}\left(\left(   \Phi_j^{(u)}\Upsilon  + 
     \Upsilon  \Phi_j^{(u)} \right)D_l\right)\\
    = & \sum_{j=1}^{k} \frac{\lambda_j^{(u)}}{2}\operatorname{Tr}\left(\left(   \Phi_j^{(u)}\Upsilon  \sum^{N}_{i=1}\Phi_i^{(u)}+ 
     \sum^{N}_{i=1}\Phi_i^{(u)}\Upsilon  \Phi_j^{(u)} \right)D_l\right)\\
    = & \sum_{j=1}^{k} \operatorname{Tr}\left( \sum^{N}_{i=1} \frac{\lambda_j^{(u)}}{2}\left(   \Phi_j^{(u)}\Upsilon  \Phi_i^{(u)}+ 
     \Phi_i^{(u)}\Upsilon  \Phi_j^{(u)} \right)D_l\right).
\end{align*}

Then $[\mathcal{M}_{kms-all}(\delta)]'\Bigr|_{\delta=0} / \eta_{1}$ is given by:
\begin{align*}
     \mathcal{M}_{a}' + \mathcal{M}_{b}' + \mathcal{M}_{c}' &= - \sum_{j=1}^{k} \operatorname{Tr} \left(  \sum^{k}_{ i = 1} \frac{1}{2} \left((\Phi_j^{(u)}\Upsilon \Phi_i^{(u)}+ \Phi_i^{(u)}\Upsilon \Phi_j^{(u)} ) (\mathfrak{l}\mathfrak{l}^{\top} -\frac{3\lambda_j^{(u)}+\lambda_i^{(u)}}{2}D_{l}) \right) \right)\\
    & ~~~~ - \sum_{j=1}^{k} \operatorname{Tr} \left(  \sum^{N}_{i = k+1} \frac{\lambda_j^{(u)}}{\lambda_j^{(u)} - \lambda_i^{(u)}} \left((\Phi_j^{(u)}\Upsilon \Phi_i^{(u)}+ \Phi_i^{(u)}\Upsilon \Phi_j^{(u)} ) (\mathfrak{l}\mathfrak{l}^{\top} -{\lambda_j^{(u)}}D_{l}) \right) \right) \\
    &= - \sum_{j=1}^{k} \operatorname{Tr} \left(  \sum^{k}_{ i = 1} \frac{1}{2} \left((\Phi_j^{(u)}\Upsilon \Phi_i^{(u)}+ \Phi_i^{(u)}\Upsilon \Phi_j^{(u)} ) (\mathfrak{l}\mathfrak{l}^{\top} -{2\lambda_j^{(u)}}D_{l}) \right) \right)\\
    & ~~~~ - \sum_{j=1}^{k} \operatorname{Tr} \left(  \sum^{N}_{i = k+1} \frac{\lambda_j^{(u)}}{\lambda_j^{(u)} - \lambda_i^{(u)}} \left((\Phi_j^{(u)}\Upsilon \Phi_i^{(u)}+ \Phi_i^{(u)}\Upsilon \Phi_j^{(u)} ) (\mathfrak{l}\mathfrak{l}^{\top} -{\lambda_j^{(u)}}D_{l}) \right) \right) 
    \\
    &= - \sum_{j=1}^{k}   \sum^{k}_{ i = 1} v_i^{(u)\top}\Upsilon v_j^{(u)} \cdot v_i^{(u)\top}(\mathfrak{l}\mathfrak{l}^{\top} -{2\lambda_j^{(u)}}D_{l})v_j^{(u)} \\
    & ~~~~ - \sum_{j=1}^{k}   \sum^{N}_{i = k+1} \frac{2\lambda_j^{(u)}}{\lambda_j^{(u)} - \lambda_i^{(u)}}  v_i^{(u)\top}\Upsilon v_j^{(u)} \cdot   v_i^{(u)\top} (\mathfrak{l}\mathfrak{l}^{\top} -{\lambda_j^{(u)}}D_{l}) v_j^{(u)}.  
\end{align*}
We can represent $\frac{\lambda_j^{(u)}}{\lambda_j^{(u)} - \lambda_i^{(u)}} = 1 + \sum_{p=1}^{\infty} (\frac{\lambda_i^{(u)}}{\lambda_j^{(u)}})^p$. Denote the residual term as : 
$$\mathcal{M}'_{e} = -\sum_{j=1}^{k}   \sum^{N}_{i = k+1}   \sum_{p=1}^{\infty} 2(\frac{\lambda_i^{(u)}}{\lambda_j^{(u)}})^p v_i^{(u)\top}\Upsilon v_j^{(u)} \cdot   v_i^{(u)\top} (\mathfrak{l}\mathfrak{l}^{\top} -{\lambda_j^{(u)}}D_{l}) v_j^{(u)} = O(\frac{1}{\mathcal{G}_k}).$$ 
We then have:


\begin{align*}
    &\frac{1}{\eta_{1}}[\mathcal{M}_{kms-all}(\delta)]'\Bigr|_{\delta=0} \\ &= - \operatorname{Tr} (V_k^{(u)\top} \Upsilon V_k^{(u)} \cdot V_k^{(u)\top} \mathfrak{l}\mathfrak{l}^{\top} V_k^{(u)})  + 2 \operatorname{Tr} (V_k^{(u)\top} \Upsilon V_k^{(u)} \cdot \Sigma_k^{(u)} V_k^{(u)\top}D_{l}V_k^{(u)}) \\
    & ~~~~ -  2\operatorname{Tr} (V_{\varnothing}^{(u)\top} \Upsilon V_k^{(u)} \cdot V_k^{(u)\top} \mathfrak{l}\mathfrak{l}^{\top} V_{\varnothing}^{(u)}) + 2\operatorname{Tr} (V_{\varnothing}^{(u)\top} \Upsilon V_k^{(u)} \cdot \Sigma_k^{(u)} V_k^{(u)\top}D_{l}V_{\varnothing}^{(u)}) + \mathcal{M}'_e  \\
    &= - \operatorname{Tr} ( \Upsilon V_k^{(u)} V_k^{(u)\top} \mathfrak{l} \mathfrak{l}^{\top} V_k^{(u)} V_k^{(u)\top}) +  2\operatorname{Tr} (\Upsilon \Tilde{A}_k^{(u)} D_{l}V_k^{(u)} V_k^{(u)\top})\\
    & ~~~~ -  2\operatorname{Tr} (\Upsilon V_k^{(u)} V_k^{(u)\top} \mathfrak{l}\mathfrak{l}^{\top} (I_N - V_k^{(u)}  V_k^{(u)\top})) +  2\operatorname{Tr} (\Upsilon \Tilde{A}_k^{(u)} D_{l}(I_N - V_k^{(u)}  V_k^{(u)\top})) + \mathcal{M}'_e  \\
    &= - 2 \operatorname{Tr} ( \Upsilon V_k^{(u)} V_k^{(u)\top} \mathfrak{l} \mathfrak{l}^{\top}) +  2 \operatorname{Tr} (\Upsilon \Tilde{A}_k^{(u)} D_{l}) + \operatorname{Tr} (\Upsilon V_k^{(u)} V_k^{(u)\top} \mathfrak{l}\mathfrak{l}^{\top} V_k^{(u)}  V_k^{(u)\top}) + \mathcal{M}'_e  \\
    &= - 2\operatorname{Tr} \left( \Upsilon \left(V_k^{(u)} V_k^{(u)\top} \mathfrak{l} \mathfrak{l}^{\top} - \Tilde{A}_k^{(u)} D_{l} - \frac{1}{2} V_k^{(u)} V_k^{(u)\top} \mathfrak{l}\mathfrak{l}^{\top} V_k^{(u)}  V_k^{(u)\top}\right)\right) + \mathcal{M}'_e  \\
    &= - \operatorname{Tr} \left( \Upsilon \left(V_k^{(u)} V_k^{(u)\top} \mathfrak{l} \mathfrak{l}^{\top} - 2\Tilde{A}_k^{(u)} D_{l} +  V_k^{(u)} V_k^{(u)\top} \mathfrak{l}\mathfrak{l}^{\top} V_\varnothing  V_\varnothing^{\top}\right)\right) + O(\frac{1}{\mathcal{G}_k}).
\end{align*}
\end{proof}

\newpage
\section{Analysis on Other Contrastive Losses}
\label{sec:sup_simclr_analysis}
In this section, we discuss the extension of our graphic-theoretic analysis to one of the most common contrastive loss functions -- SimCLR~\cite{chen2020simclr}. SimCLR loss is an extended version of InfoNCE loss~\cite{van2018cpc} that achieves great empirical success and inspires a proliferation of follow-up works~\cite{khosla2020supervised,vaze22gcd,caron2020swav,he2019moco,zbontar2021barlow,bardes2021vicreg,chen2021exploring}. Specifically, SupCon~\cite{khosla2020supervised} extends SimCLR to the supervised setting.  GCD~\cite{vaze22gcd} and OpenCon~\cite{sun2023open} further leverage the SupCon and SimCLR losses, and are tailored to the open-world representation learning setting considering both labeled and unlabeled data.

At a high level, we consider a general form of 
the SimCLR and its extensions (including SupCon, GCD, OpenCon) as: 
\begin{equation}
 \mathcal{L}_{\text {gnl}}(f;\mathcal{P}_{+})=-\frac{1}{\tau}\underset{(x, x^+) \sim \mathcal{P}_+}{\mathbb{E}}\left[f(x)^{\top} f(x^+)\right] 
 \quad+\underset{x\sim \mathcal{P}}{\mathbb{E}}\left[\log \left(\underset{\substack{x'\sim \mathcal{P}\\x\neq x'}}{\mathbb{E}} e^{f(x')^{\top} f(x) / \tau}\right)\right],
\end{equation}
where we let the $\mathcal{P}_{+}$ as the distribution of \textbf{positive pairs} defined in Section~\ref{sec:graph_def}. In SimCLR~\cite{chen2020simclr}, the positive pairs are purely sampled in the \textit{unlabeled case (u)} while SupCon~\cite{khosla2020supervised} considers the \textit{labeled case (l)}. With both labeled and unlabeled data, GCD~\cite{vaze22gcd} and OpenCon~\cite{sun2023open} sample positive pairs in both cases. 

In this section, we investigate an alternative form that eases the theoretical analysis (also applied in~\cite{wang2020understanding}):
\begin{align}
 \widehat{\mathcal{L}}_{\text {gnl }}(f;\mathcal{P}_{+})= & -\frac{1}{\tau}\underset{(x, x^+) \sim \mathcal{P}_+}{\mathbb{E}}\left[f(x)^{\top} f(x^+)\right] 
 \quad+\log \left(\underset{\substack{x,x'\sim \mathcal{P}\\x\neq x'}}{\mathbb{E}} e^{f(x')^{\top} f(x) / \tau}\right) \\
 \geq & \mathcal{L}_{\text {gnl }}(f;\mathcal{P}_{+}),
\end{align}
which serves an upper bound of $\mathcal{L}_{\text {gnl }}(f)$ according to Jensen's Inequality. 

\noindent \textbf{A graph-theoretic view.}
Recall in Section~\ref{sec:graph_def}, we define the graph $G(\mathcal{X}, w)$ with 
vertex set $\mathcal{X}$ and edge weights $w$. Each entry of adjacency matrix $A$ is given by $w_{x x'}$, which denotes the marginal probability of generating the pair for any two augmented data $x, x' \in \mathcal{X}$: 
\begin{align*}
    w_{x x^{\prime}} = \eta_{u} w^{(u)}_{x x^{\prime}} + \eta_{l} w^{(l)}_{x x^{\prime}},
\end{align*}
and $w_{x}$ measures the degree of node $x$:
\begin{align*}
    w_{x} = \sum_{x^{\prime}} w_{x x^{\prime}}.
\end{align*}

One can view the difference between SimCLR and its variants in the following way: 
(1) SimCLR~\cite{chen2020simclr} corresponds to $\eta_l = 0$ when there is no labeled case; (2) SupCon~\cite{khosla2020supervised} corresponds to $\eta_u = 0$ when only labeled case is considered. (3) GCD~\cite{vaze22gcd} and OpenCon~\cite{sun2023open} correspond to the cases when $\eta_u,\eta_l$ are both non-zero due to the availability of both labeled and unlabeled data.

With the define marginal probability of sampling positive pairs $w_{xx'}$ and the marginal probability of sampling a single sample $w_x$, we have: 

\begin{align*}
    \widehat{\mathcal{L}}_{\text {gnl }}(Z; G(\mathcal{X},w))  &= -\frac{1}{\tau} \sum_{x,x'\in \mathcal{X}} w_{xx'} f(x)^{\top} f\left(x'\right) + \log \left( \sum_{\substack{x,x'\in \mathcal{X} \\ x \neq x'}} w_{x}w_{x'}e^{f(x')^{\top} f(x) / \tau} \right) 
      \\ &=  -\frac{1}{\tau} \operatorname{Tr} (Z^{\top}AZ) + \log \operatorname{Tr} \left( (D\mathbf{1}_{N}\mathbf{1}_{N}^{\top}D - D^2) \exp(\frac{1}{\tau}ZZ^T)\right). 
\end{align*}

When $\tau$ is large:

\begin{align*}
    \widehat{\mathcal{L}}_{\text {simclr }}(Z; G(\mathcal{X},w))  
     &\approx -\frac{1}{\tau} \operatorname{Tr} (Z^{\top}AZ) + \log \operatorname{Tr} \left( (D\mathbf{1}_{N}\mathbf{1}_{N}^{\top}D - D^2) (\mathbf{1}_{N}\mathbf{1}_{N}^{\top} + \frac{1}{\tau}ZZ^T) \right) \\
     &=  -\frac{1}{\tau} \operatorname{Tr} (Z^{\top}AZ) + \log (1 + \frac{\frac{1}{\tau}\operatorname{Tr}(Z^{\top}(D\mathbf{1}_{N}\mathbf{1}_{N}^{\top}D - D^2)Z)}{\operatorname{Tr}(D)^2 - \operatorname{Tr}(D^2)}) + \text{const} \\
     &\approx  -\frac{1}{\tau} \operatorname{Tr} (Z^{\top}AZ) + \frac{\frac{1}{\tau}\operatorname{Tr}(Z^{\top}(D\mathbf{1}_{N}\mathbf{1}_{N}^{\top}D - D^2)Z)}{\operatorname{Tr}(D)^2 - \operatorname{Tr}(D^2)} + \text{const} \\
     &= - \frac{1}{\tau} \operatorname{Tr} \left( Z^{\top}(A - \frac{D\mathbf{1}_{N}\mathbf{1}_{N}^{\top}D - D^2}{\operatorname{Tr}(D)^2 - \operatorname{Tr}(D^2)}) Z \right) + \text{const}.
\end{align*}

If we further consider the constraint that the $Z^{\top}Z = I$, minimizing $\widehat{\mathcal{L}}_{\text {simclr }}(Z; G(\mathcal{X},w))$ boils down to the eigenvalue problem such that $Z$ is formed by the top-$k$ eigenvectors of matrix $(A - \frac{D\mathbf{1}_{N}\mathbf{1}_{N}^{\top}D - D^2}{\operatorname{Tr}(D)^2 - \operatorname{Tr}(D^2)})$. Recall that our main analysis for Theorem~\ref{th:main} and Theorem~\ref{th:main_simp} is based on the insight that the feature space is formed by the top-$k$ eigenvectors of the normalized adjacency matrix $D^{-\frac{1}{2}}AD^{-\frac{1}{2}}$. Viewed in this light, the same analysis could be applied to the SimCLR loss as well, which only differs in the concrete matrix form. We do not include the details in this paper but leave it as a future work.

\newpage
\section{Additional Experiments Details}

\subsection{Experimental Details of Toy Example}
\label{sec:sup_exp_vis}
\textbf{Recap of set up}. In Section~\ref{sec:theory_toy} we consider a toy example that helps illustrate the core idea of our theoretical findings. Specifically, the example aims to cluster 3D objects of different colors and shapes, generated by a 3D rendering software~\cite{johnson2017clevr} with user-defined properties including colors, shape, size, position, etc.  Suppose the training samples come from three shapes, $\mathcal{X}_{\cube{1}}$, $\mathcal{X}_{\sphere{0.7}{gray}}$, $\mathcal{X}_{\cylinder{0.6}}$. Let $\mathcal{X}_{\cube{1}}$ be the sample space with \textbf{known} class, and $\mathcal{X}_{\sphere{0.7}{gray}}, \mathcal{X}_{\cylinder{0.6}}$ be the sample space with \textbf{novel} classes. Further, the two novel classes are constructed to have different relationships with the known class. 
Specifically, the toy dataset contains elements with 5 unique types:

$$\mathcal{X} = \mathcal{X}_{\cube{1}} \cup \mathcal{X}_{\sphere{0.7}{gray}} \cup \mathcal{X}_{\cylinder{0.6}},$$
where
$$\mathcal{X}_{\cube{1}} = \{x_{\textcolor{red}{\cube{1}}}, x_{\textcolor{blue}{\cube{1}}} \},$$
$$\mathcal{X}_{\sphere{0.7}{gray}} = \{x_{\textcolor{red}{\sphere{0.7}{gray}}}, x_{\textcolor{blue}{\sphere{0.7}{gray}}}\},$$
$$\mathcal{X}_{\cylinder{0.6}} = \{x_{\textcolor{gray}{\cylinder{0.6}}}\}.$$

\textbf{Experimental details for Figure~\ref{fig:toy_result}(b)}. We rendered 2500 samples for each type of data. In total, we have 12500 samples. For known class $\mathcal{X}_{\cube{1}}$, we randomly select $50\%$ as labeled data and treat the rest as unlabeled.  
For training, we use the same data augmentation strategy as in SimSiam~\cite{chen2021exploring}. We use ResNet18 and train the model for 40 epochs (sufficient for convergence) with a fixed learning rate of 0.005, using SORL defined in Eq.~\eqref{eq:def_SORL}.  We set $\eta_l=0.04$ and $\eta_u=1$, respectively. Our visualization is by PyTorch implementation of UMAP~\cite{umap}, with parameters $(\texttt{n\_neighbors=30, min\_dist=1.5, spread=2, metric=euclidean})$.

\subsection{Experimental Details for Benchmarks}
\label{sec:sup_exp_details}

\paragraph{Hardware and software. } We run all experiments with Python 3.7 and PyTorch 1.7.1, using NVIDIA GeForce RTX 2080Ti and A6000 GPUs. 

\paragraph{Training settings.}
For a fair comparison, we use ResNet-18~\cite{he2016deep} as the backbone for all methods. Similar to~\cite{cao2022openworld}, we pre-train the backbone using the unsupervised Spectral Contrastive Learning~\cite{haochen2021provable} for 1200 epochs. The configuration for the pre-training stage is consistent with ~\cite{haochen2021provable}. Note that the pre-training stage does not incorporate any label information. At the training stage,  we follow the same practice in~\cite{sun2023open,cao2022openworld}, and train our model $f(\cdot)$ by only updating the parameters of the last block of ResNet. In addition, we add a trainable two-layer MLP projection head that projects the feature from the penultimate layer to an embedding space $\mathbb{R}^{k}$ ($k = 1000$). We use the same data augmentation strategies as SimSiam~\cite{chen2021exploring,haochen2021provable}. For CIFAR-10, we set $\eta_l=0.25, \eta_u=1$ with training epoch 100, and we evaluate using features extracted from the layer preceding the projection.  For CIFAR-100, we set $\eta_l=0.0225, \eta_u=3$ with 400 training epochs and assess based on the projection layer's features. 
We use SGD with momentum 0.9 as an optimizer with cosine annealing (lr=0.05), weight decay 5e-4, and batch size 512.

\paragraph{Evaluation settings.}

At the inference stage, we evaluate the performance in a transductive manner (evaluate on $\mathcal{D}_u$). We run a semi-supervised K-means algorithm as proposed in~\cite{vaze22gcd}. 
We follow the evaluation strategy in~\cite{cao2022openworld} and report the following metrics: (1) classification accuracy on known classes, (2) clustering accuracy  on the novel data, and (3) overall accuracy on all classes. The accuracy
of the novel classes is measured by solving an optimal assignment problem using the Hungarian algorithm~\citep{kuhn1955hungarian}. When reporting accuracy on all classes, we solve optimal assignments using both known and
novel classes.

\end{document}